%% file: main.tex
\documentclass[11pt,a4paper]{article}

\usepackage[margin=1in]{geometry}

\usepackage[numbers,sort&compress]{natbib}

\usepackage[T1]{fontenc}
\usepackage[utf8]{inputenc}
\usepackage{lmodern,comment}
\usepackage{amsmath,amssymb,amsfonts,mathtools,bm,mathrsfs}
\usepackage{graphicx,booktabs,array}
\usepackage{microtype}
\usepackage{xcolor}
\usepackage{amsthm}
\newtheorem{theorem}{Theorem}
\newtheorem{lemma}{Lemma}
\newtheorem{proposition}{Proposition}

\theoremstyle{definition}
\newtheorem{assumption}{Assumption}

\theoremstyle{remark}

\usepackage{xurl}
\usepackage[hidelinks]{hyperref}
\usepackage[nameinlink,noabbrev]{cleveref}
\crefname{assumption}{Assumption}{Assumptions}
\Crefname{assumption}{Assumption}{Assumptions}
\input{macros}

\allowdisplaybreaks
\hypersetup{pdftitle={Stationary Bias and Extrapolation in Nonlinear Two-Timescale Stochastic Approximation},pdfauthor={A.Ch. Madhusudanarao and Rahul Singh}}

\begin{document}

\title{Stationary Bias and Extrapolation in Nonlinear Two-Timescale Stochastic Approximation}
\author{A.Ch. Madhusudanarao\\
\small Department of Computer Science and Automation\\
\small Indian Institute of Science, Bengaluru, India\\
\small \texttt{madhusudanar@iisc.ac.in}
\and
Rahul Singh\\
\small Laboratoire de Recherche de l'EPITA, Paris, France\\
\small \texttt{rahulsingh0188@gmail.com}}
\date{}
\maketitle

\begin{abstract}
Constant-step stochastic approximation generally has a nonzero stationary mean error that persists under time averaging. This paper studies that error for nonlinear two-timescale recursions driven by an exogenous finite-state Markov chain. Under stated smoothness assumptions and conditions on the stationary distribution, we derive a first-order bias expansion whose error bound remains uniform as the slow step size becomes much smaller than the fast step size. Fast-manifold coordinates keep the associated covariance equation regular in this limit. For fast step $\eta$ and slow step $\varepsilon$, the expansion reveals a mixed contribution $\varepsilon^2/\eta$ alongside terms linear in each step size. This dependence matters for bias reduction: along power-law step-size paths, the bias exponents need not be integers, so Richardson--Romberg extrapolation requires weights matched to the path. 
An exactly solvable nonlinear Markov example verifies the coefficients.
We verify localization for temporal-difference learning and compare
finite-run extrapolation at equal update budgets.
For finite runs, we bound the initialization error of tail averages on both timescales under an additional coupling assumption. In the special case of additive independent noise, signed third-moment cancellation yields a sharper remainder.
\end{abstract}

\section{Introduction}\label{sec:intro}
Stochastic approximation (SA) studies iterative algorithms that update their variables using noisy observations. Constant-step SA generally approaches an invariant distribution whose mean differs from the equilibrium of the averaged dynamics. This difference is the \emph{stationary bias}. Averaging the iterates $z_k$ to give $\bar z_N=N^{-1}\sum_{k=1}^N z_k$
reduces fluctuations but retains this stationary bias since
under stationarity $\mathbb E\bar z_N=\mathbb E z_1$.

In two-timescale SA, two coupled sets of variables are updated with different step sizes. Write the fast step as $\eta$, the slow step as $\varepsilon$, and their ratio as $\rho=\varepsilon/\eta$. Our goal is to determine how these two rates jointly affect the stationary bias. The connection to covariance comes from nonlinearity: a Taylor expansion of the averaged update around equilibrium contains quadratic terms whose expectations involve products of coordinate fluctuations. Consequently, the leading curvature contribution to the mean bias depends on the stationary covariance. Correlations between the current iterate and the observation can produce an additional bias contribution, which our analysis also accounts for.

To determine this covariance, we expand the stationary balance of second moments near equilibrium. At leading order, contraction under the linearized dynamics balances the fluctuations supplied by the noise. This balance takes the form of a \emph{Lyapunov equation}, a linear matrix equation for the leading covariance coefficient. For temporally correlated observations, its noise covariance includes correlations across time. Solving and controlling this equation is therefore an intermediate step in computing the nonlinear mean bias.

The difficulty is obtaining covariance estimates that remain useful as $\rho\downarrow0$. On the \emph{fast clock}, where $k$ iterations correspond to time $t=k\eta$, the slow drift is scaled by $\rho$. A scalar slow mode $\dot v=-a\rho v$, $a>0$, for example, decays as $e^{-a\rho t}$. Thus the slow \emph{spectral gap}, the smallest exponential decay rate among the slow linearized modes, tends to zero with $\rho$. Bounds for the Lyapunov equation accumulate perturbations over this decay time and can introduce a factor $1/\rho$, as illustrated by $\int_0^\infty e^{-2a\rho t}\,dt=(2a\rho)^{-1}$. However, the noise and remainder terms in the slow covariance equation also carry small powers of $\rho$. Retaining these factors prevents an artificial loss of uniformity and allows the covariance estimate to support a uniform bias expansion.

We study nonlinear recursions driven by an exogenous finite-state Markov chain: the next input depends on the input history only through its current state, and its transition law is unaffected by the algorithm. We obtain a first-order stationary bias expansion \emph{uniform} in small positive $\rho$, meaning that the remainder constant does not diverge as $\rho\downarrow0$. Coordinates relative to the \emph{fast equilibrium manifold}, the local family of equilibria of the fast averaged dynamics as the slow variable varies, regularize the covariance equation by exposing the scales of its fast and slow blocks. Writing $m_{\eta,\rho}$ for the stationary mean displacement from equilibrium, we obtain
\[
 m_{\eta,\rho}=\eta b_0+\varepsilon b_1+
 \frac{\varepsilon^2}{\eta}b_2+
 O\!\left(\eta^{3/2}+\frac{\varepsilon^3}{\eta^2}\right).
\]
Here $b_0,b_1,b_2$ are coefficient vectors determined by the update rule and noise law, independent of the step sizes.

The mixed term matters for \emph{Richardson--Romberg extrapolation}, which forms a weighted combination of estimates at different step sizes to cancel leading bias terms. Along the power-law path $\varepsilon=\eta^p$, $p>1$, its exponent is $2p-1$, alongside $1$ and $p$. We derive weights adapted to these powers and verify nonzero mixed bias in an exactly solvable nonlinear Markov model. We also bound the initialization error of \emph{tail averages}, formed after discarding an initial segment of the trajectory, under an additional coupling assumption. This assumption controls the difference between a trajectory from a prescribed initial state and a stationary trajectory constructed on the same probability space. Stationary existence and \emph{localization}, expressed through moment bounds around the equilibrium, are explicit hypotheses; local stability alone does not establish them.

We study fixed-policy value estimation using temporal-difference
learning with gradient correction (TDC) \citep{sutton2009fast}.
We verify stationary localization through uniform block contraction
and compare finite-run bias, variance, and mean-square error for
ordinary averaging and extrapolation at equal update budgets.

\paragraph{Related work.}
The ordinary differential equation (ODE) framework for SA is developed in \cite{kushner2003stochastic,borkar2008dynamical}. Decreasing-step two-timescale theory includes \cite{borkar1997twotime,mokkadem2006convergence}; finite-time results treat linear recursions \cite{dalal2018finite}, including Markov inputs \cite{kaledin2020finite} and nonlinear systems \cite{doan2021nonlinear}, with Markov central limit theory and applications to TDC and related gradient-TD methods in \cite{hu2024clt}. Invariant-distribution and weak-error analyses, which study errors in expectations of observables, appear for constant-step stochastic gradient methods in \cite{dieuleveut2020bridging,feng2020uniform}. Bias expansions and Richardson--Romberg correction for Markov SA are studied by \citet{huo2023bias}, while \citet{allmeier2024computing} compute first-order bias using Lyapunov equations.
For nonlinear SA, \citet{huo2024collusion} distinguish bias
contributions from memory, nonlinearity, and their interaction.
The closest two-timescale work, \citet{kwon2025constant},
studies stationary laws, bias, variance, averaging, and
extrapolation for linear recursions. Our work addresses
nonlinear curvature and derives a covariance expansion
uniform as the step-size ratio vanishes.
Extrapolation methods are also developed in \cite{frikha2015multistep,sheshukova2024rr,mangold2025fedrr}.

\paragraph{Notation.}
Norms are Euclidean or induced matrix norms; $\top$ denotes transpose. A matrix is Hurwitz if all its eigenvalues have negative real parts. Constants in uniform bounds are independent of $0<\eta\le\eta_0$ and $0<\rho\le\rho_0$, with sufficiently small fixed upper bounds. Unless indicated otherwise, expectations are stationary.

\section{Model and Assumptions}\label{sec:model}
\subsection{Recursion and Coordinates}\label{sec:model-recursion}
Consider
\begin{align}
 x_{k+1}&=x_k+\eta H(Y_k,x_k,\theta_k),\label{eq:fast}\\
 \theta_{k+1}&=\theta_k+\varepsilon F(Y_k,x_k,\theta_k),\label{eq:slow}
\end{align}
where $x_k\in\R^{d_x}$, $\theta_k\in\R^{d_\theta}$, $\eta>0$, and
$\varepsilon=\eta\rho$ with $0<\rho<1$. Thus $x_k$ uses the larger
step size and $\theta_k$ the smaller one. The uniform results below
concern sufficiently small positive $\eta$ and $\rho$.

The exogenous input $(Y_k)$ is a Markov chain on a finite state space
$\mathsf Y$, with transition matrix $P$, i.e., 
$\Pp(Y_{k+1}=y'\mid\mathcal F_k)=P(Y_k,y')$, where $\mathcal F_k$ is the sigma-field generated by the inputs
and iterates through time $k$.
Its invariant distribution is $\mu$.

For a function $a$ on $\mathsf Y$, write
\[
 Pa(y)=\sum_{y'}P(y,y')a(y'),\qquad
 \mu a=\sum_y\mu(y)a(y).
\]

Set $z=(x^\top,\theta^\top)^\top$ and
$G(y,z)=(H(y,x,\theta)^\top,F(y,x,\theta)^\top)^\top$.
The averaged update field is
\[
 \bar G(z)=\mu G(\cdot,z)
 =\binom{\bar H(x,\theta)}{\bar F(x,\theta)},
\]
where the average is taken over the input while holding $z$ fixed.
The joint recursion is therefore
\begin{equation}
 \begin{gathered}
 D_\rho=\diag(I_{d_x},\rho I_{d_\theta}),\\
 z_{k+1}=z_k+\eta D_\rho G(Y_k,z_k).
 \end{gathered}\label{eq:rec}
\end{equation}

Fix an equilibrium $z^\star=((x^\star)^\top,(\theta^\star)^\top)^\top$
of the averaged field, so $\bar G(z^\star)=0$.
For each $\theta$ near $\theta^\star$, let $\lambda(\theta)$ denote the
corresponding local equilibrium of the fast averaged dynamics:
\[
 \bar H(\lambda(\theta),\theta)=0,\qquad
 \lambda(\theta^\star)=x^\star.
\]
The local uniqueness and $C^2$ regularity of this branch are specified
in Assumption~\ref{as:smooth}. The reduced slow drift is
\[
 g(\theta)=\bar F(\lambda(\theta),\theta),\qquad
 g(\theta^\star)=0.
\]
It describes the slow update when the fast variable is at the
equilibrium associated with the current slow variable.

Define the Jacobian blocks, all evaluated at $z^\star$, by
\[
 \begin{aligned}
 A&=\partial_x\bar H(z^\star),&
 B&=\partial_\theta\bar H(z^\star),\\
 C&=\partial_x\bar F(z^\star),&
 D&=\partial_\theta\bar F(z^\star).
 \end{aligned}
\]
The local stability assumption (Assumption~\ref{as:hurwitz}) makes $A$
invertible. Differentiating the fast equilibrium identity $\bar H(\lambda(\theta),\theta)=0$ gives
$A D\lambda(\theta^\star)+B=0$, and the chain rule then gives
\begin{equation}
 \begin{gathered}
 J=D\bar G(z^\star)=\begin{pmatrix}A&B\\C&D\end{pmatrix},\\
 \Lambda=D\lambda(\theta^\star)=-A^{-1}B,\\
 S=Dg(\theta^\star)=D+C\Lambda=D-CA^{-1}B.
 \end{gathered}\label{eq:J}
\end{equation}
Thus $\Lambda$ describes the first-order movement of the fast
equilibrium as $\theta$ changes, while $S$ is the Jacobian of the
reduced slow drift.

To separate fast tracking from slow displacement, define locally
\begin{equation}
 \widehat u=x-\lambda(\theta),\qquad
 v=\theta-\theta^\star.\label{eq:coords}
\end{equation}
Here $\widehat u$ is the \emph{exact tracking error}: it measures how
far the fast variable is from the equilibrium associated with the
current value of $\theta$. Replacing that equilibrium by its tangent
approximation $x^\star+\Lambda v$ gives the \emph{tangent tracking
error} $u=x-x^\star-\Lambda v$. Equivalently,
\begin{equation}
 \binom{u}{v}=T(z-z^\star),\qquad
 T=\begin{pmatrix}I_{d_x}&-\Lambda\\0&I_{d_\theta}\end{pmatrix}.
 \label{eq:tangent-coordinates}
\end{equation}
The two tracking errors satisfy the exact identity
\[
 \begin{aligned}
 \widehat u&=u-r_\lambda(v),\\
 r_\lambda(v)&=\lambda(\theta^\star+v)-x^\star-\Lambda v.
 \end{aligned}
\]
Since $\lambda$ is $C^2$ near $\theta^\star$, its second derivative is
bounded on a sufficiently small closed ball around $\theta^\star$.
Taylor's theorem therefore gives, for some $r>0$ and $M<\infty$,
\[
 \|r_\lambda(v)\|\le \frac{M}{2}\|v\|^2
 \qquad (\|v\|\le r),
\]
where $M$ bounds $\|D^2\lambda\|$ on that ball, so that 
the remainder is locally quadratic.

The tangent coordinates also simplify the coupling in the equilibrium
linearization. Using $A\Lambda+B=0$, the linearized averaged recursion
becomes
\[
 \begin{aligned}
 u_{k+1}-u_k
 &=\eta\bigl[(A-\rho\Lambda C)u_k-\rho\Lambda S v_k\bigr],\\
 v_{k+1}-v_k
 &=\eta\rho(Cu_k+Sv_k).
 \end{aligned}
\]
The identity $A\Lambda+B=0$ cancels the order-one effect of slow
displacement on the fast linearized drift. After factoring out the
common step size $\eta$, the remaining cross coefficients are
$-\rho\Lambda S$ and $\rho C$, both of order $\rho$.

\subsection{Standing Assumptions}\label{sec:assumptions}
Write $e_k=z_k-z^\star$ for the error process. For a stationary draw
$(Y,z)\sim\pi_{\eta,\rho}$ from a joint invariant law, define
\begin{equation}
 e=z-z^\star,\quad m_{\eta,\rho}=\E e,\quad
 \Gamma_{\eta,\rho}=\Cov(e).\label{eq:targets}
\end{equation}
Thus $e$ is a stationary random vector, while $m_{\eta,\rho}$ and
$\Gamma_{\eta,\rho}$ are its deterministic mean and covariance.
All moments below are taken under $\pi_{\eta,\rho}$.

\begin{assumption}[Local smoothness]
\label{as:smooth}
The averaged field $\bar G$ is $C^3$ near $z^\star$.
Near $\theta^\star$, the fast equilibrium $\lambda(\theta)$
is locally unique and $C^2$, with
$\lambda(\theta^\star)=x^\star$.
\end{assumption}

\begin{assumption}[Local stability]\label{as:hurwitz}
The matrices $A$ and $S$ in \eqref{eq:J} are Hurwitz.
\end{assumption}

\begin{assumption}[Mixing Markov input]\label{as:markov}
For some $n_0\ge1$, $\beta>0$, and probability measure $\nu$ on
$\mathsf Y$,
$P^{n_0}(y,E)\ge\beta\nu(E)$ for every $y\in\mathsf Y$ and
every subset $E\subseteq\mathsf Y$.
\end{assumption}

\begin{assumption}[Stationary fourth-moment localization]
\label{as:localization}
There exist $\eta_0,\rho_0,C>0$ such that, for every
$0<\eta\le\eta_0$ and $0<\rho\le\rho_0$, the joint process
$(Y_k,z_k)$ admits an invariant law $\pi_{\eta,\rho}$ satisfying
\begin{equation}
 \mathbb E_{\pi_{\eta,\rho}}\|z-z^\star\|^4\le C\eta^2.
 \label{eq:cov-raw-fourth}
\end{equation}
The constant $C$ is independent of $\eta$ and $\rho$.
\end{assumption}
All stationary results refer to this selected family of invariant laws.
Stationary existence and this moment bound do not follow from local
Hurwitz stability alone. Proposition~\ref{prop:contractive-localization}
gives sufficient conditions, and
Appendix~\ref{app:example-verification} verifies localization for the
nonlinear Markov example.

\begin{assumption}[Update regularity]\label{as:taylor}
Each $G(y,\cdot)$ is globally $C^2$, with
\begin{equation}
 \sup_{y,z}\bigl(\|D_zG(y,z)\|+\|D_z^2G(y,z)\|\bigr)\le K.
 \label{eq:cov-regularity}
\end{equation}
\end{assumption}
These conditions allow updates growing linearly in $z$.
Together with local $C^3$ smoothness and
\eqref{eq:cov-raw-fourth}, they imply the expected Taylor estimates
in Lemma~\ref{lem:derived-taylor-estimates}.
The stationary covariance and mean-bias expansions require no
separate slow-coordinate moment or cross-covariance assumption.
The stronger block and coordinate conditions in
Assumption~\ref{as:block-localization} are used only when invoking
the supplementary blockwise fluctuation statements.
The stationary iterates need not have bounded support.

\section{Geometry, Noise, and Covariance}\label{sec:geometry}

\subsection{The Two Relaxation Clocks}\label{sec:clocks}
On the fast timescale, where one iteration corresponds to time $\eta$,
the averaged ODE is $\dot z=D_\rho\bar G(z)$. Its linearization at
$z^\star$, expressed in the tangent coordinates $w=T(z-z^\star)$,
is $\dot w=L_\rho w$, where
\begin{equation}
 L_\rho=TD_\rho JT^{-1}
 =\begin{pmatrix}A-\rho\Lambda C&-\rho\Lambda S\\
 \rho C&\rho S\end{pmatrix}.\label{eq:L}
\end{equation}
Thus $e^{tL_\rho}$ propagates an initial linearized perturbation over
time $t$; its subscripts below indicate the output and input coordinate
groups. The following bounds separate the fast and slow decay rates
and control their interaction, providing the linear estimates needed
for uniform covariance analysis.

\begin{proposition}[Two-clock semigroup bounds]\label{prop:semigroup}
Under Assumptions~\ref{as:smooth}--\ref{as:hurwitz}, there exist
$C_*,c,\eta_0,\rho_0>0$ such that for $0<\rho\le\rho_0$ and $t\ge0$,
\begin{equation}
 \begin{aligned}
 \|[e^{tL_\rho}]_{uu}\|&\le C_*(e^{-ct}+\rho e^{-c\rho t}),\\
 \|[e^{tL_\rho}]_{uv}\|+\|[e^{tL_\rho}]_{vu}\|
 &\le C_*\rho(e^{-ct}+e^{-c\rho t}),\\
 \|[e^{tL_\rho}]_{vv}\|&\le C_*e^{-c\rho t}.
 \end{aligned}\label{eq:semigroup}
\end{equation}
For $0<\eta\le\eta_0$ and integers $k\ge0$, the same inequalities hold
with $e^{tL_\rho}$ replaced by $(I+\eta L_\rho)^k$, and with
$e^{-ct}$ and $e^{-c\rho t}$ replaced by $e^{-c\eta k}$ and
$e^{-c\varepsilon k}$, respectively, where $\varepsilon=\eta\rho$.
\end{proposition}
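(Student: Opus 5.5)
The plan is to block-diagonalize $L_\rho$ exactly by a near-identity similarity, bound the two resulting diagonal semigroups separately, and read off the off-diagonal blocks from the $O(\rho)$ conjugating matrices.

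\textbf{Step 1: exact decoupling.} Since $A$ is Hurwitz, hence invertible, I look for $P=O(\rho)$ of size $d_x\times d_\theta$ and $Q=O(\rho)$ of size $d_\theta\times d_x$ with
\[
 \begin{pmatrix}I&P\\0&I\end{pmatrix}^{-1}\begin{pmatrix}I&0\\Q&I\end{pmatrix}^{-1}L_\rho\begin{pmatrix}I&0\\Q&I\end{pmatrix}\begin{pmatrix}I&P\\0&I\end{pmatrix}=\begin{pmatrix}A_\rho&0\\0&\rho S_\rho\end{pmatrix}.
\]
Write $L_\rho=\begin{pmatrix}L_{11}&L_{12}\\L_{21}&L_{22}\end{pmatrix}$ with $L_{11}=A-\rho\Lambda C$, $L_{12}=-\rho\Lambda S$, $L_{21}=\rho C$, $L_{22}=\rho S$.
\begin{itemize}
\item The lower block $Q$ must solve the Riccati equation $L_{21}+L_{22}Q-QL_{11}-QL_{12}Q=0$. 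Writing it as $Q=(\rho C+\rho SQ-QL_{12}Q)A^{-1}+Q(\rho\Lambda C)A^{-1}$, the right-hand side is a contraction on a ball of radius $K\rho$ for $\rho\le\rho_0$. This gives $\|Q\|\le K\rho$.
\item After this step the top-left block is $A_\rho=L_{11}+L_{12}Q=A+O(\rho)$, and the bottom-right block is $L_{22}-QL_{12}=\rho(S+O(\rho))=:\rho S_\rho$.
\item The upper block $P$ solves the Sylvester equation $A_\rho P-P\rho S_\rho=-L_{12}=\rho\Lambda S$. Rewriting it as $P=A_\rho^{-1}(\rho\Lambda S+\rho PS_\rho)$ gives, again by contraction, $\|P\|\le K\rho$.
\end{itemize}
Call the composite similarity $M$. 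Then $M=I+O(\rho)$, and more precisely its off-diagonal blocks are $O(\rho)$ and its diagonal blocks are $I+O(\rho^2)$; the same holds for $M^{-1}$.

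\textbf{Step 2: diagonal bounds.} Since $A$ and $S$ are Hurwitz and the Hurwitz property is open, $A_\rho$ and $S_\rho$ stay uniformly Hurwitz for small $\rho$. Concretely, take $X_A\succ0$ solving $A^\top X_A+X_AA=-I$ and $X_S\succ0$ solving the analogous equation for $S$; these remain Lyapunov functions for the perturbed matrices. This yields $\|e^{tA_\rho}\|\le C e^{-ct}$ and $\|e^{t\rho S_\rho}\|\le Ce^{-c\rho t}$ with constants uniform in $\rho$.

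\textbf{Step 3: block products.} Write $e^{tL_\rho}=M\,\diag(e^{tA_\rho},e^{t\rho S_\rho})M^{-1}$ and expand blockwise.
\begin{itemize}
\item The $uu$ block is $e^{tA_\rho}$ plus terms containing $e^{t\rho S_\rho}$ multiplied by two off-diagonal $O(\rho)$ factors, which are $O(\rho^2)\le O(\rho)$. This gives $C_*(e^{-ct}+\rho e^{-c\rho t})$.
\item The $uv$ and $vu$ blocks each involve one off-diagonal factor times one of the two semigroups. This gives $C\rho(e^{-ct}+e^{-c\rho t})$.
\item The $vv$ block is bounded by $Ce^{-c\rho t}+C\rho^2e^{-ct}$. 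Since $e^{-ct}\le e^{-c\rho t}$, this is at most $C_*e^{-c\rho t}$.
\end{itemize}

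\textbf{Step 4: discrete version.} Because $I+\eta L_\rho$ is a polynomial in $L_\rho$, the same $M$ gives
\[
(I+\eta L_\rho)^k=M\,\diag\bigl((I+\eta A_\rho)^k,(I+\varepsilon S_\rho)^k\bigr)M^{-1}.
\]
It then suffices to prove the discrete analogue of Step 2. In the $X_A$-norm,
\[
\|I+\eta A_\rho\|_{X_A}^2\le1-2c\eta+O(\eta^2)\le e^{-c\eta}
\]
for $\eta\le\eta_0$, and similarly $\|I+\varepsilon S_\rho\|_{X_S}\le e^{-c\varepsilon/2}$ uniformly, since $\varepsilon\le\eta$. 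Converting back to Euclidean norms costs only a constant, and Step 3 then applies verbatim.

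\textbf{Main obstacle.} The delicate point is Step 1. One must check that the Riccati and Sylvester fixed-point maps contract with constants uniform as $\rho\downarrow0$, which needs care because the slow block is itself of size $\rho$. Inverting only $A_\rho$, and never the $O(\rho)$ slow block, is what keeps the bounds uniform.
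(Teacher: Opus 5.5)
Your proposal is correct and follows essentially the same route as the paper: an exact near-identity block-diagonalization of $L_\rho$ with $O(\rho)$ off-diagonal conjugating blocks. It then uses fixed Lyapunov functions for the perturbed Hurwitz blocks $A+O(\rho)$ and $S+O(\rho)$, both for $e^{tL_\rho}$ and for $(I+\eta L_\rho)^k$, and finishes by blockwise multiplication. The only difference is cosmetic: you build the similarity as a product of two block-triangular factors from Riccati and Sylvester contraction arguments, which gives $M^{-1}$ explicitly, whereas the paper uses a single matrix $\begin{pmatrix}I&X_\rho\\R_\rho&I\end{pmatrix}$ from the implicit-function theorem and a Neumann series for its inverse.
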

Here $(I+\eta L_\rho)^k$ propagates the linearized averaged recursion
$w_{k+1}=(I+\eta L_\rho)w_k$ through $k$ iterations, corresponding to
fast time $t=\eta k$ and slow time $\rho t=\varepsilon k$.
The proof of both bounds is given in
Appendix~\ref{sec:sup:report-ode-geometry}.

\subsection{Poisson Representation}\label{sec:poisson}
We identify the effective covariance of the Markov noise,
accounting for correlations across time. This supplies the noise
term in the Lyapunov equation~\eqref{eq:lyap}, whose solution
determines the leading-order stationary covariance. The latter
is used below to quantify how nonlinear drift converts fluctuations
into stationary bias. To obtain the noise covariance, we use
the Poisson equation introduced next.

For fixed $z$, let $\widetilde G(y,z)=G(y,z)-\bar G(z)$.
We consider the \emph{Poisson equation} for the Markov
transition operator $P$, with a zero-mean normalization:
\begin{equation}
 \mathcal U-P\mathcal U=\widetilde G,\qquad
 \mu\mathcal U=0.
 \label{eq:poisson}
\end{equation}
Under Assumption~\ref{as:markov}, geometric mixing of the
Markov input $Y_k$ gives the solution
$\mathcal U(\cdot,z)=\sum_{j\ge0}P^j\widetilde G(\cdot,z)$
\citep[Section~5]{glynnInfanger2024}.

Set $\xi_k=G(Y_k,z^\star)$ and
$\zeta_{k+1}(z)=\mathcal U(Y_{k+1},z)-(P\mathcal U)(Y_k,z)$.
\begin{lemma}[Long-run covariance]\label{lem:Q}
Under Assumption~\ref{as:markov}, $\zeta_{k+1}(z)$ is a martingale
difference for fixed $z$, meaning
$\E[\zeta_{k+1}(z)\mid\mathcal F_k]=0$.
For the stationary input chain $Y_0\sim\mu$, write
$K_h=\E[\xi_0\xi_h^\top]$. Then
\begin{equation}
 \begin{aligned}
 Q&=\lim_{m\to\infty}m^{-1}\E\left[
 \Bigl(\sum_{k=0}^{m-1}\xi_k\Bigr)
 \Bigl(\sum_{k=0}^{m-1}\xi_k\Bigr)^\top\right]\\
 &=K_0+\sum_{h\ge1}(K_h+K_h^\top)
 =\E[\zeta_1(z^\star)\zeta_1(z^\star)^\top].
 \end{aligned}\label{eq:Q}
\end{equation}
\end{lemma}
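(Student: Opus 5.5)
The martingale-difference property follows directly from the Markov property. Since $\mathcal F_k$ contains $Y_k$, and $z$ is fixed, we have
\[
\E[\mathcal U(Y_{k+1},z)\mid\mathcal F_k]=\sum_{y'}P(Y_k,y')\,\mathcal U(y',z)=(P\mathcal U)(Y_k,z),
\]
so $\E[\zeta_{k+1}(z)\mid\mathcal F_k]=0$. Integrability is automatic: $\mathsf Y$ is finite, and the series $\sum_j P^j\widetilde G$ converges geometrically under Assumption~\ref{as:markov}, so $\mathcal U(\cdot,z)$ is bounded. From now on set $z=z^\star$. Because $\bar G(z^\star)=0$, we get $\widetilde G(\cdot,z^\star)=G(\cdot,z^\star)$, and hence $\mu\xi=0$ and $\xi_k=\widetilde G(Y_k,z^\star)$.

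For the middle expression in \eqref{eq:Q}, we use stationarity of $Y$. It gives $\E[\xi_j\xi_k^\top]=K_{k-j}$ for $k\ge j$ and $K_{j-k}^\top$ otherwise, so
\[
m^{-1}\E\Bigl[\Bigl(\textstyle\sum\xi_k\Bigr)\Bigl(\textstyle\sum\xi_k\Bigr)^\top\Bigr]=K_0+\sum_{h=1}^{m-1}\Bigl(1-\tfrac hm\Bigr)(K_h+K_h^\top).
\]
We have $K_h=\E[\xi_0(P^h\xi)(Y_0)^\top]$ and $\mu\xi=0$. Assumption~\ref{as:markov} gives uniform geometric ergodicity, so $\|P^h\xi\|_\infty\le C r^h$ for some $r<1$. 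The series $\sum_h\|K_h\|$ therefore converges, and dominated convergence (Cesàro) gives the limit $K_0+\sum_{h\ge1}(K_h+K_h^\top)$.

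For the last equality, write $\mathcal U=\mathcal U(\cdot,z^\star)$, $\zeta_k=\zeta_k(z^\star)$. The Poisson equation gives $\xi_k=\mathcal U(Y_k)-P\mathcal U(Y_k)=\zeta_{k+1}+\mathcal U(Y_k)-\mathcal U(Y_{k+1})$, and summing telescopes:
\[
\sum_{k=0}^{m-1}\xi_k=M_m+\mathcal U(Y_0)-\mathcal U(Y_m),\qquad M_m=\sum_{k=1}^{m}\zeta_k.
\]
The boundary term is bounded uniformly in $m$, and the martingale increments are orthogonal, so $\E[M_mM_m^\top]=\sum_{k=1}^m\E[\zeta_k\zeta_k^\top]=m\,\E[\zeta_1\zeta_1^\top]$ by stationarity. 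Cauchy--Schwarz bounds the cross terms by $O(\sqrt m)$. Dividing by $m$ and letting $m\to\infty$ gives $Q=\E[\zeta_1\zeta_1^\top]$.

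The main technical point is the geometric decay of $P^h\xi$ for mean-zero $\xi$, which underlies both the existence of $\mathcal U$ and the summability of $K_h$. It follows from the Doeblin minorization in Assumption~\ref{as:markov} by the standard coupling argument, with rate $(1-\beta)^{\lfloor h/n_0\rfloor}$ in total variation. This is the same fact the paper cites from \citep{glynnInfanger2024}. Everything else is algebra.
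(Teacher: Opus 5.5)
Your proof is correct and follows the paper's argument step for step. The martingale-difference property comes from the exogenous transition law $\Pp(Y_{k+1}=y'\mid\mathcal F_k)=P(Y_k,y')$. The autocovariance series follows from the finite-sum identity $m^{-1}\E[S_mS_m^\top]=K_0+\sum_{h<m}(1-h/m)(K_h+K_h^\top)$, with dominated convergence via $\|P^h\xi\|_\infty\le Cr^h$. The identification with $\E[\zeta_1(z^\star)\zeta_1(z^\star)^\top]$ comes from the Poisson telescoping, with orthogonal martingale increments, a uniformly bounded boundary term, and Cauchy--Schwarz on the cross terms.
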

The sum in \eqref{eq:Q} accumulates the equilibrium noise over
$m$ steps. To compute its covariance per step, we use the
Poisson equation to write
\[
\begin{aligned}
 \sum_{k=0}^{m-1}\xi_k
 &=\sum_{k=0}^{m-1}\zeta_{k+1}(z^\star)+\mathcal U(Y_0,z^\star)-\mathcal U(Y_m,z^\star).
\end{aligned}
\]
The first term is a martingale sum. The remaining difference
depends only on the initial and final noise states and is bounded
uniformly in $m$, since $\mathsf Y$ is finite.
Its covariance and cross terms vanish after division by $m$,
yielding the last equality in \eqref{eq:Q}; see
Appendix~\ref{subsec:sup:poisson}.

\subsection{Covariance of Stationary Iterates}\label{sec:cov-coeff}
We now approximate $\Gamma_{\eta,\rho}=\Cov(z-z^\star)$ under
the stationary law $\pi_{\eta,\rho}$. This covariance measures
fluctuations around the stationary mean and enters the bias
expansion in Section~\ref{sec:bias}.

Write $J_\rho=D_\rho J$, which is Hurwitz for sufficiently small
$\rho$ by Proposition~\ref{prop:semigroup}. Define $\Sigma_\rho$
as the unique solution of the Lyapunov equation
\begin{equation}
 J_\rho\Sigma_\rho+\Sigma_\rho J_\rho^\top+D_\rho QD_\rho=0.
 \label{eq:lyap}
\end{equation}
Appendix Lemma~\ref{lem:sup:linear-covariance-limit} derives this
equation from the linearized recursion.
The next proposition shows that $\eta\Sigma_\rho$ approximates
$\Gamma_{\eta,\rho}$, the covariance of the joint iterates
$z_k=(x_k^\top,\theta_k^\top)^\top$ generated by
\eqref{eq:rec} under the stationary law $\pi_{\eta,\rho}$.

\begin{proposition}[Uniform covariance approximation]
\label{prop:covariance}
Under Assumptions~\ref{as:smooth}--\ref{as:taylor}, for all
sufficiently small positive $\eta,\rho$,
\begin{equation}
 \Gamma_{\eta,\rho}=\eta\Sigma_\rho+R^\Gamma_{\eta,\rho},
 \qquad
 \|R^\Gamma_{\eta,\rho}\|\le C\eta^{3/2}.
 \label{eq:cov-exp}
\end{equation}
The constant $C$ is independent of $\eta$ and $\rho$, so the
bound remains valid as $\rho\downarrow0$.
\end{proposition}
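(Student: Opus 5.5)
We will work entirely in the tangent coordinates $w=T(z-z^\star)=(u,v)$ of \eqref{eq:tangent-coordinates}. There the linearized drift is $L_\rho$ from \eqref{eq:L}, and $\Sigma_\rho$ becomes $\widetilde\Sigma_\rho=T\Sigma_\rho T^\top$. This matrix solves
\[
L_\rho\widetilde\Sigma_\rho+\widetilde\Sigma_\rho L_\rho^\top+TD_\rho QD_\rho T^\top=0.
\]
Since $T$ is fixed and invertible, it suffices to prove $\Cov(w)=\eta\widetilde\Sigma_\rho+O(\eta^{3/2})$ with a $\rho$-uniform constant.

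\textbf{Step 1: exact stationary second-moment identity.} Decompose the recursion as $w_{k+1}=w_k+\eta L_\rho w_k+\eta TD_\rho(\xi_k-\bar G(z^\star))+\eta TD_\rho\,\mathcal R(Y_k,z_k)$. The remainder $\mathcal R$ collects the state-dependent noise $\widetilde G(y,z)-\widetilde G(y,z^\star)$, which is linear in $e$, and the Taylor remainder of $\bar G$, which is quadratic in $e$ by Lemma~\ref{lem:derived-taylor-estimates}. Because the noise is correlated in time, we use the Poisson solution \eqref{eq:poisson}. Write $\widetilde G(Y_k,z_k)=\zeta_{k+1}(z_k)+\mathcal U(Y_k,z_k)-\mathcal U(Y_{k+1},z_k)$. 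The martingale part $\zeta$ is then separated from a telescoping part. Under stationarity we equate $\E[w_{k+1}w_{k+1}^\top]=\E[w_kw_k^\top]$ and divide by $\eta$. This gives
\[
L_\rho\Gamma^w+\Gamma^wL_\rho^\top+\eta\,TD_\rho QD_\rho T^\top=\mathcal E_{\eta,\rho}.
\]
The telescoping terms are absorbed by the standard shifted-variable trick: replace $w_k$ with $w_k+\eta TD_\rho\,\mathcal U(Y_k,z_k)$. The cross term $\E[w_k\mathcal U^\top]$ then reproduces the off-diagonal correlations $K_h$ in $Q$, via Lemma~\ref{lem:Q}. Using \eqref{eq:cov-raw-fourth}, which gives $\E\|e\|^3\lesssim\eta^{3/2}$ and $\E\|e\|^2\lesssim\eta$, we obtain raw bounds on the error term $\mathcal E$. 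The $\eta^2$-order terms (such as $\eta^2\E[L w w^\top L^\top]$) contribute $O(\eta^2)$ after division by $\eta$. The noise-state cross terms contribute $O(\eta\,\E\|e\|)=O(\eta^{3/2})$. The quadratic Taylor terms contribute $O(\E\|e\|^3)=O(\eta^{3/2})$. We also need to pass from the second moment to the covariance; since $\|m\|^2\le\E\|e\|^2$, that step is crude but harmless once $m=O(\eta)$ is shown, and a preliminary $O(\eta^{3/2})$ bound on $mm^\top$ suffices.

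\textbf{Step 2: inversion with the two-clock semigroup.} The difference $\Delta=\Gamma^w-\eta\widetilde\Sigma_\rho$ solves $L_\rho\Delta+\Delta L_\rho^\top=\mathcal E$. Hence
\[
\Delta=-\int_0^\infty e^{tL_\rho}\mathcal E\, e^{tL_\rho^\top}\,dt.
\]
This integral representation is valid because $L_\rho$ is Hurwitz by Proposition~\ref{prop:semigroup}. Alternatively, one can use the discrete sum with $(I+\eta L_\rho)^k$ directly, which avoids an additional $O(\eta)$ discretization step.

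\textbf{Main obstacle: uniformity in $\rho$.} The $vv$ block of the semigroup decays only like $e^{-c\rho t}$. A naive bound therefore yields $\|\Delta\|\lesssim\rho^{-1}\|\mathcal E\|$. To avoid this, we bound $\mathcal E$ blockwise and exploit the factors of $\rho$ that $D_\rho$ places in the slow rows. Every slow-row contribution of $\mathcal E$ carries a factor $\rho$: the $vv$ block carries $\rho$ from the slow drift, and the $uv$ block carries $\rho$ from both the slow drift and the coupling $-\rho\Lambda S$. We then insert the block estimates \eqref{eq:semigroup}. For example, $\int e^{tL}_{vv}\,\mathcal E_{vv}\,e^{tL^\top}_{vv}\,dt\lesssim\rho\eta^{3/2}\int e^{-2c\rho t}\,dt=O(\eta^{3/2})$. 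The terms $\int e^{tL}_{vu}\,\mathcal E_{uu}\,e^{tL^\top}_{uv}\,dt$ pick up $\rho^2\cdot\rho^{-1}$, which is also fine. Mixed terms such as $[e^{tL}]_{vv}\mathcal E_{vu}[e^{tL}]_{uu}$ integrate to $O(\rho\eta^{3/2}(1+\rho\cdot\rho^{-1}))$, also fine.

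The genuinely delicate point is the $uu$ block of $\mathcal E$, which a priori has no factor of $\rho$. It is fed into the $vv$ block of $\Delta$ only through two off-diagonal semigroup factors of size $\rho$ each, so it is harmless. To close the argument, I expect to need to check that the Taylor remainders in the $v$ rows, such as $\rho\,\E[v\otimes\text{quadratic}]$, really are $O(\rho\eta^{3/2})$. This should follow from \eqref{eq:cov-raw-fourth} and Assumption~\ref{as:taylor}, since each slow update is multiplied by $\varepsilon=\eta\rho$. This block bookkeeping is where I expect most of the work. Transforming back with $T^{-1}$ completes \eqref{eq:cov-exp}.
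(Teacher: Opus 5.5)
Your proposal is correct, and its skeleton matches the paper's. Both use the Poisson-corrected tangent variable $\widehat w_k=w_k+\eta TD_\rho\,\mathcal U(Y_k,z_k)$ and an exact stationary second-moment balance with leading forcing $\eta\,TD_\rho QD_\rho T^\top$. Both bound the remainder blockwise, with the $vv$ block $O(\rho\eta^{3/2})$ because every slow row of $TD_\rho$ and $L_\rho$ carries a factor $\rho$.

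The genuine difference is the inversion step. The paper uses an algebraic inverse (Lemma~\ref{lem:cov-lyapunov-inverse}). It divides the $vv$ block of the Lyapunov equation by $\rho$; at $\rho=0$ the map $(U,R,V)\mapsto(AU+UA^\top,\,AR,\,CR+R^\top C^\top+SV+VS^\top)$ is block triangular and invertible. Continuity then gives $\|X\|\le C_*(\|F_{uu}\|+\|F_{uv}\|+\rho^{-1}\|F_{vv}\|)$ without any time-domain estimates.

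You instead write $\Delta=-\int_0^\infty e^{tL_\rho}\mathcal E\,e^{tL_\rho^\top}\,dt$ and bound the block products with Proposition~\ref{prop:semigroup}. If $\|\mathcal E_{uu}\|+\|\mathcal E_{uv}\|\le a$ and $\|\mathcal E_{vv}\|\le\rho a$, every block of $\Delta$ is indeed $O(a)$, so this closes. Your route reuses an estimate already proved, makes the $\rho\cdot\rho^{-1}$ compensation explicit, and carries over to the exact discrete balance via $(I+\eta L_\rho)^k$. The paper's route is shorter, purely finite-dimensional, and uses the same rescaled operator that later drives the singular expansion.

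A few points in your bookkeeping need correcting.

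\textbf{The $uv$ factor of $\rho$.} Your claim that the $uv$ block of $\mathcal E$ carries a factor $\rho$ is not justified under Assumption~\ref{as:localization}. Consider $\E[(TD_\rho R^{(2)}_{\bar G,k})^u\,(w_k^v)^\top]$: the $u$ row of $TD_\rho$ is $(I,\,-\rho\Lambda)$, of order one, while $\|w_k^v\|_2$ is only known to be $O(\sqrt\eta)$. Your integral bounds do not need this factor, so simply drop the claim.

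\textbf{The mean term.} You balance raw second moments, whereas the paper centers with $q_k=\widehat w_k-\E\widehat w_k$. Centering works because the stationary mean increment vanishes, and it then needs no mean bound. On your route you must show $\|m_{\eta,\rho}\|^2=O(\eta^{3/2})$. The inequality $\|m\|^2\le\E\|e\|^2$ only gives $O(\eta)$, so invoke the independent a priori bound of Lemma~\ref{lem:mean-O-eta}. You also need to record the $O(\eta^{3/2})$ transfer from $\widehat w$ back to $w$.

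\textbf{The noise term.} Once the corrector is applied to $\widetilde G(Y_k,z_k)$ at the current state, the noise is $TD_\rho\zeta_{k+1}(z_k)$. Its cross term with $\widehat w_k$ vanishes, and $Q$ enters directly through Lemma~\ref{lem:Q} with an $O(\eta^{3/2})$ freezing error. Recovering $\sum_h K_h$ from $\E[w_k\mathcal U^\top]$ is the bookkeeping for the unshifted balance, so use one approach or the other, not both. Also do not keep $\widetilde G(Y_k,z_k)-\widetilde G(Y_k,z^\star)$ as a remainder, since its cross term with $w_k$ is a priori $O(\eta)$, not $O(\eta^{3/2})$.
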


\begin{proof}[Proof idea]
The complete proof of Proposition~\ref{prop:covariance} is given
in Appendix~\ref{app:covariance-proof}.
Apply the Poisson correction~\eqref{eq:sup:report-corrected-error}
and then the tangent transformation~\eqref{eq:tangent-coordinates};
the leading noise then has zero conditional mean.
Stationarity then gives
a covariance balance, with remainder terms controlled by the
moment and regularity assumptions. Crucially, the remainder in
the slow covariance equation carries a factor $\rho$, offsetting
its $O(1/\rho)$ relaxation time. Solving this balance and removing
the correction gives the stated bound; see
Appendix~\ref{app:covariance-proof}.
\end{proof}

\section{Stationary Mean Bias}\label{sec:bias}
We derive the stationary mean bias from the balance
$\E G(Y_k,z_k)=0$.
Theorem~\ref{thm:fixed-ratio} combines the covariance approximation
of Proposition~\ref{prop:covariance}, a second-order Taylor expansion
of $\bar G(z)$ about $z^\star$, and a Poisson-corrector identity to obtain
$m_{\eta,\rho}=\eta b(\rho)+O(\eta^{3/2})$.
The coefficient separates curvature and direct Markov contributions.
Lemma~\ref{lem:mean-O-eta} supplies the auxiliary bound
$\|m_{\eta,\rho}\|=O(\eta)$ used in the theorem's proof.
Both bounds are uniform as $\rho\downarrow0$.

Define the curvature map and direct Markov response by
\[
 \mathcal C(M)_i=\tfrac12\Tr(D^2\bar G_i(z^\star)M),
\]
\begin{equation}
 r_\rho=\sum_{y,y'}\mu(y)P(y,y')
 D_z\mathcal U(y',z^\star)D_\rho G(y,z^\star).
 \label{eq:report-markov-response}
\end{equation}
\begin{theorem}[Fixed-ratio bias]\label{thm:fixed-ratio}
If Assumptions~\ref{as:smooth}--\ref{as:taylor} hold,
\begin{equation}
 \begin{gathered}
 m_{\eta,\rho}=\eta b(\rho)+O(\eta^{3/2}),\\
 b(\rho)=-J^{-1}\{\mathcal C(\Sigma_\rho)+r_\rho\},
 \end{gathered}\label{eq:report-fixed-ratio-bias}
\end{equation}
with a remainder uniform for $0<\rho\le\rho_0$.
\end{theorem}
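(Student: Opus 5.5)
The plan is to start from the stationary balance and push it through a Poisson correction. Under $\pi_{\eta,\rho}$ the law of $z_{k+1}$ equals that of $z_k$. Taking expectations in \eqref{eq:rec} therefore gives $\E\, D_\rho G(Y_k,z_k)=0$, and since $D_\rho$ is invertible, $\E G(Y_k,z_k)=0$. We split $G=\bar G+\widetilde G$ and treat the two pieces separately.

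For the averaged part we Taylor expand $\bar G$ to second order about $z^\star$, using Assumption~\ref{as:smooth} near $z^\star$ and the global bounds of Assumption~\ref{as:taylor} away from it. This gives
\[
\E\bar G(z)=J m_{\eta,\rho}+\mathcal C\bigl(\E[ee^\top]\bigr)+O\bigl(\E\|e\|^3\bigr).
\]
The cubic remainder is $O(\eta^{3/2})$ by \eqref{eq:cov-raw-fourth} and H\"older; this is the content of Lemma~\ref{lem:derived-taylor-estimates}. Next we write $\E[ee^\top]=\Gamma_{\eta,\rho}+m m^\top$. Lemma~\ref{lem:mean-O-eta} gives $\|m\|=O(\eta)$, so $mm^\top=O(\eta^2)$. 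Proposition~\ref{prop:covariance} gives $\Gamma_{\eta,\rho}=\eta\Sigma_\rho+O(\eta^{3/2})$ uniformly in $\rho$, hence $\mathcal C(\E ee^\top)=\eta\,\mathcal C(\Sigma_\rho)+O(\eta^{3/2})$.

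For the fluctuation part $\E\widetilde G(Y_k,z_k)$, which does not vanish because $z_k$ depends on past inputs, we use the Poisson equation \eqref{eq:poisson}. This gives
\[
\widetilde G(Y_k,z_k)=\mathcal U(Y_k,z_k)-P\mathcal U(Y_k,z_k).
\]
Stationarity lets us replace $\E\,\mathcal U(Y_k,z_k)$ by $\E\,\mathcal U(Y_{k+1},z_{k+1})$. We also have $\E[\mathcal U(Y_{k+1},z_k)\mid\mathcal F_k]=P\mathcal U(Y_k,z_k)$. Combining the two,
\[
\E\widetilde G(Y_k,z_k)=\E\bigl[\mathcal U(Y_{k+1},z_{k+1})-\mathcal U(Y_{k+1},z_k)\bigr].
\]
Since $\mathcal U(y,\cdot)=\sum_j P^j\widetilde G(\cdot,\cdot)$ inherits the $C^2$ bounds of Assumption~\ref{as:taylor} by geometric mixing (Assumption~\ref{as:markov}, finite $\mathsf Y$), a first-order expansion in $z$ gives
\[
\E\,D_z\mathcal U(Y_{k+1},z_k)\,\eta D_\rho G(Y_k,z_k)+O\bigl(\eta^2\E\|G\|^2\bigr).
\]
The remainder is $O(\eta^2)$ because $G$ grows at most linearly and $\E\|e\|^2=O(\eta)$. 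In the main term we replace $z_k$ by $z^\star$ in both factors, at a cost of $\eta\, O(\E\|e\|)=O(\eta^{3/2})$ by Lipschitz bounds. Conditioning on $Y_k$ and using $Y_k\sim\mu$ together with the Markov transition then yields exactly $\eta r_\rho$ from \eqref{eq:report-markov-response}. Because $\|D_\rho\|\le1$, all of these bounds are uniform in $\rho$.

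Assembling the pieces gives
\[
0=Jm_{\eta,\rho}+\eta\{\mathcal C(\Sigma_\rho)+r_\rho\}+O(\eta^{3/2}).
\]
The matrix $J$ is invertible, since $\det J=\det A\det S\neq0$ by Assumption~\ref{as:hurwitz} and the Schur complement. Crucially, $J$ does not depend on $\rho$, because the balance was divided by $D_\rho$ before expanding. Applying $J^{-1}$ therefore gives the claim with remainder uniform in $\rho$.

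The main obstacle is the justification that only $J^{-1}$, and not $(D_\rho J)^{-1}$, enters the final step. If the balance were kept in the form $\E D_\rho G=0$, the slow block would be divided by $\rho$ and the remainder would blow up as $\rho\downarrow0$. The point is that the remainders bounded above are bounds on $\E G$ itself, uniformly in $\rho$, so inverting the $\rho$-free $J$ suffices. The second delicate point is uniformity in $\rho$ of $\Sigma_\rho$ and of the covariance remainder, which is supplied by Proposition~\ref{prop:covariance}. For the bound on $\mathcal C(\Sigma_\rho)$ itself, I would cite the uniform boundedness of $\Sigma_\rho$ in $\rho$. This follows from Proposition~\ref{prop:semigroup} via $\Sigma_\rho=\int_0^\infty e^{tJ_\rho}D_\rho QD_\rho e^{tJ_\rho^\top}dt$ in tangent coordinates, where the $\rho^2$ factor on the slow noise offsets the $1/\rho$ relaxation time.
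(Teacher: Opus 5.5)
Your proposal is correct and follows essentially the same route as the paper: the stationary zero-drift balance with $D_\rho$ removed, the moving-corrector Poisson identity expanded across one update with coefficients frozen at $z^\star$ to produce $\eta r_\rho$, a second-order Taylor expansion of $\bar G$ combined with Proposition~\ref{prop:covariance} and the a priori bound $\|m_{\eta,\rho}\|=O(\eta)$, and finally inversion of the $\rho$-independent matrix $J$. Citing Lemma~\ref{lem:mean-O-eta} introduces no circularity, since the paper derives it from the same Poisson identity and a first-order drift expansion; your closing remark on the uniform boundedness of $\Sigma_\rho$ is correct but not needed for the stated remainder.
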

\begin{lemma}[A priori mean bound]\label{lem:mean-O-eta}
Under the same assumptions, $\|m_{\eta,\rho}\|\le C\eta$ uniformly.
\end{lemma}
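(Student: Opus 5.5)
We derive the a priori bound from the stationary drift balance, combined with the Poisson corrector and the uniform resolvent structure of $L_\rho$. The main issue is uniformity as $\rho\downarrow0$. The naive balance $D_\rho\E G(Y,z)=0$ only gives $\E G(Y,z)=0$ after cancelling $D_\rho$. Inverting $J$ then needs information at scale $\rho$ in the slow block, so we will be careful about which components of the identity we use.

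\emph{Step 1: exact balance with a Poisson corrector.} Stationarity of $z_k$ and \eqref{eq:rec} give $\E G(Y_k,z_k)=0$ componentwise; the factor $\eta D_\rho$ is invertible for $\rho>0$. Write $G(Y_k,z_k)=\bar G(z_k)+\widetilde G(Y_k,z_k)$ and use \eqref{eq:poisson} to rewrite
\[
\widetilde G(Y_k,z_k)=\zeta_{k+1}(z_k)+(P\mathcal U)(Y_k,z_k)-(P\mathcal U)(Y_{k+1},z_{k+1})+\bigl[(P\mathcal U)(Y_{k+1},z_{k+1})-(P\mathcal U)(Y_{k+1},z_k)\bigr].
\]
The martingale term has zero mean. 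The telescoping term has zero stationary mean because $(Y_k,z_k)$ and $(Y_{k+1},z_{k+1})$ have the same law. The last bracket is $O(\eta\,\|D_\rho G\|)$, since $D_z\mathcal U$ is bounded under Assumptions~\ref{as:markov} and~\ref{as:taylor}, finiteness of $\mathsf Y$ and geometric mixing. Its mean is therefore $O(\eta(1+\E\|e\|))=O(\eta)$, using Assumption~\ref{as:localization}. Hence $\E\bar G(z)=O(\eta)$ uniformly in $\rho$.

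\emph{Step 2: Taylor expansion.} By Assumption~\ref{as:smooth} and the growth bound \eqref{eq:cov-regularity} (Lemma~\ref{lem:derived-taylor-estimates}), $\bar G(z)=Je+R(e)$ with $\|R(e)\|\le K\|e\|^2$ globally. Then $\E\|R(e)\|\le K\E\|e\|^2\le K\sqrt{C}\eta$ by \eqref{eq:cov-raw-fourth}. Together with Step 1 this gives $Jm_{\eta,\rho}=O(\eta)$. Since $J$ is a fixed matrix, independent of $\rho$, and invertible with $\det J=\det A\det S\ne0$ by Assumption~\ref{as:hurwitz} and the Schur complement \eqref{eq:J}, we get $\|m_{\eta,\rho}\|\le\|J^{-1}\|\,O(\eta)$ with no $\rho$-dependence.

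\emph{Main obstacle.} The delicate point is Step 1's correction bracket. It must be bounded by $\eta$ times something uniformly bounded, not by $\eta\rho^{-1}$. This holds because $z_{k+1}-z_k=\eta D_\rho G$ and $\|D_\rho\|\le1$, so no inverse power of $\rho$ appears. If instead one worked with the scaled balance $\E D_\rho G=0$ and inverted $J_\rho=D_\rho J$, a factor $\rho^{-1}$ would appear. We avoid this by dividing by $D_\rho$ before taking expectations: the identity $\E G(Y_k,z_k)=0$ is exact for each $\rho>0$, and every remainder afterwards is estimated directly, without any $D_\rho^{-1}$. We also check that the Poisson solution's $z$-derivative is bounded globally, using $\sup_{y,z}\|D_zG\|\le K$ and the geometric series for $\mathcal U$. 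This is needed so the bound holds under the unbounded-support stationary law.
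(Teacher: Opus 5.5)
Your route is the same as the paper's: the exact balance $\E G(Y_k,z_k)=0$ after cancelling $\eta D_\rho$, a Poisson moving-corrector bound $\E\widetilde G(Y_k,z_k)=O(\eta)$ that uses only $\|D_\rho\|\le1$ and bounded $D_z\mathcal U$, the global quadratic Taylor bound with $\E\|e\|^2\le C\eta$, and inversion of the fixed matrix $J$ rather than $J_\rho$. Step 2 and your discussion of avoiding a $\rho^{-1}$ loss are fine.

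\textbf{The gap is in Step 1.} The displayed identity there is false. Since $\zeta_{k+1}(z_k)=\mathcal U(Y_{k+1},z_k)-(P\mathcal U)(Y_k,z_k)$, the terms $\pm(P\mathcal U)(Y_k,z_k)$ cancel, and so do the terms $\pm(P\mathcal U)(Y_{k+1},z_{k+1})$. Your right-hand side therefore collapses to $\mathcal U(Y_{k+1},z_k)-(P\mathcal U)(Y_{k+1},z_k)=\widetilde G(Y_{k+1},z_k)$, not $\widetilde G(Y_k,z_k)$. As written, Step 1 bounds $\E\widetilde G(Y_{k+1},z_k)$. That is not the quantity in $\E\bar G(z_k)=-\E\widetilde G(Y_k,z_k)$, which your Step 2 needs.

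\textbf{The repair.} Telescope with $\mathcal U$ instead of $P\mathcal U$. Using $-(P\mathcal U)(Y_k,z_k)=\zeta_{k+1}(z_k)-\mathcal U(Y_{k+1},z_k)$,
\[
 \widetilde G(Y_k,z_k)=\zeta_{k+1}(z_k)+\bigl[\mathcal U(Y_k,z_k)-\mathcal U(Y_{k+1},z_{k+1})\bigr]+\bigl[\mathcal U(Y_{k+1},z_{k+1})-\mathcal U(Y_{k+1},z_k)\bigr].
\]
Taking expectations gives exactly the paper's moving-corrector identity $\E\widetilde G(Y_k,z_k)=\E[\mathcal U(Y_{k+1},z_{k+1})-\mathcal U(Y_{k+1},z_k)]$. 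Your bound $C\eta(1+\E\|e\|)$ on the last bracket then applies unchanged.

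Alternatively, keep your identity and add one step. By stationarity, $\E\widetilde G(Y_k,z_k)=\E\widetilde G(Y_{k+1},z_{k+1})$. Also $\|\widetilde G(y,z_{k+1})-\widetilde G(y,z_k)\|\le 2K\|z_{k+1}-z_k\|\le C\eta(1+\|e_k\|)$, so the two expectations differ by $O(\eta)$.

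With either fix your proof matches the paper's. The ``uniform resolvent structure of $L_\rho$'' in your opening is never used and is not needed.
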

The proof is given in Appendix~\ref{subsec:fixed-ratio-proof};
Appendix~\ref{sec:sup:bias-expansion} supplies the detailed
identities and remainder bounds.

\section{Bias Expansion as the Step-Size Ratio Vanishes.}\label{sec:singular}
Section~\ref{sec:bias} leaves the ratio dependence implicit in
$b(\rho)$. As $\rho\downarrow0$, the slow relaxation rate vanishes
on the fast clock, making the covariance equation singular at
$\rho=0$. We rescale the covariance blocks to obtain regular limiting
equations and expand $b(\rho)$.
Theorem~\ref{thm:singular} identifies contributions proportional to
$\eta$, $\varepsilon$, and $\varepsilon^2/\eta$, with an
$O(\eta^{3/2}+\varepsilon^3/\eta^2)$ remainder whose implied constant
is independent of $\eta$ and $\rho$.
This expansion also guides bias reduction using runs at different
step sizes. For example, at fixed ratio, halving both steps halves
the leading bias, so twice the finer-step average minus the
coarser-step average cancels that leading stationary bias.
Section~\ref{sec:rr} adapts this cancellation to runs in which
the step-size ratio also changes.

\begin{theorem}[Singular expansion]\label{thm:singular}
Under Assumptions~\ref{as:smooth}--\ref{as:taylor}, uniformly for small positive $\eta,\rho$,
\begin{equation}
 \begin{aligned}
 m_{\eta,\rho}&=\eta b_0+\varepsilon b_1+\frac{\varepsilon^2}{\eta}b_2+O\!\left(\eta^{3/2}+\frac{\varepsilon^3}{\eta^2}\right).
 \end{aligned}\label{eq:report-mixed-expansion}
\end{equation}
\end{theorem}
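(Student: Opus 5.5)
Starting from Theorem~\ref{thm:fixed-ratio}, $m_{\eta,\rho}=\eta b(\rho)+O(\eta^{3/2})$ uniformly in $\rho$, with $b(\rho)=-J^{-1}\{\mathcal C(\Sigma_\rho)+r_\rho\}$. It therefore suffices to show
\[
 b(\rho)=b_0+\rho b_1+\rho^2 b_2+O(\rho^3)
\]
uniformly for $0<\rho\le\rho_0$. Multiplying by $\eta$ and using $\eta\rho=\varepsilon$, $\eta\rho^2=\varepsilon^2/\eta$, $\eta\rho^3=\varepsilon^3/\eta^2$ then yields \eqref{eq:report-mixed-expansion}.

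The direct Markov term is immediate. By \eqref{eq:report-markov-response}, $r_\rho$ is linear in $D_\rho=\diag(I,0)+\rho\,\diag(0,I)$, so $r_\rho=r^{(0)}+\rho r^{(1)}$ exactly. Likewise $\mathcal C$ is linear and $J^{-1}$ does not depend on $\rho$. Everything therefore reduces to an expansion of $\Sigma_\rho$ through order $\rho^2$ with an $O(\rho^3)$ remainder. The difficulty is that $\Sigma_\rho$ itself is singular as $\rho\downarrow0$: its slow block is of order $\rho$, and the fast Lyapunov operator has a vanishing slow rate.

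To handle this, I will pass to the tangent coordinates and set $\Pi_\rho=T\Sigma_\rho T^\top$. From \eqref{eq:lyap}, $\Pi_\rho$ solves
\[
 L_\rho\Pi_\rho+\Pi_\rho L_\rho^\top+TD_\rho QD_\rho T^\top=0.
\]
I split $\Pi_\rho$ into blocks and use the ansatz $\Pi_{uu}=P_{uu}$, $\Pi_{uv}=\rho P_{uv}$, $\Pi_{vv}=\rho P_{vv}$, which is suggested by the noise scaling $D_\rho QD_\rho$. Substituting \eqref{eq:L} and dividing each block equation by its natural power of $\rho$ gives a system for $P=(P_{uu},P_{uv},P_{vv})$ of the form $\mathcal M(\rho)P=N(\rho)$. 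Here $\mathcal M$ and $N$ are polynomial in $\rho$. At $\rho=0$ the system is triangular:
\begin{itemize}
 \item the $uu$ block gives $AP_{uu}+P_{uu}A^\top+Q_{xx}=0$;
 \item the $uv$ block gives $AP_{uv}+(\text{known noise cross term } Q_{x\theta}-\dots)=0$, which is solvable since $A$ is invertible;
 \item the $vv$ block gives $SP_{vv}+P_{vv}S^\top+(\text{terms from } C P_{uv}\text{ and the } \theta\text{-noise})=0$, which is solvable since $S$ is Hurwitz.
\end{itemize}

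The $vv$ block is the delicate point. The slow covariance equation has drift $\rho S$ and forcing of order $\rho^2$, and both the $\rho$ from the drift and the $\rho$ in $\Pi_{vv}=\rho P_{vv}$ must be tracked carefully, exactly as in the proof of Proposition~\ref{prop:covariance}. Once this is done, $\mathcal M(0)$ is invertible by Assumption~\ref{as:hurwitz}. By continuity, $\mathcal M(\rho)^{-1}$ is analytic near $0$, so $P(\rho)=P^{(0)}+\rho P^{(1)}+\rho^2P^{(2)}+O(\rho^3)$ with uniform constants.

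Transforming back through $\Sigma_\rho=T^{-1}\Pi_\rho T^{-\top}$ keeps $\Sigma_\rho$ polynomial in $\rho$ times the analytic $P(\rho)$. This gives $\Sigma_\rho=\Sigma^{(0)}+\rho\Sigma^{(1)}+\rho^2\Sigma^{(2)}+O(\rho^3)$, and I then read off $b_0,b_1,b_2$.

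I expect the main obstacle to be confirming that the rescaled system is genuinely regular. Specifically, I need to check that the off-diagonal coupling terms $-\rho\Lambda S$ and $\rho C$ enter each rescaled block only with nonnegative powers of $\rho$ after division. The term to watch is $\rho C\Pi_{uv}$ in the $vv$ equation, which carries a factor $\rho^2$. This bookkeeping decides the exponents. If it failed, one would get a spurious $1/\rho$ term, so I would verify it block by block first.
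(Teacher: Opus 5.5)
Your proposal is correct and follows essentially the same route as the paper. You reduce via Theorem~\ref{thm:fixed-ratio} to an $O(\rho^3)$ expansion of $b(\rho)$, use the exact affine dependence $r_\rho=r_0+\rho r_1$, and obtain analyticity of $\Sigma_\rho$ from the rescaled tangent-coordinate Lyapunov system for $(U,\rho R,\rho V)$, whose $\rho=0$ limit is triangular and invertible because $A$ and $S$ are Hurwitz. Your bookkeeping also checks out: the $uv$ equation divided by $\rho$ has limit $AR_0+U_0C^\top+Q_{x\theta}=0$, and the $vv$ equation divided by $\rho^2$ is exactly $CR+SV+R^\top C^\top+VS^\top+Q_{\theta\theta}=0$, so no $1/\rho$ term arises.
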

\begin{proof}[Proof sketch]
In tangent coordinates, write
\[
 T\Sigma_\rho T^\top=
 \begin{pmatrix}
 U(\rho)&\rho R(\rho)\\
 \rho R(\rho)^\top&\rho V(\rho)
 \end{pmatrix}.
\]
Rescaling the cross and slow covariance equations gives a
linear system with polynomial coefficients in $\rho$.
At $\rho=0$, it is triangular and invertible by Hurwitz
stability of $A$ and $S$. Its solution is therefore analytic
near zero. Expanding $\Sigma_\rho$ through order $\rho^2$,
using the affine dependence of $r_\rho$ on $\rho$, and
substituting into Theorem~\ref{thm:fixed-ratio} gives the result.
The detailed proof and coefficient formulas are in
Appendix~\ref{subsec:singular-proof}.
\end{proof}
In tangent coordinates, the leading fast covariance, fast--slow
cross-covariance, and slow covariance are
$\eta U(\rho)=O(\eta)$,
$\eta\rho R(\rho)=O(\varepsilon)$, and
$\eta\rho V(\rho)=O(\varepsilon)$, respectively.
The absolute $O(\eta^{3/2})$ covariance remainder can obscure the
$O(\varepsilon)$ terms when $\rho\lesssim\sqrt{\eta}$.
The sharper $O(\varepsilon)$ bounds for the actual slow covariance
and fast--slow cross-covariance are additional conditions collected
in Assumption~\ref{as:block-localization}.
They are not needed for Proposition~\ref{prop:covariance} or
Theorems~\ref{thm:fixed-ratio}--\ref{thm:singular}.
Proposition~\ref{prop:contractive-localization} gives sufficient
conditions under which these stronger bounds hold.

\section{Extrapolation Along Step-Size Paths}\label{sec:rr}

Time averaging reduces random fluctuations but does not remove
stationary bias. We reduce this bias by combining averages from
several runs at different pairs of fast and slow step sizes.
For each run, we choose a fast step $\eta$ and set the slow step
to $\varepsilon=\eta^p$, with the same fixed $p>1$ across runs.
Thus, reducing $\eta$ also determines how $\varepsilon$ decreases.
This relation between the two step sizes defines the step-size path.
The expansion in Section~\ref{sec:singular} describes how the bias
changes along this path, allowing us to choose weights so that
the leading stationary bias terms cancel.
Proposition~\ref{prop:rr} gives a three-run combination that cancels
the contributions proportional to the fast and slow steps.
We also describe how to cancel further terms and bound the
remaining bias.

Before choosing extrapolation weights, we identify the bias powers
along the chosen step-size path and compare them with the remainder.
For $\varepsilon=\eta^p$, $p>1$, Theorem~\ref{thm:singular} gives
\begin{equation}
 m(\eta)=\eta b_0+\eta^p b_1+\eta^{2p-1}b_2
 +O(\eta^{3/2}+\eta^{3p-2}).\label{eq:report-path-expansion}
\end{equation}
A nonzero bias term can be distinguished from the remainder when
it decays more slowly as $\eta\downarrow0$.
Since $3p-2>2p-1>p>1$, comparison with the
$O(\eta^{3/2})$ remainder determines which displayed terms satisfy
this condition: $\eta^p b_1$ does so for $1<p<3/2$, and
$\eta^{2p-1}b_2$ for $1<p<5/4$.
Under the additional additive-i.i.d.\ conditions of
Theorem~\ref{thm:iid}, the sharper $O(\eta^2)$ remainder extends
the latter range to $1<p<3/2$.

At fixed $\rho$, let $M_\rho(h)$ denote the stationary mean at
steps $(\eta,\varepsilon)=(h,\rho h)$.
Halving both steps halves the leading bias, so twice the finer-step
mean minus the coarser-step mean cancels this contribution:
\begin{equation}
 2M_\rho(h/2)-M_\rho(h)
 =z^\star+O(h^{3/2}).\label{eq:report-fixed-ratio-RR}
\end{equation}
Along $\varepsilon=\eta^p$, reducing the fast step must preserve
this relation: the corresponding pairs are
$(\eta_j,\varepsilon_j)=(2^{-j}h,(2^{-j}h)^p)$.
The bias terms proportional to $h$ and $h^p$ now decrease by
different factors between successive runs.
Proposition~\ref{prop:rr} gives weights for combining the three
stationary means at $j=0,1,2$ so that both terms cancel, and
bounds the remaining bias.
\begin{proposition}[Three-level path extrapolation]\label{prop:rr}
Let $M(h)$ be the stationary mean at $(h,h^p)$ under Theorem~\ref{thm:singular}. Set $q_1=2^{-1}$, $q_p=2^{-p}$, and
\begin{equation}
 (w_0,w_1,w_2)=
 \frac{(q_1q_p,-q_1-q_p,1)}{(1-q_1)(1-q_p)}.
 \label{eq:report-three-weights}
\end{equation}
Then
\[
 \sum_{j=0}^2w_jM(2^{-j}h)=z^\star+
 O\!\left(h^{\min\{2p-1,3/2\}}\right).
\]
If the stationary weak remainder is $O(h^2)$, replace $3/2$ by $2$.
\end{proposition}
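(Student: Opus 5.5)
The plan is to substitute the path expansion \eqref{eq:report-path-expansion} into each of the three runs and check that the weights \eqref{eq:report-three-weights} annihilate the constant, linear and $h^p$ components.

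Writing $M(h)=z^\star+m(h)$, Theorem~\ref{thm:singular} with $\varepsilon=\eta^p$ gives, for $\eta_j=2^{-j}h$,
\[
 M(2^{-j}h)=z^\star+q_1^{\,j}h\,b_0+q_p^{\,j}h^p b_1+(2^{-j}h)^{2p-1}b_2+R_j,
\]
with $\|R_j\|\le C\bigl((2^{-j}h)^{3/2}+(2^{-j}h)^{3p-2}\bigr)$. The constant $C$ is the uniform constant from Theorem~\ref{thm:singular}. It applies because $\rho_j=\varepsilon_j/\eta_j=(2^{-j}h)^{p-1}\le h^{p-1}$, which lies below $\rho_0$ for $h$ small since $p>1$, and $\eta_j\le h\le\eta_0$.

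Next comes the algebra on the weights, namely the three identities $\sum_j w_j=1$, $\sum_j w_jq_1^{\,j}=0$ and $\sum_j w_jq_p^{\,j}=0$. The first follows from $q_1q_p-q_1-q_p+1=(1-q_1)(1-q_p)$. The other two hold because the numerators $q_1q_p-(q_1+q_p)q+q^2=(q-q_1)(q-q_p)$ vanish at $q=q_1$ and at $q=q_p$. These identities reconstruct $z^\star$ exactly and cancel the $b_0$ and $b_1$ terms. The weights are well defined because $q_1\ne1$ and $q_p\ne1$. We need not assume $q_1\ne q_p$ except to have genuinely distinct terms, and $p>1$ ensures it anyway.

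The surviving terms are bounded by the triangle inequality, using fixed bounds on $|w_j|$ that depend only on $p$. The mixed term contributes at most $\sum_j|w_j|\,2^{-j(2p-1)}h^{2p-1}\|b_2\|=O(h^{2p-1})$. The remainders contribute $O(h^{3/2}+h^{3p-2})$. Because $3p-2>2p-1$ for $p>1$ and $h\le1$, the term $h^{3p-2}$ is absorbed into $h^{2p-1}$. This leaves $O(h^{2p-1}+h^{3/2})=O(h^{\min\{2p-1,3/2\}})$ for $h\le1$.

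For the final sentence, suppose the weak remainder is $O(h^2)$, which should be read as $O(\eta^2+\varepsilon^3/\eta^2)$, as in the sharper remainder of Theorem~\ref{thm:iid}. Then the same argument gives $O(h^{2p-1}+h^2+h^{3p-2})=O(h^{\min\{2p-1,2\}})$.

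The main point of care is uniformity. All three runs must use one constant from Theorem~\ref{thm:singular}, which is available because its bound is uniform in small $\eta,\rho$ and every run lies in that regime. Beyond this there is no real obstacle; the argument is bookkeeping on the exponents.
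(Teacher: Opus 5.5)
Your proposal is correct and matches the paper's proof. The paper checks the three identities $\sum_j w_j=1$, $\sum_j w_j2^{-j}=0$ and $\sum_j w_j2^{-jp}=0$ and applies them to the path expansion \eqref{eq:report-path-expansion}. You follow the same route, and you also spell out what the paper leaves implicit: a single uniform constant from Theorem~\ref{thm:singular} applies to all three runs because $\rho_j=(2^{-j}h)^{p-1}\le h^{p-1}\le\rho_0$, and $h^{3p-2}$ is absorbed into $h^{2p-1}$ since $p>1$.
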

\begin{proof}
The weights satisfy $\sum_jw_j=1$, $\sum_jw_j2^{-j}=0$, and $\sum_jw_j2^{-jp}=0$. Apply these identities to \eqref{eq:report-path-expansion}.
\end{proof}
Appendix~\ref{sec:sup:singular-path-rr} gives weights for canceling
additional bias terms and explains the limits imposed by the
available remainder bounds.
Extrapolation can increase sampling variance, so reducing bias does
not necessarily improve accuracy at fixed computational cost.

An exactly solvable nonlinear Markov example in
Appendix~\ref{sec:sup:exact-markov-example} verifies that the mixed
bias term can be nonzero and illustrates these extrapolation rules
using exact stationary means.

\section{Finite-Time Consequences}\label{sec:finite}
A stationary coupling decomposes the error into initialization,
centered fluctuation, and stationary mean. Under
Assumptions~\ref{as:smooth}--\ref{as:taylor} and the additional
block localization in Assumption~\ref{as:block-localization},
\begin{equation}
 \E\|u\|^2=O(\eta),\qquad
 \E\|v\|^2=O(\varepsilon+\eta^2).\label{eq:floors}
\end{equation}
The $\eta^2$ term is squared bias and can dominate the slow variance on thin paths.
\begin{assumption}[Two-clock coupling]\label{as:coupling}
A coupling with a stationary version satisfies
\begin{equation}
 \E\|z_k-z_k^{\rm st}\|
 \le C\{1+V(z_0)\}(e^{-c\eta k}+e^{-c\varepsilon k}),\label{eq:coupling}
\end{equation}
where $V\ge0$ has uniformly bounded invariant expectation. Any meeting-time contribution from different input initializations is included in this bound.
\end{assumption}
This additional condition does not follow from local linear stability.
\begin{theorem}[Tail-average bias and initialization]\label{thm:tail}
For $\bar z_{K,N}=L^{-1}\sum_{k=K}^{N-1}z_k$, $L=N-K>0$, under Assumption~\ref{as:coupling},
\begin{align}
 \|\mathsf T_{K,N}\|\le C\{1+V(z_0)\}\bigl[
 &e^{-c\eta K}\min\{1,(\eta L)^{-1}\}
+e^{-c\varepsilon K}\min\{1,(\varepsilon L)^{-1}\}\bigr],\label{eq:tail-bound}
\end{align}
where $\mathsf T_{K,N}=\E\bar z_{K,N}-\E z^{\rm st}$. If Assumptions~\ref{as:smooth}--\ref{as:taylor} also hold, then
\begin{equation}
 \E\bar z_{K,N}-z^\star=\eta b(\rho)+O(\eta^{3/2})
 +\mathsf T_{K,N}.\label{eq:tail-bias}
\end{equation}
\end{theorem}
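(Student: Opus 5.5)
The plan is to work directly from the coupling in Assumption~\ref{as:coupling}. On the common probability space, write $z_k=z_k^{\rm st}+(z_k-z_k^{\rm st})$. The stationary version has the same marginal law $\pi_{\eta,\rho}$ at every time, so $\E z_k^{\rm st}=\E z^{\rm st}$ for all $k$. Averaging over $k=K,\dots,N-1$ gives the exact identity
\[
 \mathsf T_{K,N}=\E\bar z_{K,N}-\E z^{\rm st}=L^{-1}\sum_{k=K}^{N-1}\E\bigl(z_k-z_k^{\rm st}\bigr).
\]
Jensen's inequality, $\|\E X\|\le\E\|X\|$, together with \eqref{eq:coupling} then bounds $\|\mathsf T_{K,N}\|$ by
\[
 C\{1+V(z_0)\}\,L^{-1}\sum_{k=K}^{N-1}\bigl(e^{-c\eta k}+e^{-c\varepsilon k}\bigr).
\]

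I would treat the two clocks separately, using a geometric-sum estimate for a generic rate $a\in\{\eta,\varepsilon\}$. Factoring out $e^{-caK}$ gives
\[
 L^{-1}\sum_{k=K}^{N-1}e^{-cak}=e^{-caK}L^{-1}\sum_{j=0}^{L-1}e^{-caj}.
\]
The normalized sum is at most $1$, since each term is at most $1$. It is also at most
\[
 L^{-1}(1-e^{-ca})^{-1}\le L^{-1}\,\frac{2}{ca},
\]
where the last step uses $1-e^{-x}\ge x/2$ for $0<x\le1$; this holds because $ca\le c\eta_0\le1$ once $\eta_0$ is taken small. Hence the normalized sum is at most $\max\{1,2/c\}\min\{1,(aL)^{-1}\}$. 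Taking $a=\eta$ and $a=\varepsilon$ and absorbing constants yields \eqref{eq:tail-bound}. The only care needed is that $c$ must not depend on $\eta,\rho$, which is part of the assumption.

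For \eqref{eq:tail-bias}, I would decompose
\[
 \E\bar z_{K,N}-z^\star=\bigl(\E z^{\rm st}-z^\star\bigr)+\mathsf T_{K,N}.
\]
The first term is $m_{\eta,\rho}$, the stationary mean displacement computed under the same invariant law $\pi_{\eta,\rho}$ singled out in Assumption~\ref{as:localization}. Theorem~\ref{thm:fixed-ratio} gives $m_{\eta,\rho}=\eta b(\rho)+O(\eta^{3/2})$ uniformly in $\rho$, which gives the claim.

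I expect the main obstacle to be a consistency issue rather than an estimate. The stationary version in the coupling must be distributed according to the same selected invariant law $\pi_{\eta,\rho}$ used in Theorem~\ref{thm:fixed-ratio}, and $z^{\rm st}$ must be integrable so that the expectations above make sense. Integrability follows from the fourth-moment bound \eqref{eq:cov-raw-fourth}. I would state explicitly that the coupling is taken with respect to $\pi_{\eta,\rho}$. If several invariant laws existed, the coupling bound itself forces any stationary versions reachable from $z_0$ to have the same mean, so no ambiguity arises.
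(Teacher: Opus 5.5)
Your proposal is correct and follows the paper's proof: average the coupling bound \eqref{eq:coupling} over the window, bound each geometric sum $L^{-1}\sum_{k=K}^{N-1}e^{-chk}$ by $Ce^{-chK}\min\{1,(hL)^{-1}\}$ for $h=\eta,\varepsilon$, and then invoke Theorem~\ref{thm:fixed-ratio} for the stationary mean. Your stipulation that the coupled stationary version has law $\pi_{\eta,\rho}$ matches the paper's implicit convention (its definition of $\mathsf T_{K,N}(\nu)$ uses $\E_{\pi_{\eta,\rho}}[z]$), and it is this stipulation that identifies $\E z^{\rm st}-z^\star$ with $m_{\eta,\rho}$; your closing remark only shows that stationary versions coupled to the same trajectory share a mean, not that this mean is the one for $\pi_{\eta,\rho}$.
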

\begin{proof}
Average \eqref{eq:coupling} and use, for $h=\eta,\varepsilon$,
\[
 \begin{aligned}
 L^{-1}\sum_{k=K}^{N-1}e^{-chk}
 &=e^{-chK}\frac{1-e^{-chL}}{L(1-e^{-ch})}\\
 &\le Ce^{-chK}\min\{1,(hL)^{-1}\}.
 \end{aligned}
\]
Then apply Theorem~\ref{thm:fixed-ratio}.
\end{proof}
Stationary initialization makes $\mathsf T_{K,N}=0$. For finite-run extrapolation, weighted initialization errors must lie below the retained remainder.
Appendix~\ref{sec:sup:finite-time} derives conditional blockwise
mean-square bounds under additional stationary moment and
coordinate $L^2$ coupling assumptions.

\section{Application: temporal-difference learning}
\label{sec:experiments}

We study fixed-policy value estimation using temporal-difference
learning with gradient correction (TDC)~\citep{sutton2009fast,mahadevan2026convergence}. We compare stationary-bias
predictions with simulations and evaluate extrapolation at equal
update budgets.

\subsection{Setup and Recursion}

Let $S_k$ be the state at time $k$ under a fixed policy,
$R_{k+1}$ the reward on the transition to $S_{k+1}$, and
$\gamma\in(0,1)$ the discount factor.
For a fixed scalar feature $\phi(s)$, we approximate the discounted
value function by $V_\theta(s)=\phi(s)\theta$, with parameter
$\theta\in\mathbb R$. Write $\phi_k=\phi(S_k)$ and define
\[
 \delta_k=R_{k+1}+\gamma\phi_{k+1}\theta_k-\phi_k\theta_k.
\]
TDC uses a fast auxiliary variable $w_k$ and a slow value
parameter $\theta_k$ \citep{sutton2009fast}:
\begin{align*}
 w_{k+1}&=w_k+\eta(\delta_k-\phi_k w_k)\phi_k,\\
 \theta_{k+1}&=\theta_k+\varepsilon
 \bigl[\delta_k\phi_k-\gamma\phi_{k+1}\phi_k w_k\bigr].
\end{align*}
Both updates use the current parameters and the same observed
transition.

We use a Markov reward process with state space
$\mathsf S=\{0,1,2,3,4\}$, representing positions along a
five-site chain.
From an interior state $s\in\{1,2,3\}$, the process stays at $s$
with probability $1/2$ and moves to each neighboring state
with probability $1/4$.
At either boundary, an attempted outward move becomes a
self-loop: the process stays with probability $3/4$ and moves
to its only neighbor with probability $1/4$.
The reward is
$R_{k+1}=\mathbf 1\{S_{k+1}=4\}$, so a transition earns reward
one exactly when its destination is the rightmost state.
The process continues after receiving this reward; state $4$
is not terminal.
We set $\phi(s)=1+0.05s$ and $\gamma=0.2$.
Our estimation target is the TD equilibrium
$\theta^\star\approx0.2443164284$, derived in
Appendix~\ref{subsec:tdc-application}.
The updates are affine, so their curvature contribution vanishes;
this example tests the direct Markov contribution to stationary bias.

\begin{figure*}[t]
 \centering
\includegraphics[
  width=.90\textwidth,
  trim=0bp 24bp 0bp 0bp,
  clip
]{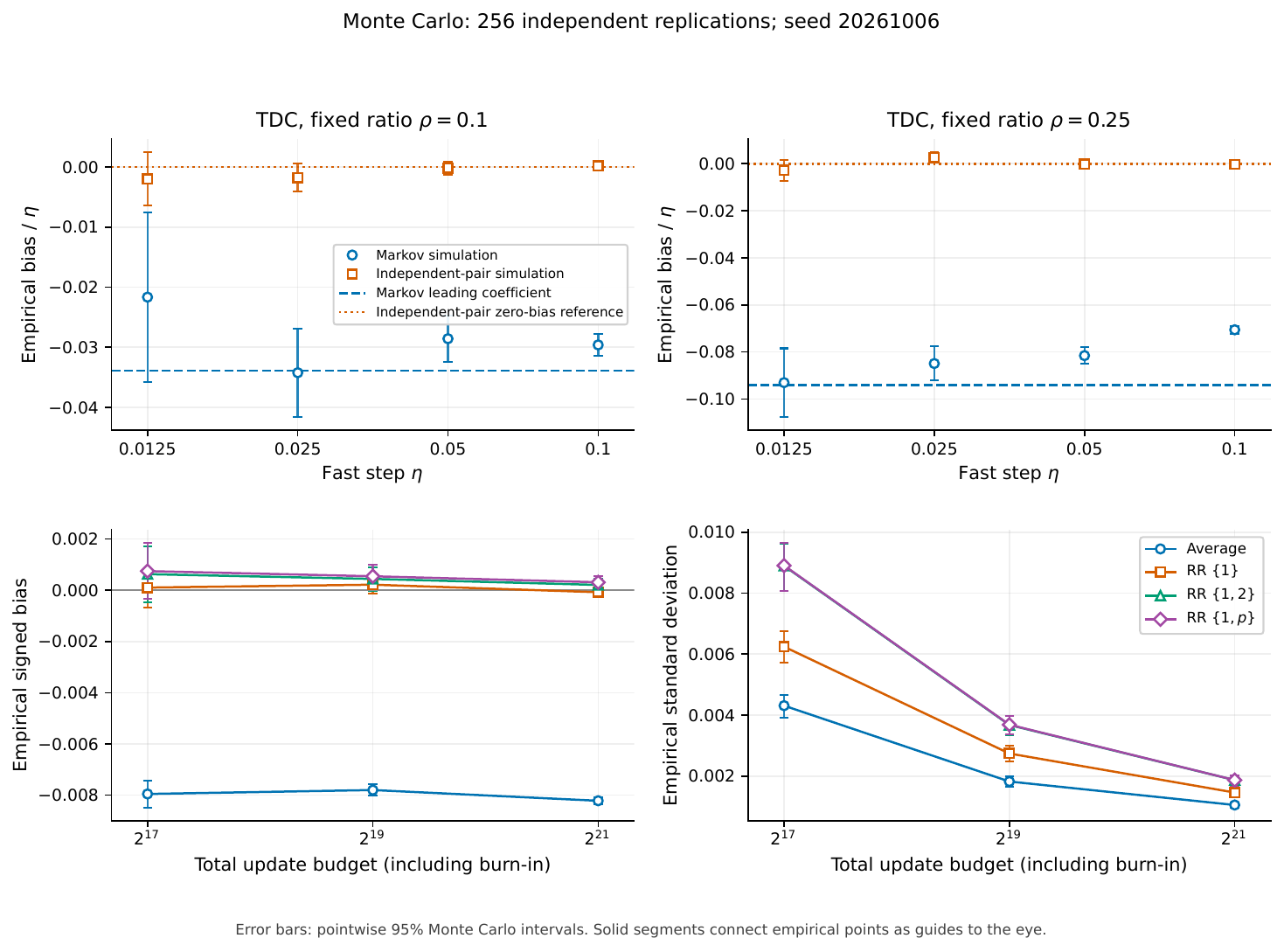}
 \caption{Empirical bias and variability of TDC estimates of
 $\theta^\star$.
 Top: empirical bias divided by $\eta$ at fixed ratios
 $\rho=0.1$ and $\rho=0.25$, for Markov sampling and independent
 transition-pair sampling. Dashed lines show the theoretical
 leading coefficient $b_0^\theta+\rho b_1^\theta$;
 the dotted line marks the zero stationary bias of the
 independent-pair control.
 Bottom: empirical signed bias and standard deviation at equal
 total update budgets, including burn-in, along
 $\varepsilon=\eta^p$ with $p=6/5$ and base step $h=0.05$.
 All estimates use 256 independent replications.
 Error bars are pointwise 95\% intervals: Student-$t$ intervals
 for bias and replication-bootstrap intervals for standard
 deviation. Solid segments connect empirical points.
 RR labels specify the cancelled powers.}
 \label{fig:tdc-diagnostics}
\end{figure*}

\subsection{Empirical Bias Estimation and Extrapolation}

Our experiments have two goals: to compare the bias estimated
from simulated trajectories with the theoretical bias expansion,
and to assess whether extrapolation improves estimation accuracy
at equal total update budgets.

\paragraph{Estimating bias from simulated trajectories.}
Fix the step sizes $\eta$ and $\varepsilon=\eta\rho$.
We generate $R=256$ independent replications of the TDC recursion.
In replication $r$, we discard the first $K$ updates and average
the next $L$ slow iterates:
\[
 \widehat\theta^{(r)}
 =\frac1L\sum_{\ell=1}^{L}\theta^{(r)}_{K+\ell}.
\]
The empirical bias is
\begin{equation}
 \widehat B^\theta_{\eta,\rho}
 =\frac1R\sum_{r=1}^{R}
   \bigl(\widehat\theta^{(r)}-\theta^\star\bigr).
 \label{eq:tdc-empirical-bias}
\end{equation}
The target $\theta^\star$ is the equilibrium of the averaged
TD problem, given in~\eqref{eq:tdc-equilibrium}.
Thus, the empirical bias is calculated directly from simulated
iterates and the known target.
For finite $K$ and $L$, this estimates the bias of the
post-burn average. We interpret it as an estimate of stationary
bias when the remaining initialization effect is negligible.

\paragraph{Comparison with the theoretical prediction.}
Theorem~\ref{thm:fixed-ratio}, specialized to this affine
TDC recursion in Appendix~\ref{subsec:tdc-application},
gives, for sufficiently small positive $\eta$ and $\rho$,
\[
 B^\theta_{\eta,\rho}
 :=\mathbb E_{\pi_{\eta,\rho}}[\theta]-\theta^\star
 =\eta\bigl(b_0^\theta+\rho b_1^\theta\bigr)
   +O(\eta^{3/2}).
\]
Consequently, at a fixed ratio $\rho$, the normalized empirical
bias $\widehat B^\theta_{\eta,\rho}/\eta$ can be compared with
the leading coefficient $b_0^\theta+\rho b_1^\theta$.
The top panels of Figure~\ref{fig:tdc-diagnostics} show this
comparison at $\rho\in\{0.1,0.25\}$.
The points and their confidence intervals are obtained from
simulations; the dashed horizontal lines mark
$b_0^\theta+\rho b_1^\theta$, the predicted limit of
the normalized stationary bias $B^\theta_{\eta,\rho}/\eta$
as $\eta\to0$ at fixed $\rho$.

As a control, we independently sample transition pairs from
the same stationary transition distribution.
This preserves the averaged TD problem and its target
$\theta^\star$ while removing serial dependence.
The stationary bias of this independent-pair recursion is zero.
Comparing its empirical bias with that of the Markov recursion
therefore examines the effect of temporal dependence.

\paragraph{Extrapolation at equal update budgets.}
We compare ordinary averaging with extrapolated estimates
along the step-size path
\[
 \eta_j=h2^{-j},\qquad
 \varepsilon_j=\eta_j^p,\qquad
 h=0.05,\quad p=6/5.
\]
Within each replication, let $\widehat\theta_j$ denote the
post-burn average at level $j$.
Ordinary averaging uses level $0$.
The two-level rule RR $\{1\}$ uses
\[
 \widehat\theta_{\mathrm{RR}\{1\}}
 =2\widehat\theta_1-\widehat\theta_0.
\]
The three-level rules RR $\{1,2\}$ and RR $\{1,p\}$
use weights that sum to one and cancel the indicated
powers of $h$.

We use total update budgets
$B\in\{2^{17},2^{19},2^{21}\}$.
A method with $m$ levels allocates
$N=\lfloor B/m\rfloor$ updates to each level, including burn-in.
Within a replication, the levels share the same input sequence.
Each replication produces one estimate for each method.
We compute empirical bias, standard deviation, and RMSE across
these independent replication-level estimates.
The bottom panels of Figure~\ref{fig:tdc-diagnostics} show
empirical bias and standard deviation.
RMSE measures their combined contribution to estimation error.

Appendix~\ref{subsec:tdc-application} verifies the assumptions
and derives the theoretical bias coefficients.
Appendix~\ref{subsec:mc-protocol} specifies the simulation
lengths, burn-in, and uncertainty calculations.
The nonlinear mixed bias term is illustrated separately in
Appendix~\ref{sec:sup:exact-markov-example}.

\section{Conclusion}\label{sec:conclusion}
For nonlinear two-timescale stochastic approximation, we
derive a stationary-bias expansion with terms
$\eta b_0+\varepsilon b_1+(\varepsilon^2/\eta)b_2$
and uniform remainder control as the step-size ratio vanishes.
Along paths $\varepsilon=\eta^p$, these contributions have
powers $1$, $p$, and $2p-1$, motivating Richardson--Romberg
extrapolation rules adapted to the chosen path.
The TDC experiments illustrate the practical trade-off:
cancelling additional bias terms can increase sampling
variability, so higher-order cancellation need not improve
accuracy at equal update budgets.
Sharper Markov remainder bounds and optimal allocation
across extrapolation levels remain open questions.

\subsection*{AI Use Statement}
ChatGPT/Codex assisted with paper review, and simulation code.
\bibliographystyle{plainnat}
\bibliography{references}
\clearpage

\appendix
\crefalias{section}{appendix}
\crefname{appendix}{Appendix}{Appendices}
\Crefname{appendix}{Appendix}{Appendices}
\onecolumn

\makeatletter
\newcommand{\appendixlabelalias}[2]{%
  \@ifundefined{r@#2}{}{%
    \expandafter\let\csname r@#1\expandafter\endcsname\csname r@#2\endcsname
    \expandafter\let\csname r@#1@cref\expandafter\endcsname\csname r@#2@cref\endcsname}}
\makeatother
\appendixlabelalias{as:sup:smooth}{as:smooth}
\appendixlabelalias{as:sup:hurwitz}{as:hurwitz}
\appendixlabelalias{as:sup:markov}{as:markov}
\appendixlabelalias{as:sup:localization}{as:localization}
\appendixlabelalias{as:sup:taylor}{as:taylor}
\appendixlabelalias{as:sup:coupling}{as:coupling}
\appendixlabelalias{prop:sup:semigroup}{prop:semigroup}
\appendixlabelalias{thm:sup:fixed-ratio}{thm:fixed-ratio}
\appendixlabelalias{thm:sup:singular}{thm:singular}
\appendixlabelalias{thm:sup:tail-average}{thm:tail}

\newtheorem*{appendixsemigrouprestatement}{Proposition~\ref*{prop:semigroup}}
\newtheorem*{appendixbiasrestatement}{Theorem~\ref*{thm:fixed-ratio}}
\newtheorem*{appendixsingularrestatement}{Theorem~\ref*{thm:singular}}
\newtheorem*{appendixtailrestatement}{Theorem~\ref*{thm:tail}}

\section*{Supplementary Derivations for Stationary Bias and Extrapolation in Nonlinear Two-Timescale Stochastic Approximation}
\tableofcontents
\clearpage
\section{Notation, assumptions, and proof roadmap}
\label{sec:sup:two-timescale-setup}

We first recall the notation and the assumptions from the main paper. The general
proofs come next, followed by the additive-noise refinement, a sufficient
localization condition, the worked example, finite-time consequences, and
the temporal-difference learning application and additional experiments.

\begin{center}
\begin{tabular}{@{}p{0.23\textwidth}p{0.71\textwidth}@{}}
\toprule
Location & Purpose \\
\midrule
Appendix~\ref{sec:sup:report-ode-geometry}
 & Fast-manifold geometry and the two-clock semigroup bounds. \\
Appendix~\ref{app:poisson-tools}
 & Poisson representation, long-run noise covariance, and Taylor estimates. \\
Appendix~\ref{sec:sup:report-markov-covariance}
 & Proofs of the uniform covariance approximation and stationary mean bias. \\
Appendix~\ref{sec:sup:singular-path-rr}
 & Singular covariance coefficients and extrapolation along step-size paths. \\
Appendix~\ref{app:iid}
 & The sharper remainder under additive independent noise. \\
Appendix~\ref{app:contractive-localization}
 & A sufficient condition for stationary localization. \\
Appendix~\ref{sec:sup:exact-markov-example}
 & Exact formulas and localization and coupling verification for the Markov example. \\
Appendix~\ref{sec:sup:finite-time}
 & Additional block localization and conditional mean-square bounds. \\
Appendix~\ref{app:experiments}
 & Temporal-difference learning model,
   assumption verification, and Monte Carlo protocol for TDC and the exact nonlinear example. \\
\bottomrule
\end{tabular}
\end{center}

Appendix~\ref{sec:sup:literature} gives additional related work.
Constants in uniform estimates are independent
of sufficiently small positive $\eta$ and $\rho$, as in the main paper.

\subsection{The Algorithm and Its Two Clocks}

The fast variable is $x_k\in\mathbb R^{d_x}$ and the slow variable is
$\theta_k\in\mathbb R^{d_\theta}$.  They evolve according to
\begin{align}
 x_{k+1}
 &=x_k+\eta H(Y_k,x_k,\theta_k),
 \label{eq:sup:report-fast-recursion}\\
 \theta_{k+1}
 &=\theta_k+\varepsilon F(Y_k,x_k,\theta_k).
 \label{eq:sup:report-slow-recursion}
\end{align}
Here $\eta>0$ is the fast step size, $\varepsilon>0$ is the slow step size,
$H$ and $F$ are the two update fields, and $Y_k$ is the random input.  The
state space is Euclidean; no projection is applied after either update.
A stationary analysis therefore needs a global stability condition
(\Cref{as:sup:localization}).

The random input is an exogenous Markov chain on a finite state space
$\mathsf Y$ with $P$ as its transition matrix. We assume that this chain has a unique
stationary distribution $\mu$ and mixes geometrically. 

The ratio
\begin{equation}
 \rho=\frac{\varepsilon}{\eta}
 \label{eq:sup:report-ratio}
\end{equation}
measures how much the slow variable moves during one unit of fast algorithmic
time.  We work with $0<\rho\leq\rho_0$, where $\rho_0$ is a sufficiently
small fixed constant.  Equivalently, $\varepsilon=\eta\rho$.

The limit $\eta\to0$ controls the size of the stationary fluctuations; the
limit $\rho\to0$ separates the two clocks:

\begin{equation}
 \boxed{
 \text{fast time }t_k=k\eta,
 \qquad
 \text{slow time }s_k=k\varepsilon=\rho t_k.}
 \label{eq:sup:report-two-clocks}
\end{equation}

We distinguish two limiting regimes. In the \emph{fixed-ratio
regime}, $\eta\to0$ while $\rho=\varepsilon/\eta$ remains fixed
and positive, so $\varepsilon=\rho\eta\to0$. In the
\emph{singular-ratio regime}, both $\eta\to0$ and $\rho\to0$,
so $\varepsilon=o(\eta)$ and the two clocks become increasingly
separated. A result established separately for each fixed
$\rho$ need not hold uniformly in the singular-ratio regime,
because its bounds may deteriorate as $\rho\to0$. The uniform
remainder bounds proved below ensure that our stationary
expansion also applies in the singular-ratio regime.

\subsection{Averaged Dynamics and the Fast Equilibrium Manifold}
This subsection introduces the averaged dynamics and the fast
equilibrium manifold, and gives a high-level interpretation
of their fast--slow structure. The qualitative discussion of
relaxation and tracking motivates the coordinates introduced
in the next subsection. The local ODE geometry and quantitative
two-clock semigroup bounds are developed in
Appendix~\ref{sec:sup:report-ode-geometry}.

The stationary distribution $\mu$ of the exogenous chain defines the
averaged update fields
\begin{align}
 \bar H(x,\theta)
 &=\sum_{y\in\mathsf Y}\mu(y)H(y,x,\theta),
 \label{eq:sup:report-averaged-H}\\
 \bar F(x,\theta)
 &=\sum_{y\in\mathsf Y}\mu(y)F(y,x,\theta).
 \label{eq:sup:report-averaged-F}
\end{align}
These fields describe the deterministic motion obtained after averaging the
rapid random fluctuations of $Y_k$.

On the fast time scale $t=k\eta$, the formal averaged ODE is
\begin{align}
 \frac{dx}{dt}&=\bar H(x,\theta),
 \label{eq:sup:report-fast-time-x-ode}\\
 \frac{d\theta}{dt}&=\rho\,\bar F(x,\theta).
 \label{eq:sup:report-fast-time-theta-ode}
\end{align}
When $\rho$ is small, $\theta$ barely changes while the fast ODE relaxes.
This suggests freezing $\theta$ and studying
\begin{equation}
 \frac{dx}{dt}=\bar H(x,\theta).
 \label{eq:sup:report-frozen-fast-ode}
\end{equation}
Assume that, for every $\theta$ in the region of interest, this frozen ODE
has a locally unique asymptotically stable equilibrium.  Denote it by
$\lambda(\theta)$, so
\begin{equation}
 \bar H(\lambda(\theta),\theta)=0.
 \label{eq:sup:report-fast-manifold}
\end{equation}
The graph $x=\lambda(\theta)$ is the fast equilibrium manifold.  After the
fast transient has died away, the pair $(x,\theta)$ should remain close to
this graph while $\theta$ evolves.

Substituting $x=\lambda(\theta)$ into the averaged slow update gives the
reduced slow field
\begin{equation}
 g(\theta)=\bar F(\lambda(\theta),\theta).
 \label{eq:sup:report-reduced-field}
\end{equation}
On slow time $s=k\varepsilon$, the reduced ODE is
\begin{equation}
 \frac{d\theta}{ds}=g(\theta).
 \label{eq:sup:report-reduced-slow-ode}
\end{equation}
Let $\theta^\star$ be its stable equilibrium, so that 
$g(\theta^\star)=0$.  The corresponding equilibrium of the full averaged
system is
\begin{equation}
 x^\star=\lambda(\theta^\star),
 \qquad
 z^\star=\begin{pmatrix}x^\star\\ \theta^\star\end{pmatrix}.
 \label{eq:sup:report-full-equilibrium}
\end{equation}
For vector-valued variables, local stability is expressed through the
Jacobians
\begin{equation}
 A=D_x\bar H(x^\star,\theta^\star),
 \qquad
 S=D_\theta g(\theta^\star).
 \label{eq:sup:report-fast-slow-jacobians}
\end{equation}

We assume that both $A$ and $S$ are Hurwitz: every eigenvalue
has strictly negative real part. For initial conditions
sufficiently close to the relevant equilibrium, this gives
bounds of the form $C e^{-ct}$ times the initial displacement
for the frozen fast ODE, and $C e^{-cs}$ times the initial
displacement for the reduced slow ODE, with suitable constants
$C,c>0$. Thus both deterministic systems approach their
equilibria at an exponential rate, measured on their respective
time scales $t=k\eta$ and $s=k\varepsilon$. We call this
exponential approach toward equilibrium \emph{exponential
relaxation}, and use \emph{relaxation} as shorthand below.
Local Hurwitz stability does not ensure an invariant law
for the unprojected stochastic recursion; global localization
is imposed in \Cref{as:sup:localization}.

\subsection{Coordinates Relative to the Fast Equilibrium Manifold}
The raw error $x-x^\star$ combines two effects: displacement away from the
fast manifold and movement of the manifold caused by $\theta-\theta^\star$.
It is more informative to introduce
\begin{equation}
 \widehat u=x-\lambda(\theta),
 \qquad
 v=\theta-\theta^\star.
 \label{eq:sup:report-nonlinear-normal-coordinate}
\end{equation}
These coordinates are initially defined where the local map $\lambda$
is available. The variable $\widehat u$ measures displacement from the
moving fast equilibrium, while $v$ measures slow displacement.
Their use throughout the stationary law requires the extension
specified in Assumption~\ref{as:block-localization}.

One update shows why this coordinate is useful. Let
$x^+=x+\eta H(Y,x,\theta)$ and
$\theta^+=\theta+\varepsilon F(Y,x,\theta)$.
When the segment $[\theta,\theta^+]$ lies in the domain of the
chosen $C^2$ map $\lambda$, we have
\begin{align}
 \widehat u^+
 &=x^+-\lambda(\theta^+) \notag\\
 &=\widehat u+\eta H(Y,x,\theta)
   -\{\lambda(\theta+\varepsilon F(Y,x,\theta))-\lambda(\theta)\}.
 \label{eq:sup:report-normal-coordinate-exact-step}
\end{align}
For $\lambda:\mathbb R^{d_\theta}\to\mathbb R^{d_x}$,
the Jacobian with respect to $\theta$ has dimensions
\[
 D_\theta\lambda(\theta)\in\mathbb R^{d_x\times d_\theta},
 \qquad D_\theta\lambda(\theta)F(Y,x,\theta)\in\mathbb R^{d_x}.
\]
If $\lambda$ has a locally Lipschitz derivative (in particular, if
$\lambda\in C^2$), Taylor expansion gives
\[
 \lambda(\theta+\varepsilon F)-\lambda(\theta)
 =\varepsilon D_\theta\lambda(\theta)F
   +O(\varepsilon^2\|F\|^2).
\]
Using $\varepsilon=\eta\rho$ yields
\begin{align}
 \widehat u^+
 &=\widehat u+\eta\bigl[H(Y,x,\theta)
     -\rho D_\theta\lambda(\theta)F(Y,x,\theta)\bigr]
   +O(\varepsilon^2\|F(Y,x,\theta)\|^2).
 \label{eq:sup:report-normal-coordinate-expanded-step}
\end{align}
The factor $\rho=\varepsilon/\eta$ is now visible in front of the slow
forcing of the fast-manifold error.   This factor must be retained. A generic bound for the stacked recursion can replace the two relaxation rates by the slow spectral gap, producing a spurious factor $1/\rho$. In the coordinate $(\widehat u,v)$, slow forcing has size $\rho$ and acts over a relaxation time of order $1/\rho$ on the fast clock; the two factors compensate.

Near $z^\star$, reserve $u$ for the linear approximation of the nonlinear
tracking error:
\begin{equation}
 u=(x-x^\star)-D_\theta\lambda(\theta^\star)(\theta-\theta^\star)
 \qquad\text{and hence}\qquad
 \widehat u=u+O(\|\theta-\theta^\star\|^2).
 \label{eq:sup:report-linear-normal-coordinate}
\end{equation}
To show that the difference between $\widehat u$ and its linear
approximation $u$ is $O(\|\theta-\theta^\star\|^2)$, let
$\delta x=x-x^\star$, $v=\theta-\theta^\star$, and
$\Lambda=D_\theta\lambda(\theta^\star)$.
Since $x^\star=\lambda(\theta^\star)$, the $C^2$ Taylor formula is
\[
 \lambda(\theta^\star+v)=x^\star+\Lambda v+r_\lambda(v),\qquad
 r_\lambda(v)=\int_0^1(1-s)
 D_\theta^2\lambda(\theta^\star+sv)[v,v]\,ds.
\]
A bound $\|D_\theta^2\lambda\|\le M$ along the segment gives
$\|r_\lambda(v)\|\le(M/2)\|v\|^2$. Therefore
\[
 \widehat u=x-\lambda(\theta)=\delta x-\Lambda v-r_\lambda(v)
             =u-r_\lambda(v).
\]
This proves \cref{eq:sup:report-linear-normal-coordinate} in vector norm; the
minus sign is absorbed by the big-$O$ notation. Differentiability alone
would give only $\widehat u=u+o(\|v\|)$.
The nonlinear version is useful for localization and transient estimates;
the linearized version is convenient when computing covariance and bias
coefficients.

\subsection{Compact Notation}

Define the stacked state and update
\begin{equation}
 z=\begin{pmatrix}x\\ \theta\end{pmatrix},
 \qquad
 G(y,z)=\begin{pmatrix}H(y,x,\theta)\\ F(y,x,\theta)\end{pmatrix},
 \label{eq:sup:report-stacked-state-field}
\end{equation}
and define the block scaling matrix
\begin{equation}
 D_\rho=
 \begin{pmatrix}
  I_{d_x}&0\\
  0&\rho I_{d_\theta}
 \end{pmatrix}.
 \label{eq:sup:report-block-scaling}
\end{equation}
Here $I_{d_x}$ and $I_{d_\theta}$ are identity matrices of dimensions
$d_x$ and $d_\theta$.  The recursions in
\cref{eq:sup:report-fast-recursion,eq:sup:report-slow-recursion} are then exactly
equivalent to
\begin{equation}
 z_{k+1}=z_k+\eta D_\rho G(Y_k,z_k).
 \label{eq:sup:report-stacked-recursion}
\end{equation}
The lower block has the smaller step $\eta\rho=\varepsilon$.

\subsection{Standing Assumptions}
\label{subsec:sup:standing-assumptions}

The core stationary assumptions are stated in
Section~\ref{sec:assumptions}; the additional coupling and block
conditions are stated separately.

\begin{description}
\item[Assumptions~\ref{as:smooth} and \ref{as:hurwitz}.]
Local smoothness and Hurwitz stability specify the deterministic
geometry and the fixed tangent transformation.
\item[Assumption~\ref{as:markov}.]
The exogenous finite-state chain satisfies the stated Doeblin condition.
\item[Assumption~\ref{as:localization}.]
A selected family of joint invariant laws satisfies
$\mathbb E\|z-z^\star\|^4\le C\eta^2$, uniformly in the step sizes.
\item[Assumption~\ref{as:taylor}.]
The update fields have globally bounded first and second derivatives.
Lemma~\ref{lem:derived-taylor-estimates} derives the expected Taylor
and Poisson-corrector bounds from these conditions.
\item[Assumption~\ref{as:coupling}.]
The $L^1$ coupling controls initialization of expected tail averages.
The blockwise mean-square statements additionally require the
coordinate $L^2$ coupling bounds and the uniform stationary $V_2$
moment displayed in Appendix~\ref{sec:sup:finite-time}.
\item[Assumption~\ref{as:block-localization}.]
The stronger raw and centered block moments, tangent cross-covariance
bound, and stationary extension and remainder conditions for $\lambda$
support supplementary blockwise fluctuation statements.
They are not assumptions of the core covariance and bias expansions.
\end{description}

The local deterministic and Markov assumptions determine the coefficients;
fourth-moment localization and update regularity justify the stationary
expansions. Additional block conditions and coupling are invoked only
for the conclusions that use them.

\subsection{Quantities of Interest and Step-Size Regimes}
Suppose that for sufficiently small $(\eta,\rho)$ the joint chain
$(Y_k,z_k)$ admits an invariant law, denoted by $\pi_{\eta,\rho}$.  The
stationary mean displacement from the averaged equilibrium is
\begin{equation}
 m_{\eta,\rho}
 =\mathbb E_{\pi_{\eta,\rho}}[z]-z^\star.
 \label{eq:sup:report-stationary-displacement}
\end{equation}
The centered stationary covariance is
\begin{equation}
 \Gamma_{\eta,\rho}
 =\mathbb E_{\pi_{\eta,\rho}}
 \left[(z-\mathbb E_{\pi_{\eta,\rho}}z)
       (z-\mathbb E_{\pi_{\eta,\rho}}z)^\top\right].
 \label{eq:sup:report-stationary-covariance}
\end{equation}
The superscript $\top$ denotes matrix transpose.  The corresponding
mean-square distance decomposes as
\begin{equation}
 \mathbb E_{\pi_{\eta,\rho}}
   \|z-z^\star\|^2
 =\operatorname{tr}(\Gamma_{\eta,\rho})
  +\|m_{\eta,\rho}\|^2,
 \label{eq:sup:report-vector-mse-decomposition}
\end{equation}
where $\operatorname{tr}$ is the matrix trace and $\|\cdot\|$ is the
Euclidean norm.   Finite-time concentration, stationary covariance, and stationary bias expansions thus address distinct properties of the recursion.

For a trajectory started from an arbitrary initial condition,
let $K\ge0$ be the burn-in length and let $N>K$. We average
the iterates $z_K,\ldots,z_{N-1}$, so the averaging window
contains $L=N-K$ iterates.
The tail
average is
\begin{equation}
 \bar z_{K,N}=\frac{1}{L}\sum_{k=K}^{N-1}z_k.
 \label{eq:sup:report-tail-average}
\end{equation}
Let $\nu$ denote the initial law of the joint chain. Adding and subtracting
the invariant mean gives the exact decomposition
\[
 \begin{aligned}
 \mathbb E_\nu[\bar z_{K,N}]-z^\star
   &=m_{\eta,\rho}+ \mathsf{T}_{K,N}(\nu),\\
\mathsf{T}_{K,N}(\nu)
   &:=\frac1L\sum_{k=K}^{N-1}
      \bigl(\mathbb E_\nu[z_k]-\mathbb E_{\pi_{\eta,\rho}}[z]\bigr).
 \end{aligned}
\]
The first term is the persistent stationary displacement. Under stationary
initialization, $\mathsf{T}_{K,N}=0$ for every $K,N$; averaging reduces sampling
variability but leaves the mean displacement unchanged. The second term is
the initialization transient.
To bound the initialization transient $\mathsf{T}_{K,N}(\nu)$, we need
an estimate of how quickly the expected iterates approach
the stationary mean. Theorem~\ref{thm:tail} bounds this contribution under the
two-clock coupling assumption.

The ratio $\rho$ determines the form of the persistent displacement.
Fixed-ratio analysis holds $\rho\in(0,\rho_0]$ fixed and lets $\eta\downarrow0$
with $\varepsilon=\rho\eta$, giving a representation
\[
 m_{\eta,\rho}=\eta b(\rho)+R_{\eta,\rho}.
\]
Singular-ratio analysis also lets $\rho\downarrow0$. At $\rho=0$ the slow
block of $D_\rho$ vanishes, the slow equation freezes, and its spectral gap
collapses.  A remainder bound at each fixed $\rho$ need not be uniform in this limit.

Appendix~\ref{sec:sup:singular-path-rr} develops the ratio expansion
and explains how its powers determine the extrapolation weights.

\section{ODE geometry of two timescales}
\label{sec:sup:report-ode-geometry}

This section explains how coordinates relative to the fast
equilibrium manifold separate fast tracking from slow
evolution. As $\rho\to0$, slow relaxation takes increasingly
long on the fast clock. We therefore keep the two relaxation
rates and the factors of $\rho$ in the coupling terms explicit.

We first derive the exact coordinate dynamics for both the
averaged ODE and the discrete recursion
(Subsection~\ref{subsec:sup:ode-exact-dynamics}).
We then linearize at the equilibrium to identify the fast
and reduced slow Jacobian blocks
(Subsection~\ref{subsec:sup:ode-linearization}).
This linear structure leads to bounds that retain both
relaxation rates, with constants independent of sufficiently
small positive $\rho$
(Subsection~\ref{subsec:sup:ode-semigroup}).
These deterministic calculations provide tools for the
stationary covariance and bias proofs developed later;
stationary existence and localization require additional
assumptions.

\subsection{Exact Fast-Manifold Dynamics}
\label{subsec:sup:ode-exact-dynamics}

We use the clocks \eqref{eq:sup:report-two-clocks}, the fast equilibrium graph
\eqref{eq:sup:report-fast-manifold}, the reduced field
\eqref{eq:sup:report-reduced-field}, and the fast-manifold coordinates
\((\widehat u,v)\) of \eqref{eq:sup:report-nonlinear-normal-coordinate}, and
derive the dynamics in these coordinates.

To distinguish the exact coordinate $\widehat u$ from its tangent
approximation $u$, use a neutral dummy variable $w\in\mathbb R^{d_x}$ and
define
\begin{align}
 h(w,v)
 &=\bar H\bigl(\lambda(\theta^\star+v)+w,\theta^\star+v\bigr),
 \label{eq:sup:report-h-definition}\\
 f(w,v)
 &=\bar F\bigl(\lambda(\theta^\star+v)+w,\theta^\star+v\bigr),
 \label{eq:sup:report-f-definition}\\
 \ell(v)&=D_\theta\lambda(\theta^\star+v)
          \in\mathbb R^{d_x\times d_\theta}.
 \label{eq:sup:report-ell-definition}
\end{align}
Here $\ell(v)$ is the Jacobian of $\lambda$ with respect to $\theta$
(equivalently, with respect to $v$); it is not a time derivative. At the
actual state $w=\widehat u$, since
$x=\lambda(\theta^\star+v)+\widehat u$.
The chain rule applied to the fast-clock ODE
\cref{eq:sup:report-fast-time-x-ode,eq:sup:report-fast-time-theta-ode} gives the exact
fast-clock equations
\begin{equation}
 \dot{\widehat u}=h(\widehat u,v)-\rho\ell(v)f(\widehat u,v),
 \qquad
 \dot v=\rho f(\widehat u,v).
 \label{eq:sup:report-exact-normal-ode}
\end{equation}
The fast-manifold identity \eqref{eq:sup:report-fast-manifold} becomes
\begin{equation}
 h(0,v)=0
 \quad\text{for every admissible }v.
 \label{eq:sup:report-h-zero-on-manifold}
\end{equation}
Thus a tangential displacement \(v\) cannot create an order-one forcing in
the normal equation while \(\widehat u=0\).  The only such forcing visible in
\eqref{eq:sup:report-exact-normal-ode} carries the explicit factor \(\rho\).
 A one-timescale norm bound for the stacked system obscures these factors.

There is an exact discrete counterpart.  Abbreviate
\begin{equation}
 H_k=H(Y_k,x_k,\theta_k),
 \qquad
 F_k=F(Y_k,x_k,\theta_k).
\end{equation}
Set $\Delta_k=\theta_{k+1}-\theta_k=\eta\rho F_k$. Whenever the
segment $[\theta_k,\theta_{k+1}]$ lies in the domain of the chosen
$C^2$ extension of $\lambda$, applying the fundamental theorem of calculus twice gives
\[
 \begin{aligned}
 \lambda(\theta_k+\Delta_k)-\lambda(\theta_k)
 &=\int_0^1D_\theta\lambda(\theta_k+s\Delta_k)\Delta_k\,ds\\
 &=D_\theta\lambda(\theta_k)\Delta_k
   +\int_0^1\int_0^s D_\theta^2\lambda(\theta_k+t\Delta_k)
        [\Delta_k,\Delta_k]\,dt\,ds\\
 &=D_\theta\lambda(\theta_k)\Delta_k
   +\int_0^1(1-s)D_\theta^2\lambda(\theta_k+s\Delta_k)
        [\Delta_k,\Delta_k]\,ds.
 \end{aligned}
\]
The last equality interchanges the order of integration. Subtracting this
identity from $x_{k+1}-x_k=\eta H_k$ gives
\begin{align}
 \widehat u_{k+1}-\widehat u_k
 ={}&\eta H_k-\eta\rho D_\theta\lambda(\theta_k)F_k
 \nonumber\\
 &-\eta^2\rho^2\int_0^1(1-s)
 D_\theta^2\lambda(\theta_k+s\eta\rho F_k)[F_k,F_k]\,ds,
 \label{eq:sup:report-exact-discrete-normal}\\
 v_{k+1}-v_k={}&\eta\rho F_k.
 \label{eq:sup:report-exact-discrete-tangent}
\end{align}
Here $D_\theta^2\lambda(\theta)[a,b]\in\mathbb R^{d_x}$ is a
vector-valued bilinear second derivative. The factor $1/2$ is already
contained in $\int_0^1(1-s)\,ds=1/2$. No derivative of $F_k$ enters: $F_k$ is held fixed as the direction of the one-step increment. This identity involves no asymptotic approximation. 

Because $\widehat u_k=x_k-\lambda(\theta_k)$, updating
$\theta_k$ also changes the reference fast equilibrium
$\lambda(\theta_k)$. The first-order contribution of this
change to the tracking error is
$-\eta\rho D_\theta\lambda(\theta_k)F_k$,
which has prefactor $\eta\rho$. The curvature remainder
in \eqref{eq:sup:report-exact-discrete-normal} accounts for
the nonlinear part of this change and has prefactor
$\eta^2\rho^2$. After division by $\eta$ to express increments
per unit of fast time, these prefactors become $\rho$ and
$\eta\rho^2$, respectively. The curvature remainder therefore
contains an additional step-size factor as well as two powers
of $\rho$. If the remaining coefficient factors are uniformly
controlled, integration against a slow kernel gives the
scaling estimates
\[
 \int_0^\infty e^{-c\rho t}\rho\,dt=\frac1c,
 \qquad
 \int_0^\infty e^{-c\rho t}\eta\rho^2\,dt
 =\frac{\eta\rho}{c}.
\]

The preceding integrals illustrate how a factor $\rho$ in
a forcing term offsets the factor $1/\rho$ arising from slow
relaxation. The stationary covariance proof verifies this
cancellation directly in the fixed tangent coordinates
$(u,v)$. When the covariance equation is partitioned according
to these coordinates, its $(v,v)$ block corresponds to
covariances between components of the slow variable
$v=\theta-\theta^\star$. We call this the
\emph{slow--slow block}.

The remainder in this block is $O(\rho\eta^{3/2})$, as shown
in \eqref{eq:cov-residual-blocks}.
The inverse estimate in
Lemma~\ref{lem:cov-lyapunov-inverse} weights this remainder
by $1/\rho$. The factor $\rho$ in the remainder therefore
cancels this weight, giving an $O(\eta^{3/2})$ contribution
to the covariance error, uniformly in sufficiently small
positive $\rho$.

\subsection{Linearization at the Equilibrium}
\label{subsec:sup:ode-linearization}

Recall the four Jacobian blocks at $z^\star$ introduced in
Subsection~\ref{sec:model-recursion}:
\begin{equation}
 A=D_x\bar H(z^\star),\quad
 B=D_\theta\bar H(z^\star),\quad
 C=D_x\bar F(z^\star),\quad
 D=D_\theta\bar F(z^\star).
 \label{eq:sup:report-jacobian-blocks}
\end{equation}

Differentiating \eqref{eq:sup:report-fast-manifold} at
\(\theta^\star\) gives
\begin{equation}
 A\Lambda+B=0,
 \qquad
 \Lambda=D\lambda(\theta^\star)=-A^{-1}B.
 \label{eq:sup:report-manifold-tangent}
\end{equation}
The inverse \(A^{-1}\) is available when the frozen fast equilibrium is
locally exponentially stable.  The reduced field was defined in \cref{eq:sup:report-reduced-field} by
$g(\theta)=\bar F(\lambda(\theta),\theta)$. Its chain rule is
\[
 D_\theta g(\theta^\star)
 =D_x\bar F(x^\star,\theta^\star)D_\theta\lambda(\theta^\star)
  +D_\theta\bar F(x^\star,\theta^\star).
\]
Substituting the Jacobian blocks gives
\begin{equation}
 S=Dg(\theta^\star)=C\Lambda+D=D-CA^{-1}B.
 \label{eq:sup:report-reduced-jacobian}
\end{equation}
For the local calculation, use the tangent coordinates \((u,v)\) of
\eqref{eq:sup:report-linear-normal-coordinate}.  Write $\delta x=x-x^\star$ and $\delta\theta=\theta-\theta^\star$.
Retaining the first-order terms of the fast-clock ODE at $z^\star$ gives
the first-variation system
\begin{equation}
 \frac{d}{dt}
 \begin{pmatrix}\delta x\\ \delta\theta\end{pmatrix}
 =
 \begin{pmatrix}A&B\\ \rho C&\rho D\end{pmatrix}
 \begin{pmatrix}\delta x\\ \delta\theta\end{pmatrix},
 \label{eq:sup:report-raw-linearization}
\end{equation}
For finite deviations with $r=\|(\delta x,\delta\theta)\|$, the omitted
remainders are $O(r^2)$ in the first row and $\rho O(r^2)$ in the second.
Now substitute $\delta x=u+\Lambda v$. The lower row gives
\[
 \dot v=\rho C(u+\Lambda v)+\rho Dv
       =\rho Cu+\rho(C\Lambda+D)v=\rho Cu+\rho Sv.
\]
Because $\Lambda$ is fixed in this tangent-coordinate calculation,
$\dot u=\dot{\delta x}-\Lambda\dot v$. Thus
\[
 \begin{aligned}
 \dot u
 &=A(u+\Lambda v)+Bv-\Lambda\{\rho C(u+\Lambda v)+\rho Dv\}\\
 &=(A-\rho\Lambda C)u+
   \{A\Lambda+B-\rho\Lambda(C\Lambda+D)\}v\\
 &=(A-\rho\Lambda C)u-\rho\Lambda Sv.
 \end{aligned}
\]
The identities $A\Lambda+B=0$ and $C\Lambda+D=S$ therefore give
\begin{equation}
 \frac{d}{dt}
 \begin{pmatrix}u\\v\end{pmatrix}
 =L_\rho
 \begin{pmatrix}u\\v\end{pmatrix},
 \qquad
 L_\rho=
 \begin{pmatrix}
 A-\rho\Lambda C&-\rho\Lambda S\\
 \rho C&\rho S
 \end{pmatrix}.
 \label{eq:sup:report-normal-linearization}
\end{equation}
Every off-diagonal interaction in these coordinates has a factor \(\rho\).
At \(\rho=0\), the \(u\)-block is the stable matrix \(A\) and the \(v\)-block
is frozen. For small positive $\rho$, the diagonal slow block
$\rho S$ is Hurwitz, and the complete slow equation is
$\dot v=\rho Sv+\rho Cu$. The term $\rho Cu$ describes
the coupling from the fast deviation to the slow variable.

\subsection{Two-Clock Semigroup Bounds}
\label{subsec:sup:ode-semigroup}

We now quantify relaxation in the linearized system:
$e^{tL_\rho}$ maps the initial deviation $(u(0),v(0))$
to $(u(t),v(t))$. The proof separates fast and slow invariant
subspaces to obtain bounds that retain both relaxation rates
and the factors $\rho$ in their interactions, uniformly for
small positive $\rho$, for both the ODE and its Euler
discretization.

\begin{appendixsemigrouprestatement}[Two-clock semigroup bounds; restatement]
Under \Cref{as:sup:smooth,as:sup:hurwitz}, there exist constants $C_*,c,\rho_0>0$,
independent of $\rho$, such that for $0<\rho\le\rho_0$ and $t\ge0$,
\begin{equation}
 \begin{aligned}
 \|[e^{tL_\rho}]_{uu}\|
   &\le C_*\bigl(e^{-ct}+\rho e^{-c\rho t}\bigr),\\
 \|[e^{tL_\rho}]_{uv}\|+\|[e^{tL_\rho}]_{vu}\|
   &\le C_*\rho\bigl(e^{-ct}+e^{-c\rho t}\bigr),\\
 \|[e^{tL_\rho}]_{vv}\|&\le C_*e^{-c\rho t}.
 \end{aligned}
 \label{eq:sup:report-semigroup-orders}
\end{equation}
The same bounds hold for $(I+\eta L_\rho)^k$, with $t=\eta k$,
uniformly for sufficiently small $\eta$ and integers $k\ge0$.
\end{appendixsemigrouprestatement}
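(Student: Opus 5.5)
The plan is a block-diagonalization of $L_\rho$ by a near-identity similarity transformation, followed by separate estimates for the fast and slow blocks. Write
\[
L_\rho=\begin{pmatrix}A_\rho&\rho E_\rho\\ \rho C&\rho S\end{pmatrix},\qquad A_\rho=A-\rho\Lambda C,\quad E_\rho=-\Lambda S.
\]
The first step is to seek invariant subspaces of the form $u=\rho P_\rho v$ (slow) and $v=\rho N_\rho u$ (fast).

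Invariance of the slow graph gives the Riccati equation $A_\rho P_\rho+E_\rho=\rho P_\rho(C\rho P_\rho+S)$. At $\rho=0$ this is $AP_0=\Lambda S$, which is solvable because $A$ is invertible. The implicit function theorem then gives $P_\rho$ analytic and bounded for $\rho\le\rho_0$.

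For the fast graph, invariance gives $\rho N_\rho(A_\rho+\rho^2E_\rho N_\rho)=\rho C+\rho^2 S N_\rho$. Dividing by $\rho$, this becomes $N_\rho A_\rho+\rho N_\rho E_\rho\rho N_\rho-\rho S N_\rho=C$. At $\rho=0$ it reads $N_0A=C$, again solvable. Hence $N_\rho$ is bounded as well.

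With $W=\begin{pmatrix}I&\rho P_\rho\\ \rho N_\rho&I\end{pmatrix}$, which equals $I+O(\rho)$ and so has uniformly bounded inverse, we get $W^{-1}L_\rho W=\diag(F_\rho,\rho S_\rho)$. Here $F_\rho=A+O(\rho)$ and $S_\rho=S+O(\rho)$.

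Next, $F_\rho$ and $S_\rho$ are uniformly Hurwitz for small $\rho$. So there are constants $C,c$ (take $c$ below half the spectral abscissa margin of $A$ and $S$) with
\[
\|e^{tF_\rho}\|\le Ce^{-ct},\qquad \|e^{t\rho S_\rho}\|\le Ce^{-c\rho t}.
\]
A uniform choice is available either via fixed Lyapunov matrices for $A$ and $S$ that remain valid under $O(\rho)$ perturbations, or by continuity of the exponential bound.

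Then $e^{tL_\rho}=W\,\diag(e^{tF_\rho},e^{t\rho S_\rho})W^{-1}$, and multiplying out the blocks gives the claimed pattern. Write $W^{-1}=\begin{pmatrix}I&-\rho P\\ -\rho N&I\end{pmatrix}(I-\rho^2PN)^{-1}$-type blocks.
\begin{itemize}
\item The $uu$ block is $e^{tF_\rho}+O(\rho^2)e^{t\rho S_\rho}$, up to $O(\rho^2)$ corrections multiplying $e^{tF_\rho}$.
\item The $uv$ and $vu$ blocks each contain one factor $\rho$ times one of the two exponentials.
\item The $vv$ block is $e^{t\rho S_\rho}+O(\rho^2)e^{tF_\rho}$, which is at most $C e^{-c\rho t}$ since $e^{-ct}\le e^{-c\rho t}$.
\end{itemize}
This gives bounds that are even slightly sharper than those stated, since $\rho^2\le\rho$.

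For the discrete statement, the same similarity gives $(I+\eta L_\rho)^k=W\,\diag\bigl((I+\eta F_\rho)^k,(I+\eta\rho S_\rho)^k\bigr)W^{-1}$. It remains to show two bounds:
\[
\|(I+\eta F_\rho)^k\|\le Ce^{-c\eta k},\qquad \|(I+\varepsilon S_\rho)^k\|\le Ce^{-c\varepsilon k}.
\]
Both follow from the Lyapunov matrices. If $F^\top Q+QF\le-2c_0Q$, then
\[
(I+\eta F)^\top Q(I+\eta F)\le(1-2c_0\eta+O(\eta^2))Q\le e^{-c_0\eta}Q
\]
for $\eta\le\eta_0$. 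The same argument applies with step $\varepsilon\le\eta$, which is equally small. Uniformity in $\rho$ holds because $Q$ can be fixed from $A$ and $S$.

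I expect the main obstacle to be the fast-graph equation for $N_\rho$: one must check that the $1/\rho$ normalization makes it regular at $\rho=0$, which is exactly where the $\rho$-factors in the off-diagonal blocks are used. A second point is ensuring that the Lyapunov matrices, and hence all constants, are chosen uniformly in $\rho$.
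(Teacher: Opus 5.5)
Your proposal is correct and follows essentially the same route as the paper: invariant fast and slow graphs from the implicit function theorem, the near-identity similarity that block-diagonalizes $L_\rho$ into $A+O(\rho)$ and $\rho(S+O(\rho))$, fixed Lyapunov matrices for uniform fast and slow decay, and block multiplication (giving the same sharper $O(\rho^2)$ corner terms the paper notes), with the same Lyapunov contraction argument for $(I+\eta L_\rho)^k$. The only cosmetic difference is that you factor the graph maps as $\rho P_\rho$ and $\rho N_\rho$ and solve rescaled Riccati equations, whereas the paper applies the implicit function theorem to the unscaled maps $X_\rho,R_\rho$, which vanish at $\rho=0$, and concludes directly that $X_\rho,R_\rho=O(\rho)$.
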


The block subscripts specify the output and input coordinates,
in that order. Thus $[e^{tL_\rho}]_{uv}$ gives the contribution
to $u(t)$ from the initial slow deviation $v(0)$, while
$[e^{tL_\rho}]_{vu}$ gives the contribution to $v(t)$ from
the initial fast deviation $u(0)$. The factors $e^{-ct}$ and
$e^{-c\rho t}$ describe fast and slow relaxation, respectively.

\begin{proof}
Write the blocks of $L_\rho$ as
\[
 L_\rho=\begin{pmatrix}A_\rho& \mathcal B_\rho\\C_\rho&D_\rho'\end{pmatrix},
 \quad A_\rho=A-\rho\Lambda C,\quad \mathcal B_\rho=-\rho\Lambda S,
 \quad C_\rho=\rho C,\quad D_\rho'=\rho S.
\]
We seek invariant subspaces of the form $u=X_\rho v$ and
$v=R_\rho u$: a solution starting in either subspace remains
there. These describe the perturbed slow and fast subspaces,
respectively.
Their invariance equations are
\begin{align*}
 A_\rho X_\rho+ \mathcal B_\rho
   -X_\rho(C_\rho X_\rho+D_\rho')&=0,\\
 C_\rho+D_\rho'R_\rho
   -R_\rho(A_\rho+ \mathcal B_\rho R_\rho)&=0.
\end{align*}
At $\rho=0$, the derivatives in $X$ and $R$ are respectively
$X\mapsto AX$ and $R\mapsto-RA$, both invertible.
The implicit-function theorem therefore gives
$X_\rho,R_\rho=O(\rho)$.  The matrix
\[
 Q_\rho=\begin{pmatrix}I&X_\rho\\R_\rho&I\end{pmatrix}
\]
is uniformly invertible for small $\rho$, and
\[
 Q_\rho^{-1}L_\rho Q_\rho
 =\operatorname{diag}(A_f(\rho),\rho S_s(\rho)),
 \quad A_f=A_\rho+ \mathcal B_\rho R_\rho=A+O(\rho),
 \quad S_s=S+CX_\rho=S+O(\rho).
\]
Fixed quadratic Lyapunov functions for the Hurwitz matrices $A$ and $S$
remain strict Lyapunov functions for these perturbations. Consequently,
\[
 \|e^{tA_f}\|\le C e^{-ct},\qquad
 \|e^{\rho tS_s}\|\le C e^{-c\rho t},
\]
with constants independent of $\rho$.
Finally, writing $Q_\rho=I+E_\rho$, where $E_\rho$ has only off-diagonal blocks of size $O(\rho)$, the Neumann series $Q_\rho^{-1}=I-E_\rho+E_\rho^2-\cdots$ shows that its diagonal corrections are $O(\rho^2)$ and its off-diagonal blocks are $O(\rho)$. Multiplication in $e^{tL_\rho}=Q_\rho\operatorname{diag}(e^{tA_f},e^{\rho tS_s})Q_\rho^{-1}$ then preserves the two clocks and the small cross factors. Multiplying the block matrices proves
\eqref{eq:sup:report-semigroup-orders}; in fact, the upper-left slow contribution and
lower-right fast contribution are both $O(\rho^2)$.

For completeness, the same Lyapunov functions give, uniformly
for small $\eta$, contraction factors $1-c\eta$ and
$1-c\eta\rho$ for the squared Lyapunov norms of
$I+\eta A_f$ and $I+\eta\rho S_s$, respectively.
Indeed, if $P_f$ is the fixed fast
Lyapunov matrix, then
\[
 (I+\eta A_f)^\top P_f(I+\eta A_f)-P_f
 =\eta(A_f^\top P_f+P_fA_f)+\eta^2A_f^\top P_fA_f
 \preceq-c\eta P_f,
\]
after decreasing $\eta_0$; the slow argument is identical with
$\eta$ replaced by $\eta\rho$. Thus the same block bounds hold for
$(I+\eta L_\rho)^k$, with $t=\eta k$.
\end{proof}

The upper equation of \cref{eq:sup:report-normal-linearization} is
\[
 \dot u=(A-\rho\Lambda C)u-\rho\Lambda S v.
\]
To isolate the upper-right channel, consider a one-way scalar model with
coupling $\beta$ and stable rates $-a,-s$.  Prescribe the
uncoupled slow response $v(t)=e^{-\rho s t}v_0$ and solve
\begin{equation}
 \dot u=-a u+\rho\beta v,
 \qquad a,s>0.
\end{equation}
Then
\begin{align}
 u(t)
 &=e^{-at}u_0+\rho\beta v_0
   \int_0^t e^{-a(t-r)}e^{-\rho s r}\,dr
 \nonumber\\
 &=e^{-at}u_0+\rho\beta v_0
   \frac{e^{-\rho s t}-e^{-at}}{a-\rho s}.
 \label{eq:sup:report-scalar-slow-to-fast}
\end{align}
For small $\rho$, the denominator is bounded away from zero and the slow
tail in $u$ has amplitude $O(\rho)$.
The reverse channel is illustrated by a scalar coupling $\gamma$ in
$\dot v=-\rho s v+\rho\gamma u$. Prescribing the uncoupled fast response
$u(t)=e^{-at}u_0$ gives
\[
 \dot v+\rho s v=\rho\gamma u_0e^{-at},\qquad
 \frac{d}{dt}\{e^{\rho st}v(t)\}=\rho\gamma u_0e^{(\rho s-a)t}.
\]
Integrating from zero to $t$, for $a\ne\rho s$,
\[
 e^{\rho st}v(t)-v_0
 =\rho\gamma u_0\frac{e^{(\rho s-a)t}-1}{\rho s-a}.
\]
Multiplication by $e^{-\rho st}$ yields
\begin{equation}
 v(t)=e^{-\rho s t}v_0+\rho\gamma u_0
 \frac{e^{-\rho s t}-e^{-at}}{a-\rho s},
 \label{eq:sup:report-scalar-fast-to-slow}
\end{equation}
which again has cross amplitude $O(\rho)$ for sufficiently small $\rho$.
At the exceptional resonant value $a=\rho s$, the continuous limiting
formula is
\[
 v(t)=e^{-at}\{v_0+\rho\gamma u_0t\}.
\]
The same denominator caveat applies to
\cref{eq:sup:report-scalar-slow-to-fast}; at resonance its convolution is
$t e^{-at}$. One may choose $\rho_0<a/s$ to avoid resonance throughout the
small-ratio range. These one-way formulas illustrate the two cross bounds
in \cref{eq:sup:report-semigroup-orders}.

The slow exponential integrates to \(O(1/\rho)\) on the fast clock:
\begin{equation}
 \int_0^\infty e^{-c\rho t}\,dt=\frac{1}{c\rho}.
 \label{eq:sup:report-slow-integral}
\end{equation}
If a forcing term has size $O(\rho)$ with a constant
independent of $\rho$, integration against this slow
exponential gives an $O(1)$ bound. Discarding the factor
$\rho$ before integration can therefore introduce an
unnecessary factor $1/\rho$ in the estimate.

Returning to iteration number \(k\), \(t=k\eta\) changes the two exponentials
into
\begin{equation}
 e^{-c\eta k}
 \quad\text{and}\quad
 e^{-c\rho\eta k}=e^{-c\varepsilon k}.
 \label{eq:sup:report-discrete-two-exponentials}
\end{equation}
These linear bounds identify the fast and slow relaxation
rates relevant to initialization effects. Quantitative
initialization bounds for the nonlinear stochastic recursion
are developed in Appendix~\ref{sec:sup:finite-time} under the
two-clock coupling conditions. Stationary bias calculations
start from an invariant law and do not require an
initialization estimate.

\section{Poisson tools and Taylor estimates}
\label{app:poisson-tools}
We first compute the long-run covariance $Q$ of the noise
$\xi_k=G(Y_k,z^\star)$, obtained by holding the algorithmic
state at the equilibrium $z^\star$ while the Markov chain
continues to evolve
(Subsection~\ref{subsec:sup:Q}).
Because successive Markov inputs are correlated, the
accumulated noise fluctuations depend on correlations across
time, which $Q$ records. This matrix supplies the noise term
in the Lyapunov equation for the leading stationary covariance
of the iterates. That covariance enters the quadratic Taylor
terms of the averaged drift and therefore determines the
curvature contribution to the stationary mean bias.

Next, for a fixed algorithmic state $z$, we use the Poisson
equation to decompose the centered Markov update into a
martingale difference and a telescoping correction, whose
sum reduces to initial and final terms
(Subsection~\ref{subsec:sup:poisson}).
We then derive Taylor estimates for the nonlinear remainder
terms and for changes in the Poisson solution as the
algorithmic state evolves
(Subsection~\ref{app:taylor-estimates}).
Appendix~\ref{sec:sup:report-markov-covariance} uses these
tools to establish the stationary covariance and mean-bias
expansions.

\subsection{Long-Run Covariance of the Markov Noise}
\label{subsec:sup:Q}

Recall the compact recursion
\[
 z_{k+1}=z_k+\eta D_\rho G(Y_k,z_k),
\]
where \(z=(x,\theta)\), \(D_\rho=\operatorname{diag}(I,\rho I)\), and
\(\rho=\varepsilon/\eta\).  Average the update over the stationary law
\(\mu\) of the exogenous Markov chain:
\begin{equation}
 \bar G(z)=\sum_y\mu(y)G(y,z),
 \qquad 
 \mbox{ define }
 \qquad
 \widetilde G(y,z)=G(y,z)-\bar G(z).
 \label{eq:sup:report-centered-field}
\end{equation}
At the averaged
equilibrium \(z^\star\), we have \(\bar G(z^\star)=0\).
The update noise
frozen at that point is therefore
\begin{equation}
 \xi_k=G(Y_k,z^\star)=\widetilde G(Y_k,z^\star).
 \label{eq:sup:report-frozen-increment}
\end{equation}
For each fixed $z$, $\widetilde G(\cdot,z)$ is centered
under $\mu$. 

In \Cref{subsec:sup:Q,subsec:sup:poisson}, expectations refer to the stationary Markov path:
\(Y_0\sim\mu\), followed by transitions with kernel \(P\).  In particular,
\(\mathbb E\xi_k=0\).   This convention specifies the noise covariance without requiring stationary initialization of the algorithm.

With Markov data the variance of a long sum contains all cross-time
covariances, so the relevant matrix is
\begin{equation}
 Q=\lim_{m\to\infty}\frac{1}{m}
 \mathbb E\left[
 \left(\sum_{k=0}^{m-1}\xi_k\right)
 \left(\sum_{k=0}^{m-1}\xi_k\right)^{\!\top}
 \right].
 \label{eq:sup:report-long-run-covariance}
\end{equation}
Thus \(Q\) is the covariance accumulated per step.  Write \(S_m=\sum_{k=0}^{m-1}\xi_k\) and
\(K_h=\mathbb E[\xi_0\xi_h^\top]\).  Since the sum is centered,
\[
 \operatorname{Cov}(S_m)=\mathbb E[S_mS_m^\top]
 =\sum_{i=0}^{m-1}\sum_{j=0}^{m-1}\mathbb E[\xi_i\xi_j^\top].
\]
Stationarity makes each diagonal term equal to \(K_0\).  For a positive
lag \(h\), the \(m-h\) pairs with \(j=i+h\) each contribute \(K_h\),
whereas the \(m-h\) reversed pairs each contribute \(K_h^\top\).  Thus
the exact finite-sum identity is
\[
 \frac1m\mathbb E[S_mS_m^\top]
 =K_0+\sum_{h=1}^{m-1}
       \left(1-\frac hm\right)(K_h+K_h^\top).
\]

Suppose \(\sum_{h\ge1}\|K_h\|<\infty\).  For every fixed \(h\), the
factor \(\mathbf1_{\{h<m\}}(1-h/m)\) tends to one and lies in \([0,1]\).
The norm of the corresponding summand is bounded by \(2\|K_h\|\), a
summable sequence.
The dominated convergence theorem for series, applied
entrywise with summable dominating sequence $2\|K_h\|$,
therefore gives
\begin{equation}
 Q=K_0+\sum_{h=1}^\infty
 \bigl(K_h+K_h^\top\bigr).
 \label{eq:sup:report-autocovariance-sum}
\end{equation}
The summability condition follows from
Assumption~\ref{as:sup:markov}.
Indeed, on the finite state space the centered function
\(\xi(y)=G(y,z^\star)\) is bounded.  Geometric mixing gives constants
\(C<\infty\) and \(r\in(0,1)\) such that
\[
 \sup_y\|(P^h\xi)(y)\|\le Cr^h\max_y\|\xi(y)\|.
\]
Conditioning on \(Y_0\) then gives
\[
 K_h=\sum_y\mu(y)\xi(y)(P^h\xi)(y)^\top,
 \qquad
 \|K_h\|\le Cr^h\bigl(\max_y\|\xi(y)\|\bigr)^2.
\]
Consequently, \eqref{eq:sup:report-autocovariance-sum} holds here.
Stationarity alone would not suffice: for the stationary alternating chain
\(Y_{k+1}=-Y_k\), with equally likely initial signs and \(\xi_k=Y_k\),
the partial sums are bounded and \(Q=0\), but \(K_h=(-1)^h\) makes the
displayed infinite series divergent.  That periodic chain is excluded by
geometric mixing.

By contrast, \(\mathbb E[m^{-1}\sum_{k<m}\xi_k\xi_k^\top]=K_0\) omits every
off-diagonal term, so \(Q\) generally differs from the one-step
covariance.

\subsection{Martingale Decomposition via the Poisson Equation}
\label{subsec:sup:poisson}

For a function \(a\) of the Markov state, define
\[
 (Pa)(y)=\sum_{y'}P(y,y')a(y').
\]
This is the conditional mean of \(a(Y_{k+1})\) given \(Y_k=y\).

For each fixed value of \(z\) consider the following equation
\begin{equation}
 \mathcal U(y,z)-(P\mathcal U)(y,z)=\widetilde G(y,z),
 \qquad
 \sum_y\mu(y)\mathcal U(y,z)=0.
 \label{eq:sup:report-poisson-equation}
\end{equation}
This is the Poisson equation. The second condition chooses one solution by
forcing its stationary mean to be zero.  For a finite-state geometrically
ergodic chain, the solution is
\begin{equation}
 \mathcal U(\,\cdot\,,z)
 =\sum_{j=0}^\infty P^j\widetilde G(\,\cdot\,,z).
 \label{eq:sup:report-poisson-series}
\end{equation}

Define the one-step innovation
\begin{equation}
 \zeta_{k+1}(z)
 =\mathcal U(Y_{k+1},z)-(P\mathcal U)(Y_k,z).
 \label{eq:sup:report-martingale-innovation}
\end{equation}
Let \(\mathcal F_k\) contain the iterate and the complete Markov history up
to time \(k\).  Exogeneity means that, given \(Y_k\), the next Markov state
does not depend on the rest of \(\mathcal F_k\).  Therefore
\begin{equation}
 \mathbb E[\zeta_{k+1}(z)\mid\mathcal F_k]=0.
 \label{eq:sup:report-martingale-difference}
\end{equation}
Indeed, conditional on \(Y_k\), the expected value of
\(\mathcal U(Y_{k+1},z)\) is exactly \((P\mathcal U)(Y_k,z)\).

The Poisson equation can now be rearranged as
\begin{equation}
 \widetilde G(Y_k,z)
 =\mathcal U(Y_k,z)-\mathcal U(Y_{k+1},z)+\zeta_{k+1}(z).
 \label{eq:sup:report-coboundary-decomposition}
\end{equation}
Summing this identity makes the first two terms cancel across time:
\begin{equation}
 \sum_{k=0}^{m-1}\widetilde G(Y_k,z)
 =\mathcal U(Y_0,z)-\mathcal U(Y_m,z)
  +\sum_{k=0}^{m-1}\zeta_{k+1}(z).
 \label{eq:sup:report-telescoped-poisson}
\end{equation}
The endpoint difference remains bounded in mean square. 
The martingale sum has root-mean-square size $O(\sqrt m)$.
Fix \(z=z^\star\) and write
\[
 \begin{gathered}
 M_m=\sum_{k=0}^{m-1}\zeta_{k+1}(z^\star),\qquad
 B_m=\mathcal U(Y_0,z^\star)-\mathcal U(Y_m,z^\star),\\
 Q_\zeta=\mathbb E[\zeta_1(z^\star)\zeta_1(z^\star)^\top].
 \end{gathered}
\]
Thus \(S_m=M_m+B_m\). For \(i<j\), the vector \(\zeta_i(z^\star)\)
is measurable with respect to \(\mathcal F_{j-1}\), so
\[
 \mathbb E[\zeta_i(z^\star)\zeta_j(z^\star)^\top]
 =\mathbb E\!\left[
   \zeta_i(z^\star)
   \mathbb E[\zeta_j(z^\star)^\top\mid\mathcal F_{j-1}]
   \right]=0.
\]
The expectation with the indices reversed is also zero:
\[
 \mathbb E[\zeta_j(z^\star)\zeta_i(z^\star)^\top]
 =\left(
   \mathbb E[\zeta_i(z^\star)\zeta_j(z^\star)^\top]
   \right)^\top
 =0.
\]
Since the Markov input chain is stationary ($Y_0\sim\mu$),
the pair $(Y_k,Y_{k+1})$ has the same distribution for every
$k$. At the fixed state $z^\star$, each innovation is the
same function of this pair. Consequently,
\[
 \mathbb E[
 \zeta_{k+1}(z^\star)\zeta_{k+1}(z^\star)^\top]
 =Q_\zeta
 \qquad\text{for every }k\ge0.
\]
Expanding the square therefore yields
\[
 \begin{gathered}
 \mathbb E[M_mM_m^\top]=mQ_\zeta,\qquad
 \mathbb E\|M_m\|^2=m\operatorname{tr}Q_\zeta,\\
 \|M_m\|_{L^2}=\sqrt{m\operatorname{tr}Q_\zeta},
 \end{gathered}
\]
where \(\|X\|_{L^2}=(\mathbb E\|X\|^2)^{1/2}\). This proves the
\(O(\sqrt m)\) assertion in \(L^2\). Since
\(\mathbb E(a^\top M_m)^2=m\,a^\top Q_\zeta a\), the scale is nondegenerate
in a direction \(a\) if and only if \(a^\top Q_\zeta a>0\).

The correction $B_m$ consists of the values of the Poisson
solution at the initial and final Markov states, $Y_0$ and
$Y_m$. We now show that its contribution to the normalized
covariance $m^{-1}\mathbb E[S_mS_m^\top]$, including its
cross terms with $M_m$, tends to zero as $m\to\infty$.
Put
\(C_U=2\|\mathcal U(Y_0,z^\star)\|_{L^2}<\infty\).  Stationarity and the
triangle inequality give \(\|B_m\|_{L^2}\le C_U\), uniformly in \(m\).
Keeping every cross term gives
\[
 \frac1m\mathbb E[S_mS_m^\top]-Q_\zeta
 =\frac1m\mathbb E[M_mB_m^\top+B_mM_m^\top+B_mB_m^\top].
\]
The inequality \(\|xy^\top\|\le\|x\|\|y\|\) and Cauchy--Schwarz give
\[
 \left\|\frac1m\mathbb E[S_mS_m^\top]-Q_\zeta\right\|
 \le \frac{2C_U\sqrt{\operatorname{tr}Q_\zeta}}{\sqrt m}
       +\frac{C_U^2}{m}\ \longrightarrow\ 0.
\]
We have proved the following.

\begin{lemma}[Poisson representation]\label{lem:sup:poisson-Q}
Under \Cref{as:sup:markov}, the innovations \eqref{eq:sup:report-martingale-innovation}
are martingale differences, and the long-run covariance
\eqref{eq:sup:report-long-run-covariance} satisfies
\begin{equation}
 Q=\mathbb E\bigl[\zeta_1(z^\star)\zeta_1(z^\star)^\top\bigr].
 \label{eq:sup:report-Q-martingale-covariance}
\end{equation}
\end{lemma}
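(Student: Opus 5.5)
The lemma has two claims. The first is that the innovations $\zeta_{k+1}(z)$ are martingale differences. The second is that the long-run covariance $Q$ equals the one-step innovation covariance $\mathbb E[\zeta_1(z^\star)\zeta_1(z^\star)^\top]$. We prove them in that order, working throughout with the stationary input chain $Y_0\sim\mu$.

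\textbf{Step 1: the Poisson solution.} Under Assumption~\ref{as:markov}, the finite-state chain satisfies a Doeblin minorization. This gives uniform geometric ergodicity: $\sup_y\|(P^j a)(y)\|\le C r^j\max_y\|a(y)\|$ for every $\mu$-centered $a$, with some $r\in(0,1)$. Applied to $\widetilde G(\cdot,z)$, the series \eqref{eq:sup:report-poisson-series} therefore converges absolutely. A one-line telescoping check shows that it solves \eqref{eq:sup:report-poisson-equation}, and $\mu P=\mu$ gives the normalization $\mu\mathcal U=0$.

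\textbf{Step 2: the martingale-difference property.} By exogeneity, $\mathbb E[a(Y_{k+1})\mid\mathcal F_k]=(Pa)(Y_k)$ for every function $a$ on $\mathsf Y$. For fixed $z$, taking $a=\mathcal U(\cdot,z)$ gives $\mathbb E[\zeta_{k+1}(z)\mid\mathcal F_k]=0$. Rearranging the Poisson equation then yields the coboundary identity \eqref{eq:sup:report-coboundary-decomposition}. Summing it gives $S_m=M_m+B_m$, with martingale part $M_m=\sum_{k<m}\zeta_{k+1}(z^\star)$ and endpoint term $B_m=\mathcal U(Y_0,z^\star)-\mathcal U(Y_m,z^\star)$.

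\textbf{Step 3: the covariance identity.} We expand $m^{-1}\mathbb E[S_mS_m^\top]$ in three pieces.
\begin{itemize}
\item The cross products $\mathbb E[\zeta_i\zeta_j^\top]$ with $i\neq j$ vanish by conditioning on $\mathcal F_{j-1}$. Stationarity of the pairs $(Y_k,Y_{k+1})$ then gives $\mathbb E[M_mM_m^\top]=mQ_\zeta$.
\item Since $\mathsf Y$ is finite, $B_m$ is uniformly bounded. Cauchy--Schwarz bounds the remaining terms by $O(m^{-1/2})$.
\end{itemize}
Hence the limit defining $Q$ exists and equals $Q_\zeta$.

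For the autocovariance form in \eqref{eq:Q}, we use the exact finite-sum identity $m^{-1}\mathbb E[S_mS_m^\top]=K_0+\sum_{h<m}(1-h/m)(K_h+K_h^\top)$. The geometric bound $\|K_h\|\le Cr^h$ holds because $K_h=\sum_y\mu(y)\xi(y)(P^h\xi)(y)^\top$. Dominated convergence for series then identifies the limit.

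\textbf{Main obstacle.} The only genuinely nontrivial ingredient is the uniform geometric decay from the Doeblin condition. It controls both the convergence of the Poisson series and the summability of $K_h$. We would cite the standard coupling argument, which gives $\|P^{n}(y,\cdot)-\mu\|_{TV}\le(1-\beta)^{\lfloor n/n_0\rfloor}$. The rest is bookkeeping.
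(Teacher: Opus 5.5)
Your proposal is correct and follows essentially the same route as the paper. The common steps are the Poisson series solution under Doeblin mixing, the martingale-difference property from exogeneity, the split $S_m=M_m+B_m$ with orthogonal, stationary innovations giving $\mathbb E[M_mM_m^\top]=mQ_\zeta$, a Cauchy--Schwarz bound of order $m^{-1/2}$ on the bounded endpoint term, and, for the autocovariance form, the geometric bound on $K_h$ combined with dominated convergence for series.
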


\paragraph{Example: A Symmetric Two-State Chain.}
Let \(Y_k\in\{-1,1\}\) be a stationary symmetric two-state
Markov chain, with
\begin{equation}
 \mathbb E[Y_{k+1}\mid Y_k]=qY_k,
 \qquad -1<q<1.
 \label{eq:sup:report-two-state-correlation}
\end{equation}
For an integer $h\ge0$, the Markov property and the tower
property of conditional expectation give
\[
 \mathbb E[Y_{k+h+1}\mid Y_k]
 =\mathbb E\!\left[
     \mathbb E[Y_{k+h+1}\mid\mathcal F_{k+h}]
     \mid Y_k\right]
 =q\mathbb E[Y_{k+h}\mid Y_k].
\]
Starting at \(h=0\), induction yields
\(\mathbb E[Y_{k+h}\mid Y_k]=q^hY_k\), hence
\[
 \mathbb E[Y_kY_{k+h}]
 =\mathbb E\{Y_k\mathbb E[Y_{k+h}\mid Y_k]\}
 =q^h\mathbb E Y_k^2=q^h.
\]
Since \(\mathbb EY_k=0\), this is also the autocovariance.  For negative
integer lags, stationarity gives \(q^{|h|}\).

Fix \(\omega\in\mathbb R^d\), \(d=d_x+d_\theta\), and take
\(\widetilde G(y,z^\star)=\omega y\).  The frozen noise is \(\xi_k=\omega Y_k\), and the Poisson solution is
\begin{equation}
 \mathcal U(y,z^\star)=\frac{\omega y}{1-q},
 \label{eq:sup:report-two-state-poisson}
\end{equation}
because \((P\mathcal U)(y,z^\star)=q\omega y/(1-q)\).  Hence
\begin{equation}
 \zeta_{k+1}(z^\star)=\frac{\omega(Y_{k+1}-qY_k)}{1-q}.
 \label{eq:sup:report-two-state-martingale}
\end{equation}
Since \(Y_k^2=1\) and \(\mathbb E[Y_kY_{k+1}]=q\), its covariance is
\begin{align}
 \mathbb E[\zeta_{k+1}(z^\star)\zeta_{k+1}(z^\star)^\top]
 &=\frac{1-2q^2+q^2}{(1-q)^2}\omega\omega^\top
 =\frac{1+q}{1-q}\omega\omega^\top.
 \label{eq:sup:report-two-state-Q-martingale}
\end{align}
The autocovariance sum gives the same answer:
\begin{equation}
 Q=\left(1+2\sum_{h=1}^\infty q^h\right)\omega\omega^\top
 =\frac{1+q}{1-q}\omega\omega^\top.
 \label{eq:sup:report-two-state-Q-series}
\end{equation}
Here \(K_0=\omega\omega^\top\) for every \(q\), whereas \(Q\) depends
strongly on \(q\).

\subsection{Taylor Estimates for the Averaged Drift and Poisson Corrector}
\label{app:taylor-estimates}

The stationary covariance and mean calculations require control of
two expansions: the averaged drift around the equilibrium, and the
Poisson corrector across one algorithmic update. We also need to
control the error from evaluating the corrector's response
coefficient at the equilibrium.

Write
\[
 e_k=z_k-z^\star,\qquad
 \Delta z_k=z_{k+1}-z_k=\eta D_\rho G(Y_k,z_k),
\]
and let $\mathcal U$ be the normalized solution of
\eqref{eq:poisson}.

For the averaged drift, $R^{(2)}_{\bar G,k}$ is the error left
after retaining the linear Taylor term, while
$R^{(3)}_{\bar G,k}$ is the error left after also retaining the
quadratic term. Since $\bar G(z^\star)=0$ and
$J=D\bar G(z^\star)$, define
\begin{align*}
 R^{(2)}_{\bar G,k}
 &=\bar G(z_k)-Je_k,\\
 R^{(3)}_{\bar G,k}
 &=\bar G(z_k)-Je_k
   -\tfrac12D^2\bar G(z^\star)[e_k,e_k].
\end{align*}
Here $D^2\bar G(z^\star)[e_k,e_k]$ denotes the vector obtained
by applying the Hessian of each component of $\bar G$ to
$(e_k,e_k)$. Near the equilibrium, these remainders satisfy
\[
 \|R^{(2)}_{\bar G,k}\|=O(\|e_k\|^2),\qquad
 \|R^{(3)}_{\bar G,k}\|=O(\|e_k\|^3).
\]
Thus the superscripts indicate the local remainder orders.

For the Poisson corrector, we expand
$\mathcal U(Y_{k+1},\cdot)$ from $z_k$ to $z_{k+1}$, holding
the Markov argument $Y_{k+1}$ fixed. The error in its linear
Taylor approximation is
\begin{align*}
 R_{\mathcal U,k}
 &=
 \mathcal U(Y_{k+1},z_{k+1})
 -\mathcal U(Y_{k+1},z_k)\\
 &\quad
 -D_z\mathcal U(Y_{k+1},z_k)\Delta z_k.
\end{align*}
This remainder is quadratic in the one-step displacement
$\Delta z_k$ under the bounded second-derivative condition.

The linear corrector increment contains the response coefficient
\[
 D_z\mathcal U(Y_{k+1},z_k)D_\rho G(Y_k,z_k).
\]
To compute its leading contribution, we evaluate both factors
at $z^\star$. Define the resulting coefficient error by
\begin{align*}
 \Delta_{\mathcal U,k}
 &=
 D_z\mathcal U(Y_{k+1},z_k)D_\rho G(Y_k,z_k)\\
 &\quad
 -D_z\mathcal U(Y_{k+1},z^\star)
      D_\rho G(Y_k,z^\star).
\end{align*}
The Markov arguments retain their original values; only the
iterate argument is replaced by the equilibrium. Consequently,
the corrector increment has the exact decomposition
\[
\begin{aligned}
 &\mathcal U(Y_{k+1},z_{k+1})
  -\mathcal U(Y_{k+1},z_k)\\
 &\qquad=
 \eta D_z\mathcal U(Y_{k+1},z^\star)
       D_\rho G(Y_k,z^\star)
 +\eta\Delta_{\mathcal U,k}
 +R_{\mathcal U,k}.
\end{aligned}
\]
This identity separates the equilibrium response, the error from
freezing its coefficient, and the Taylor remainder over one step.

The following lemma derives the expectation bounds below from
the regularity and stationary localization assumptions.
Expectations are taken along a stationary trajectory of the
joint process $(Y_k,z_k)$, and the constant $C$ is independent
of sufficiently small positive $\eta$ and $\rho$:
\begin{align}
 \E\|R^{(2)}_{\bar G,k}\|&\le C\eta,
 &\E\|R^{(3)}_{\bar G,k}\|&\le C\eta^{3/2},\nonumber\\
 \E\|R_{\mathcal U,k}\|&\le C\eta^2,
 &\E\|\Delta_{\mathcal U,k}\|&\le C\sqrt\eta,\nonumber\\
 \E\int_0^1&\|D_z\mathcal U(Y_{k+1},z_k+s\Delta z_k)\|\,
 \|D_\rho G(Y_k,z_k)\|\,ds\le C.\label{eq:taylor-remainder-bounds}
\end{align}
\begin{equation}
 \E\|e\|^j\le C\eta^{j/2},\qquad j=2,3.\label{eq:raw-error-moments}
\end{equation}

\begin{lemma}[Taylor estimates from fourth-moment localization]
\label{lem:derived-taylor-estimates}
Suppose that the state space of the exogenous chain is finite and its
transition kernel satisfies Assumption~\ref{as:markov}. Suppose that
$G(y,\cdot)\in C^2(\mathbb R^{d_x+d_\theta})$ has uniformly bounded
first and second derivatives, that $\bar G$ is $C^3$ on a neighborhood
of $z^\star$, and that a stationary law satisfies
$\E\|e_k\|^4\le C\eta^2$ uniformly for $0<\rho\le\rho_0\le1$.
Then every estimate in \eqref{eq:taylor-remainder-bounds} and
\eqref{eq:raw-error-moments} follows with a constant independent of
$\eta$ and $\rho$. The coordinate-change estimates
\eqref{eq:lambda-remainder-bounds} also follow if the chosen extension
of $\lambda$ has bounded second derivative on the segments joining
$\theta^\star$ to the stationary values of $\theta$.
\end{lemma}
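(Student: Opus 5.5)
Everything follows from the fourth-moment bound $\E\|e\|^4\le C\eta^2$, Hölder/Lyapunov inequalities, and a split of each remainder into a near region $\{\|e\|\le r\}$, where the local $C^3$ smoothness of $\bar G$ applies, and a far region, where only the global bounds of Assumption~\ref{as:taylor} are used. First, \eqref{eq:raw-error-moments} is Lyapunov's inequality: $\E\|e\|^j\le(\E\|e\|^4)^{j/4}\le C\eta^{j/2}$ for $j=2,3$. The far region is small: Markov's inequality gives $\Pp(\|e\|>r)\le r^{-4}C\eta^2$. For the Poisson solution we use the series $\mathcal U(\cdot,z)=\sum_jP^j\widetilde G(\cdot,z)$. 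Under Assumption~\ref{as:markov}, $\|P^j a\|_\infty\le C\kappa^j\|a\|_\infty$ for $\mu$-centered $a$ on the finite space $\mathsf Y$. Differentiating termwise in $z$ (legitimate because $D_z\widetilde G$ and $D_z^2\widetilde G$ are bounded by $2K$ and the series converges geometrically) shows that $D_z\mathcal U$ and $D_z^2\mathcal U$ are bounded uniformly in $(y,z)$, with constants depending only on $K$ and the mixing constants.

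\emph{Drift remainders.} On $\{\|e\|\le r\}$, Taylor's theorem for $\bar G\in C^3$ gives $\|R^{(2)}_{\bar G}\|\le C\|e\|^2$ and $\|R^{(3)}_{\bar G}\|\le C\|e\|^3$. Off this ball, the bounded first and second derivatives of $G$, and hence of $\bar G$, give $\|R^{(2)}_{\bar G}\|\le C\|e\|^2$ globally. For $R^{(3)}$ on the far region we use $\|R^{(3)}\|\le C\|e\|^2\le C\|e\|^4/r^2$. Taking expectations yields $C\eta$ for $R^{(2)}$ and $C(\eta^{3/2}+\eta^2)$ for $R^{(3)}$.

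\emph{Corrector remainders.} Since $\|D_\rho\|\le1$ (as $\rho\le1$) and $\|G(y,z)\|\le\|G(y,z^\star)\|+K\|e\|$, we have $\|\Delta z_k\|\le C\eta(1+\|e_k\|)$. The bounded $D_z^2\mathcal U$ then gives $\|R_{\mathcal U,k}\|\le C\|\Delta z_k\|^2\le C\eta^2(1+\|e_k\|^2)$, whose expectation is $O(\eta^2)$. For $\Delta_{\mathcal U,k}$, add and subtract $D_z\mathcal U(Y_{k+1},z^\star)D_\rho G(Y_k,z_k)$. The first difference is bounded by $\sup\|D_z^2\mathcal U\|\,\|e_k\|\,C(1+\|e_k\|)$, and the second by $\sup\|D_z\mathcal U\|\,K\|e_k\|$. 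Hence $\E\|\Delta_{\mathcal U,k}\|\le C(\E\|e\|+\E\|e\|^2)\le C\sqrt\eta$. The last integral in \eqref{eq:taylor-remainder-bounds} is at most $\sup\|D_z\mathcal U\|\cdot C(1+\E\|e\|)\le C$.

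\emph{Coordinate estimates.} For the $\lambda$ statements, the integral form of $r_\lambda$, together with the assumed bound on $D^2\lambda$ along the segments, gives $\|r_\lambda(v)\|\le (M/2)\|v\|^2$. Combining this with $\E\|v\|^j\le\E\|e\|^j$ gives the corresponding moment bounds. The same integral-remainder argument applied to \eqref{eq:sup:report-exact-discrete-normal} gives $O(\eta^2\rho^2\E\|F\|^2)$.

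\emph{Main obstacle.} The delicate point is uniformity. The constants must not depend on $\rho$ (guaranteed because $\rho$ enters only through $\|D_\rho\|\le1$) nor on $\eta$. In particular, the far-region contributions must be absorbed through the fourth-moment bound rather than through any local smoothness, since the iterates need not stay near $z^\star$. A second point needing care is the justification of termwise differentiation of the Poisson series, which relies on finiteness of $\mathsf Y$ and the uniform geometric contraction of $P^j$ on centered functions.
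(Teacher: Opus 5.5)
Your proposal is correct and follows essentially the same route as the paper. Both arguments use H\"older/Lyapunov for the raw moments and a near/far split with $\|R^{(3)}_{\bar G,k}\|\le C\|e_k\|^2\le C\|e_k\|^4/r^2$ off the ball; they also share uniform derivative bounds on $\mathcal U$ from the geometrically convergent Poisson inverse, and the growth bound $\|\Delta z_k\|\le C\eta(1+\|e_k\|)$ for the corrector terms. The differences are cosmetic: the paper avoids termwise differentiation by writing $\mathcal U(\cdot,z)=\mathscr R_P\widetilde G(\cdot,z)$ with a fixed finite matrix $\mathscr R_P$, and its add-and-subtract choice for $\Delta_{\mathcal U,k}$ yields the slightly sharper pointwise bound $C\|e_k\|$ instead of your $C(\|e_k\|+\|e_k\|^2)$. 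Your closing remark about \eqref{eq:sup:report-exact-discrete-normal} is not part of the lemma, and it would need a $D^2\lambda$ bound on the update segments $[\theta_k,\theta_{k+1}]$, which the lemma does not assume.
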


\begin{proof}
All norms below are Euclidean norms or the induced multilinear norms.
For $1\le p\le4$, H\"older's inequality gives
\[
 \E\|e_k\|^p\le (\E\|e_k\|^4)^{p/4}\le C\eta^{p/2}.
\]
The operator $I-P$ is not invertible on the full space of functions. It is invertible on the subspace of $\mu$-centered functions under Assumption~\ref{as:markov}. The normalization $\mu\mathcal U=0$ in \eqref{eq:poisson} selects precisely this inverse. Here are the details.

\textbf{Centering and the mixing bound.}
For a function $a$ on $\mathsf Y$, define
\[
 (\Pi a)(y):=\mu a,\qquad
 \|a\|_\infty:=\max_{y\in\mathsf Y}\|a(y)\|.
\]
Thus $\Pi$ projects onto the constant functions, and $\ker\Pi$ is the centered subspace. Invariance of $\mu$ gives
\[
 \Pi^2=\Pi,\qquad P\Pi=\Pi P=\Pi.
\]
We may decrease the minorization constant, if necessary, so that $0<\beta<1$. The Doeblin condition then gives the decomposition
\[
 P^{n_0}(y,\cdot)
 =\beta\nu(\cdot)+(1-\beta)R(y,\cdot)
\]
for a Markov kernel $R$. 
For probability measures $\alpha,\alpha'$ on $\mathsf Y$, we use
the total variation distance
\[
 \|\alpha-\alpha'\|_{\rm TV}
 :=\sup_{E\subseteq\mathsf Y}|\alpha(E)-\alpha'(E)|
 =\frac12\sum_{y\in\mathsf Y}|\alpha(y)-\alpha'(y)|.
\]
The common $\beta\nu$ term cancels when comparing two probability distributions, and a Markov kernel does not increase total variation distance. Therefore
\[
 \|\alpha P^{n_0}-\alpha'P^{n_0}\|_{\rm TV}
 \le(1-\beta)\|\alpha-\alpha'\|_{\rm TV}.
\]
Iterating this inequality, using $\mu P=\mu$, and treating the remaining fewer than $n_0$ steps by nonexpansiveness yields
\[
 \|P^n(y,\cdot)-\mu\|_{\rm TV}
 \le(1-\beta)^{\lfloor n/n_0\rfloor}.
\]
Consequently, for scalar, vector, or derivative-tensor-valued $a$,
\[
 \|(P^n-\Pi)a\|_\infty
 \le2(1-\beta)^{\lfloor n/n_0\rfloor}\|a\|_\infty.
\]
For vector or tensor values this follows by the triangle inequality applied to the finite signed sum over $\mathsf Y$.

\textbf{The bounded Poisson solution operator.}
Define the fixed linear operator
\[
 \mathscr R_P:=\sum_{n=0}^{\infty}(P^n-\Pi).
\]
The preceding bound gives convergence in operator norm and the explicit estimate
\[
 \|\mathscr R_P\|_{\infty\to\infty}
 \le2\sum_{n=0}^{\infty}(1-\beta)^{\lfloor n/n_0\rfloor}
 =\frac{2n_0}{\beta}
 =:C_P.
\]
For the partial sum through $N$, telescoping gives
\[
 (I-P)\sum_{n=0}^{N}(P^n-\Pi)=I-P^{N+1}.
\]
Taking the limit, and using the centering identities, gives
\[
 (I-P)\mathscr R_P=\mathscr R_P(I-P)=I-\Pi,
 \qquad
 \Pi\mathscr R_P=\mathscr R_P\Pi=0.
\]
Hence, for every centered $h$, $u=\mathscr R_Ph$ satisfies $(I-P)u=h$ and $\mu u=0$. It is unique: if $(I-P)v=0$ and $\mu v=0$, then $v=P^nv$ for every $n$, while the mixing bound forces $\|P^nv\|_\infty\to0$. Equivalently,
\[
 \mathscr R_P=(I-P+\Pi)^{-1}-\Pi.
\]
Indeed, the preceding identities show that $\Pi+\mathscr R_P$ is the two-sided inverse of $I-P+\Pi$. In particular, on a centered right-hand side, $\mathscr R_Ph=(I-P+\Pi)^{-1}h$.

\textbf{Derivatives of the Poisson corrector.}
We now differentiate the Poisson corrector $\mathcal U(y,z)$
with respect to $z$, keeping $y$ fixed. Our goal is to derive
uniform bounds on its first and second derivatives from the
corresponding derivative bounds on the update field $G$.
The equation in \eqref{eq:poisson} is
\[
 (I-P)\mathcal U(\cdot,z)
 =\widetilde G(\cdot,z),\qquad
 \widetilde G(\cdot,z)=(I-\Pi)G(\cdot,z),\qquad
 \mu\mathcal U(\cdot,z)=0.
\]
Thus
\[
 \mathcal U(\cdot,z)=\mathscr R_P\widetilde G(\cdot,z).
\]
Both $P$ and $\mu$ are independent of $z$, so $\mathscr R_P$ is a fixed finite-dimensional matrix. Differentiating its finite matrix-vector product requires no interchange of an infinite series with a derivative. For $j=1,2$,
\[
 D_z^j\mathcal U(\cdot,z)
 =\mathscr R_P\bigl(D_z^jG(\cdot,z)-\mu D_z^jG(\cdot,z)\bigr).
\]
The operator acts on the Markov-state index of these derivative tensors. Assumption~\ref{as:taylor} gives $\sup_{y,z}\|D_z^jG(y,z)\|\le K$, and hence
\[
 \sup_z\|D_z^jG(\cdot,z)-\mu D_z^jG(\cdot,z)\|_\infty
 \le2K,\qquad
 \sup_{y,z}\|D_z^j\mathcal U(y,z)\|\le2C_PK.
\]
This is the precise meaning of commuting with differentiation.

\textbf{Growth bounds for the update field and Poisson corrector.}
We show that the update field $G(y,z)$ and the Poisson
corrector $\mathcal U(y,z)$ grow at most linearly with the
distance $\|z-z^\star\|$, uniformly in the Markov state $y$.
These growth bounds also control the one-step increment
$\Delta z_k=\eta D_\rho G(Y_k,z_k)$.
Set $M_G:=\max_y\|G(y,z^\star)\|<\infty$. Since $\bar G(z^\star)=0$, we have
$\widetilde G(\cdot,z^\star)=G(\cdot,z^\star)$ and therefore
$\max_y\|\mathcal U(y,z^\star)\|\le C_PM_G$.
Writing $e=z-z^\star$, the fundamental theorem of calculus along the segment $z^\star+se$, $0\le s\le1$, gives
\[
 \begin{aligned}
 \|G(y,z)\|&\le M_G+K\|e\|,\\
 \|\mathcal U(y,z)\|&\le C_PM_G+2C_PK\|e\|.
 \end{aligned}
\]
Adding these inequalities proves
\[
 \|G(y,z)\|+\|\mathcal U(y,z)\|
 \le C(1+\|z-z^\star\|).
\]
Finally, $\|D_\rho\|\le1$ for $0<\rho\le1$, so the exact update satisfies
\[
 \|\Delta z_k\|
 =\eta\|D_\rho G(Y_k,z_k)\|
 \le\eta\bigl(M_G+K\|e_k\|\bigr)
 \le C\eta(1+\|e_k\|).
\]
These pointwise bounds hold with constants independent of
$\eta$ and $\rho$. We now combine them with the stationary
fourth-moment bound to derive the expectation estimates
stated above.

Assumption~\ref{as:taylor} bounds the second derivatives of
$G(y,z)$ uniformly over all $y$ and $z$. The averaged field
$\bar G$ inherits this bound, so Taylor's theorem gives
\[
 \|R^{(2)}_{\bar G,k}\|\le C\|e_k\|^2.
\]
Choose $\delta>0$ such that the closed ball of radius $\delta$ about
$z^\star$ lies inside the $C^3$ neighborhood. On this ball,
$\|R^{(3)}_{\bar G,k}\|\le C\|e_k\|^3$; outside it, the global
Hessian bound still gives
$\|R^{(3)}_{\bar G,k}\|\le C\|e_k\|^2$. Consequently,
\[
 \E\|R^{(3)}_{\bar G,k}\|
 \le C\E\|e_k\|^3
       +C\delta^{-2}\E\|e_k\|^4
 \le C\eta^{3/2}.
\]
For each realized value of $Y_{k+1}$, Taylor's formula along the segment from $z_k$ to $z_k+\Delta z_k$ gives the exact remainder
\[
 R_{\mathcal U,k}
 =\int_0^1(1-s)D_z^2\mathcal U
       (Y_{k+1},z_k+s\Delta z_k)
       [\Delta z_k,\Delta z_k]\,ds.
\]
Since the second derivative of $\mathcal U$ is uniformly bounded,
\[
 \|R_{\mathcal U,k}\|\le C\|\Delta z_k\|^2
\]
for every realization.
Consequently, the preceding linear-growth estimate and the fourth-moment assumption imply
\[
 \begin{aligned}
 \E\|\Delta z_k\|^2
 &\le C\eta^2\E(1+\|e_k\|)^2\\
 &\le C\eta^2(1+\E\|e_k\|^2)
 \le C\eta^2(1+\eta)
 \le C\eta^2.
 \end{aligned}
\]
The last inequality uses sufficiently small $\eta$, and all constants are uniform in $\rho$. Because the Taylor bound holds for every realization, dependence between $Y_{k+1}$ and the iterates does not require an independence argument.

For the frozen-coefficient error, add and subtract
$D_z\mathcal U(Y_{k+1},z_k)D_\rho G(Y_k,z^\star)$.
The bounded first derivative of $G$, the bounded first and second
derivatives of $\mathcal U$, and the bounded values
$G(y,z^\star)$ imply
\[
 \|\Delta_{\mathcal U,k}\|\le C\|e_k\|,
 \qquad
 \E\|\Delta_{\mathcal U,k}\|\le C\sqrt\eta.
\]
For every $s\in[0,1]$, the bounded first derivative of
$\mathcal U$, the linear growth of $G$, and
$\|D_\rho\|\le1$ give
\[
 \begin{aligned}
 &\|D_z\mathcal U(Y_{k+1},z_k+s\Delta z_k)\|\,
   \|D_\rho G(Y_k,z_k)\|\\
 &\qquad\le C(1+\|e_k\|).
 \end{aligned}
\]
The derivative bound applies throughout the update segment, so no bounded-support assumption on that segment is needed. Integrating over an interval of length one and taking expectations yields
\[
 \begin{aligned}
 &\E\int_0^1
 \|D_z\mathcal U(Y_{k+1},z_k+s\Delta z_k)\|
 \|D_\rho G(Y_k,z_k)\|\,ds\\
 &\qquad\le C(1+\E\|e_k\|)\le C.
 \end{aligned}
\]
By H\"older's inequality and the stationary fourth-moment
bound \eqref{eq:cov-raw-fourth},
\[
 \E\|e_k\|
 \le \bigl(\E\|e_k\|^4\bigr)^{1/4}
 \le C\sqrt\eta.
\]
Finally, suppose that the second derivative of the chosen
extension of $\lambda$ is uniformly bounded along the segments
joining $\theta^\star$ to stationary values of $\theta$, as
specified in the last part of the lemma. Taylor's theorem gives
\[
 \|r_\lambda(v_k)\|\le C\|v_k\|^2.
\]
Since $v_k$ is a block of $e_k$, H\"older's inequality and
the fourth-moment bound give
\[
 \begin{aligned}
 \E\|r_\lambda(v_k)\|
 &\le C\E\|v_k\|^2
 \le C\bigl(\E\|e_k\|^4\bigr)^{1/2}
 \le C\eta,\\
 \E\|r_\lambda(v_k)\|^2
 &\le C\E\|v_k\|^4
 \le C\E\|e_k\|^4
 \le C\eta^2.
 \end{aligned}
\]
\end{proof}

\section{Uniform stationary covariance and mean bias}
\label{sec:sup:report-markov-covariance}

This section derives the stationary covariance and mean-bias
expansions for the nonlinear recursion.
Subsection~\ref{subsec:sup:lyapunov} studies the linearized,
Poisson-corrected recursion and identifies the leading covariance
coefficient through a Lyapunov equation.
Subsection~\ref{app:covariance-proof} then proves that this
coefficient approximates the stationary covariance of the
original nonlinear recursion, with an error bound that remains
uniform as $\rho\downarrow0$, establishing
Proposition~\ref{prop:covariance}.
Finally, Subsection~\ref{sec:sup:bias-expansion} first establishes
the a priori $O(\eta)$ mean bound, then combines the covariance
approximation with the stationary drift balance and
Poisson-corrector identity to obtain the mean-bias expansion
in Theorem~\ref{thm:fixed-ratio}.

\subsection{Linear Covariance Balance and Its Small-Step Limit}
\label{subsec:sup:lyapunov}
Our goal is to identify the leading stationary covariance
coefficient from the linear approximation near equilibrium.
We first apply a Poisson correction to express the Markov
noise as a martingale innovation plus a remainder.
Linearizing the drift and evaluating the innovation at
equilibrium then gives a recursion whose stationary covariance
balances linear contraction against noise input. The innovation
has zero conditional mean, so its cross terms with the current
state vanish in this balance. Rescaling the covariance by
$\eta$ and taking $\eta\downarrow0$ at fixed $\rho>0$ yields
a Lyapunov equation for the leading coefficient.
Subsection~\ref{app:covariance-proof} proves that the stationary
covariance of the nonlinear recursion is
$\eta\Sigma_\rho+O(\eta^{3/2})$, with an error constant
independent of $\rho$ for sufficiently small positive
$\eta$ and $\rho$.

Let
\begin{equation}
 J=D\bar G(z^\star)
 =\begin{pmatrix}A&B\\C&D\end{pmatrix},
 \qquad
 J_\rho=D_\rho J
 =\begin{pmatrix}A&B\\\rho C&\rho D\end{pmatrix}.
 \label{eq:sup:report-Jrho}
\end{equation}
Recall that \(e_k=z_k-z^\star\).
Using the Poisson decomposition, we define the corrected error
\begin{equation}
 \widehat e_k=e_k+\eta D_\rho\mathcal U(Y_k,z_k).
 \label{eq:sup:report-corrected-error}
\end{equation}
Substitute the decomposition
\eqref{eq:sup:report-coboundary-decomposition} into the recursion and use the
definition of \(\widehat e_{k+1}\).  Before making any approximation, this
gives
\begin{align}
 \widehat e_{k+1}
 ={}&\widehat e_k+\eta D_\rho\bar G(z_k)
       +\eta D_\rho\zeta_{k+1}(z_k)\nonumber\\
 &+\eta D_\rho\bigl\{\mathcal U(Y_{k+1},z_{k+1})
                      -\mathcal U(Y_{k+1},z_k)\bigr\}.
 \label{eq:sup:report-exact-corrected-recursion}
\end{align}
By the recursion~\eqref{eq:rec} and the growth bound
in~\eqref{eq:cov-poisson-bounds},
\[
 \|z_{k+1}-z_k\|
 \le C\eta(1+\|e_k\|).
\]
The fourth-moment localization~\eqref{eq:cov-raw-fourth}
therefore gives
$\|z_{k+1}-z_k\|_{L^2}\le C\eta$.
Using the Lipschitz bound for $\mathcal U$
in~\eqref{eq:cov-poisson-bounds}, we obtain
\[
 \left\|
 \eta D_\rho
 \bigl\{\mathcal U(Y_{k+1},z_{k+1})
       -\mathcal U(Y_{k+1},z_k)\bigr\}
 \right\|_{L^2}
 \le C\eta\|z_{k+1}-z_k\|_{L^2}
 \le C\eta^2.
\]
Thus the last line
of~\eqref{eq:sup:report-exact-corrected-recursion}
is $O_{L^2}(\eta^2)$, uniformly in sufficiently small
positive $\eta$ and $\rho$. Linearize
\(\bar G\), replace \(e_k\) by \(\widehat e_k\) at leading order, and freeze
the innovation at \(z^\star\).  The leading recursion is then
\begin{equation}
 \widehat e_{k+1}=(I+\eta J_\rho)\widehat e_k
       +\eta D_\rho\zeta_{k+1}(z^\star).
 \label{eq:sup:report-linear-corrected-recursion}
\end{equation}
This equation defines the auxiliary linear recursion
studied in Lemma~\ref{lem:sup:linear-covariance-limit}.
For fixed sufficiently small $\rho>0$, the lemma shows that
its stationary covariance, divided by $\eta$, converges as
$\eta\downarrow0$ to $\Sigma_\rho$, the unique solution of
the associated Lyapunov equation. This identifies the leading
covariance coefficient for the linear model.

In forming this model, we have omitted the nonlinear drift
remainder, the drift correction arising from replacing $e_k$
by $\widehat e_k$, the change of the Poisson corrector over
one update, and the variation of the innovation away from
equilibrium. Subsection~\ref{app:covariance-proof} controls
these contributions to establish the covariance approximation
for the original nonlinear recursion uniformly as
$\rho\downarrow0$.

\begin{lemma}[Small-step covariance of the linear recursion]
\label{lem:sup:linear-covariance-limit}
Fix a sufficiently small $\rho>0$ such that $J_\rho$ is Hurwitz.
For sufficiently small $\eta>0$, let
$\widehat\Gamma^{\mathrm{lin}}_{\eta,\rho}$ be the stationary
covariance of the linear recursion
\eqref{eq:sup:report-linear-corrected-recursion}. Then
\[
 \lim_{\eta\downarrow0}
 \frac{\widehat\Gamma^{\mathrm{lin}}_{\eta,\rho}}{\eta}
 =\Sigma_\rho,
\]
where $\Sigma_\rho$ is the unique solution of
\begin{equation}
 J_\rho\Sigma_\rho+\Sigma_\rho J_\rho^\top
       +D_\rho QD_\rho=0.
 \label{eq:sup:report-stationary-lyapunov}
\end{equation}
\end{lemma}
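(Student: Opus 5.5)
We write the stationary covariance of the linear recursion \eqref{eq:sup:report-linear-corrected-recursion} explicitly and derive a discrete Lyapunov equation from it. Set $M_\eta=I+\eta J_\rho$ and $Q_\zeta=\mathbb E[\zeta_1(z^\star)\zeta_1(z^\star)^\top]$. The innovations $\zeta_{k+1}(z^\star)$ are martingale differences by Lemma~\ref{lem:sup:poisson-Q}. Under the stationary input law their second moment equals $Q$, by the same lemma.

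\textbf{Step 1: existence and the discrete balance.} Since $J_\rho$ is Hurwitz, a quadratic Lyapunov matrix $P_\rho\succ0$ with $J_\rho^\top P_\rho+P_\rho J_\rho\preceq -2c_\rho P_\rho$ gives
\[
M_\eta^\top P_\rho M_\eta\preceq(1-c_\rho\eta)P_\rho
\]
for $\eta$ small; this is the same computation as in the proof of Proposition~\ref{prop:semigroup}. Hence $\widehat e_k$ admits a stationary version (jointly with the input chain) with finite second moments, namely the series $\sum_{j\ge0}M_\eta^j\eta D_\rho\zeta_{k-j}(z^\star)$. The innovation has zero conditional mean given $\mathcal F_k$ and $\widehat e_k$ is $\mathcal F_k$-measurable, so the cross terms vanish. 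Stationarity of $(Y_k,Y_{k+1})$ then yields
\[
\widehat\Gamma=M_\eta\widehat\Gamma M_\eta^\top+\eta^2D_\rho QD_\rho .
\]
The stationary mean is zero, since $\mathbb E\zeta=0$ and $M_\eta-I$ is invertible. Therefore $\widehat\Gamma$ is also the covariance, and
\[
\widehat\Gamma=\eta^2\sum_{j\ge0}M_\eta^jD_\rho QD_\rho(M_\eta^\top)^j .
\]

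\textbf{Step 2: rescale.} Put $\Sigma^\eta=\widehat\Gamma/\eta$. Expanding the balance and dividing by $\eta^2$ gives
\[
J_\rho\Sigma^\eta+\Sigma^\eta J_\rho^\top+D_\rho QD_\rho=-\eta J_\rho\Sigma^\eta J_\rho^\top .
\]
We need $\Sigma^\eta$ bounded as $\eta\downarrow0$ at fixed $\rho$. This follows from the series, because
\[
\|M_\eta^j\|\le C_\rho(1-c_\rho\eta)^{j/2},
\]
so that
\[
\eta\sum_j\|M_\eta^j\|^2\le C_\rho\,\eta/(c_\rho\eta)=C_\rho/c_\rho .
\]

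\textbf{Step 3: pass to the limit.} The Lyapunov operator $\mathcal L(X)=J_\rho X+XJ_\rho^\top$ is invertible, since the spectra of $J_\rho$ and $-J_\rho$ are disjoint. Its inverse is
\[
\mathcal L^{-1}(Y)=-\int_0^\infty e^{tJ_\rho}Ye^{tJ_\rho^\top}dt .
\]
Subtracting \eqref{eq:sup:report-stationary-lyapunov} gives
\[
\Sigma^\eta-\Sigma_\rho=-\eta\,\mathcal L^{-1}(J_\rho\Sigma^\eta J_\rho^\top)=O_\rho(\eta)\to0 .
\]
Uniqueness of $\Sigma_\rho$ is the same invertibility statement.

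The only mildly delicate point is Step 2, the uniform bound on $\Sigma^\eta$: the discrete contraction must be measured in the $P_\rho$-norm, since plain operator-norm contraction need not hold. Constants may depend on $\rho$ here, because the lemma is stated at fixed $\rho$. The $\rho$-uniform version is deferred to Subsection~\ref{app:covariance-proof}.
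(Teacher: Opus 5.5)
Your proposal is correct and follows essentially the same route as the paper: martingale orthogonality gives the exact discrete balance $\widehat\Gamma=(I+\eta J_\rho)\widehat\Gamma(I+\eta J_\rho)^\top+\eta^2D_\rho QD_\rho$, which you rescale by $\eta$ and compare with the continuous Lyapunov equation through invertibility of $\mathcal L_\rho(X)=J_\rho X+XJ_\rho^\top$. The only difference is the final step: the paper observes that $X\mapsto\mathcal L_\rho(X)+\eta J_\rho XJ_\rho^\top$ is itself invertible for small $\eta$, with inverse converging to $\mathcal L_\rho^{-1}$, so your a priori bound in Step~2 (via the series and the $P_\rho$-norm contraction) is not needed, though it is valid and gives an explicit $O_\rho(\eta)$ rate.
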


\begin{proof}
Write $A_{\eta,\rho}=I+\eta J_\rho$ and
$\mathcal Q_\rho=D_\rho QD_\rho$. Since $J_\rho$ is Hurwitz,
$A_{\eta,\rho}$ is stable for all sufficiently small $\eta$.
The frozen innovations are bounded because the Markov state
space is finite. With the Markov chain in stationarity,
stability therefore gives a stationary solution determined
by past innovations and having finite covariance.
By Lemma~\ref{lem:Q}, the covariance of
$\zeta_{k+1}(z^\star)$ is $Q$.
The current state of the linear recursion is measurable with
respect to the past, while $\zeta_{k+1}(z^\star)$ has conditional
mean zero. Thus their cross-covariance is zero, and stationarity
gives the exact balance
\[
 \widehat\Gamma^{\mathrm{lin}}_{\eta,\rho}
 =A_{\eta,\rho}\widehat\Gamma^{\mathrm{lin}}_{\eta,\rho}
       A_{\eta,\rho}^\top+\eta^2\mathcal Q_\rho.
\]

Set
$\Sigma_{\eta,\rho}
 =\widehat\Gamma^{\mathrm{lin}}_{\eta,\rho}/\eta$.
Expanding the balance and dividing by $\eta^2$ yields
\[
 J_\rho\Sigma_{\eta,\rho}
 +\Sigma_{\eta,\rho}J_\rho^\top
 +\eta J_\rho\Sigma_{\eta,\rho}J_\rho^\top
 +\mathcal Q_\rho=0.
\]

Define the linear map
$\mathcal L_\rho(X)=J_\rho X+XJ_\rho^\top$.
It is invertible: if $\mathcal L_\rho(X)=0$, then
$e^{tJ_\rho}Xe^{tJ_\rho^\top}$ is constant in $t$ and tends to
zero as $t\to\infty$, so $X=0$. Since this is a linear map on a
finite-dimensional space, injectivity implies invertibility.
Consequently, for sufficiently small $\eta$, the map
\[
 X\longmapsto
 \mathcal L_\rho(X)+\eta J_\rho XJ_\rho^\top
\]
is also invertible, and its inverse converges to
$\mathcal L_\rho^{-1}$. Hence $\Sigma_{\eta,\rho}$ converges to
the unique solution of
\eqref{eq:sup:report-stationary-lyapunov}.
\end{proof}

\subsection{Proof of Proposition~\ref{prop:covariance}: Uniform Covariance Approximation}
\label{app:covariance-proof}

Under the assumptions of Proposition~\ref{prop:covariance},
we prove the following estimate for the stationary covariance
$\Gamma_{\eta,\rho}$ of the process generated by the nonlinear
recursion~\eqref{eq:rec}:
\begin{equation}
 \Gamma_{\eta,\rho}
 =\eta\Sigma_\rho+\mathcal R^\Gamma_{\eta,\rho},
 \qquad
 \|\mathcal R^\Gamma_{\eta,\rho}\|\le C\eta^{3/2}.
 \label{eq:sup:report-actual-covariance-expansion}
\end{equation}
All constants below are independent of
$0<\eta\le\eta_0$ and $0<\rho\le\rho_0\le1$.
The sharper slow and cross-moment bounds are not needed for
this absolute covariance approximation.

The covariance error will satisfy a Lyapunov equation whose
right-hand side is the remainder in the stationary covariance
balance. The next lemma bounds the solution in terms of the
blocks of this remainder. The slow block enters the bound
with a factor $1/\rho$. In the proof of
Proposition~\ref{prop:covariance}, this factor is compensated
by the $O(\rho\eta^{3/2})$ bound on the slow remainder,
yielding a covariance error uniform as $\rho\downarrow0$.
\begin{lemma}[Lyapunov inverse with scaled slow forcing]
\label{lem:cov-lyapunov-inverse}
Let
\[
 L_\rho=
 \begin{pmatrix}
 A-\rho\Lambda C&-\rho\Lambda S\\
 \rho C&\rho S
 \end{pmatrix},
\]
where $A$ and $S$ are Hurwitz.
For any symmetric matrix $F^{\mathrm{force}}$, write
\[
 F^{\mathrm{force}}
 =
 \begin{pmatrix}
 F^{\mathrm{force}}_{uu} & F^{\mathrm{force}}_{uv}\\
 (F^{\mathrm{force}}_{uv})^\top & F^{\mathrm{force}}_{vv}
 \end{pmatrix},
\]
using the same $(u,v)$ block partition as $L_\rho$.

After reducing $\rho_0>0$ if necessary, the symmetric solution
$X$ of
\[
 L_\rho X+XL_\rho^\top=-F^{\mathrm{force}}
\]
satisfies
\begin{equation}
 \|X\|\le C_*
 \left(
 \|F^{\mathrm{force}}_{uu}\|
 +\|F^{\mathrm{force}}_{uv}\|
 +\rho^{-1}\|F^{\mathrm{force}}_{vv}\|
 \right),
 \qquad 0<\rho\le\rho_0.
 \label{eq:cov-lyapunov-inverse}
\end{equation}
\end{lemma}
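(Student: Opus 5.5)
The plan is to use the integral representation of the Lyapunov solution together with the two-clock semigroup bounds of Proposition~\ref{prop:semigroup}. Since $L_\rho$ is Hurwitz for $0<\rho\le\rho_0$ (its spectrum is that of $A_f(\rho)$ and $\rho S_s(\rho)$), the unique solution is
\[
 X=\int_0^\infty e^{tL_\rho}F^{\mathrm{force}}e^{tL_\rho^\top}\,dt .
\]
Write $\Phi(t)=e^{tL_\rho}$ with blocks $\Phi_{ab}$, $a,b\in\{u,v\}$. Each block of $X$ is then a sum of terms $\int_0^\infty\Phi_{ab}(t)F_{bc}\Phi_{dc}(t)^\top dt$. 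Let $\phi_f=e^{-ct}$ and $\phi_s=e^{-c\rho t}$. The bounds \eqref{eq:semigroup} give
\[
 \|\Phi_{uu}\|\lesssim\phi_f+\rho\phi_s,\qquad
 \|\Phi_{uv}\|,\|\Phi_{vu}\|\lesssim\rho(\phi_f+\phi_s),\qquad
 \|\Phi_{vv}\|\lesssim\phi_s .
\]
The proof reduces to counting powers of $\rho$ against the elementary integrals
\[
 \int\phi_f^2\lesssim1,\qquad \int\phi_f\phi_s\lesssim1,\qquad \int\phi_s^2=\frac{1}{2c\rho}.
\]

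\textbf{Forcing through the $uu$ block.} Here $F_{uu}$ is sandwiched between $\Phi_{\cdot u}$ and $\Phi_{\cdot u}^\top$. For each output block, the factors are $(\phi_f+\rho\phi_s)$ or $\rho(\phi_f+\phi_s)$ on each side. Every $\phi_s^2$ term then carries at least $\rho^2$, so its contribution is $O(\rho)$. The mixed and fast terms are $O(1)$. This contribution is therefore $\lesssim\|F_{uu}\|$.

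\textbf{Forcing through the $uv$ blocks.} Here one side carries $\Phi_{\cdot u}$, which has at least one factor $\rho$ in front of any $\phi_s$. The other side carries $\Phi_{\cdot v}$, which is at most $\phi_s$ (the $\Phi_{uv}$ block is even $O(\rho)$). The worst product is $\rho\phi_s\cdot\phi_s$, which integrates to $O(1)$, and the rest are $O(1)$ or smaller. This gives $\lesssim\|F_{uv}\|$.

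\textbf{Forcing through the $vv$ block.} Both sides carry $\Phi_{\cdot v}$, which is bounded by $\phi_s$. The integral is therefore $\lesssim\rho^{-1}\|F_{vv}\|$.

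Summing the three cases gives \eqref{eq:cov-lyapunov-inverse}. The subtle point is the $uv$ forcing in the $vv$ output: one needs $\|\Phi_{vu}\|$ to carry the factor $\rho$ in front of $\phi_s$ and not merely in front of $\phi_f$. This is exactly what Proposition~\ref{prop:semigroup} supplies.

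The main obstacle is therefore not this integral bookkeeping. It is that the bounds must hold with constants uniform in $\rho$, so we rely entirely on Proposition~\ref{prop:semigroup}, whose proof via the block diagonalisation $Q_\rho$ already yields uniform constants.

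As a cross-check, I would also verify directly that $X$ is symmetric and unique. If $L_\rho X+XL_\rho^\top=0$, then $\Phi(t)X\Phi(t)^\top$ is constant in $t$ and tends to zero, so $X=0$. The integral formula is therefore legitimate and the symmetric solution is the one bounded.
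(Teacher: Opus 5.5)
Your argument is correct, but it takes a different route from the paper.

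\textbf{The paper's route.} The paper never uses the semigroup. It writes $X$ in blocks $(U,R,V)$ and notes that every term in the $vv$ block of $L_\rho X+XL_\rho^\top$ carries a factor $\rho$, so it divides that block equation by $\rho$. The resulting linear map on $(U,R,V)$ depends polynomially on $\rho$. At $\rho=0$ it becomes the block-triangular map $(U,R,V)\mapsto(AU+UA^\top,\,AR,\,CR+R^\top C^\top+SV+VS^\top)$, whose diagonal parts are invertible because $A$ and $S$ are Hurwitz. Continuity of inversion in finite dimensions then gives a uniformly bounded inverse, applied to $-(F^{\mathrm{force}}_{uu},F^{\mathrm{force}}_{uv},F^{\mathrm{force}}_{vv}/\rho)$. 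This proof is short and self-contained. The bound uses only Hurwitz stability of $A$ and $S$ and no decay estimate for $e^{tL_\rho}$. It is also the same regularized system the paper reuses for the singular expansion.

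\textbf{What your route costs and buys.} Your time-domain argument rests on Proposition~\ref{prop:semigroup}. That proposition's proof is heavier, since it builds perturbed invariant subspaces via the implicit-function theorem and uses perturbed Lyapunov functions. In exchange, your argument makes the mechanism explicit. Every propagator block other than $\Phi_{vv}$ carries a factor $\rho$ in front of $\phi_s$. Hence the only term producing an uncompensated $\rho^{-1}$ from $\int_0^\infty\phi_s^2\,dt$ is $\Phi_{vv}F^{\mathrm{force}}_{vv}\Phi_{vv}^\top$. You also get block-resolved bounds for free: $F^{\mathrm{force}}_{vv}$ contributes only $O(\|F^{\mathrm{force}}_{vv}\|)$ to $X_{uv}$ and $O(\rho\|F^{\mathrm{force}}_{vv}\|)$ to $X_{uu}$. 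So the $\rho^{-1}$ loss is confined to $X_{vv}$, which the paper's continuity argument does not display.

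\textbf{One point to make explicit.} Your bound $\|\Phi_{\cdot v}\|\lesssim\phi_s$, used in the $uv$ and $vv$ cases, relies on $\phi_f\le\phi_s$. This holds because $\rho\le 1$.
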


\begin{proof}
Write
\[
 X=
 \begin{pmatrix}
 U&R\\
 R^\top&V
 \end{pmatrix}
\]
and divide the $vv$ block equation by $\rho$.
The resulting linear map acting on $(U,R,V)$ has coefficients
continuous at $\rho=0$. At that point it is
\[
 (U,R,V)\longmapsto
 \left(
 AU+UA^\top,\;
 AR,\;
 CR+R^\top C^\top+SV+VS^\top
 \right).
\]
This map is block triangular. Its diagonal operators are
the Lyapunov map associated with $A$, left multiplication
by $A$, and the Lyapunov map associated with $S$.
Each is invertible because $A,S$ are Hurwitz.

Continuity of inversion in finite dimensions therefore gives
a uniformly bounded inverse for sufficiently small $\rho$.
Its right-hand side is
$-(F^{\mathrm{force}}_{uu},
F^{\mathrm{force}}_{uv},
F^{\mathrm{force}}_{vv}/\rho)$, proving
\eqref{eq:cov-lyapunov-inverse}.
Stability of $L_\rho$ also gives uniqueness for positive
small $\rho$.
\end{proof}

\begin{proof}[Proof of Proposition~\ref{prop:covariance}]
All expectations are taken under the joint invariant law.
Throughout the proof,
$\|X\|_2=(\E\|X\|^2)^{1/2}$ for a random vector $X$,
and $C$ denotes a constant independent of sufficiently small
$\eta$ and $\rho$.

We first obtain an exact recursion whose leading noise is a
martingale innovation. We then derive its stationary covariance
balance and estimate the remainder in that balance. The decisive
estimate is that the slow--slow remainder carries a factor $\rho$:
this compensates for the factor $1/\rho$ in
Lemma~\ref{lem:cov-lyapunov-inverse}. Finally, we transfer the
result back to the original coordinates.

\medskip
\noindent\textbf{Step 1: Express the recursion in corrected tangent coordinates.}
The tangent transformation exposes the factors of $\rho$ in the
slow dynamics, while the Poisson correction isolates noise with
zero conditional mean. Define
\[
 T=\begin{pmatrix}I&-\Lambda\\0&I\end{pmatrix},
 \qquad
 B_\rho=TD_\rho
 =\begin{pmatrix}I&-\rho\Lambda\\0&\rho I\end{pmatrix},
 \qquad
 L_\rho=TD_\rho JT^{-1}.
\]
For the fixed finite-state Doeblin kernel, the Poisson inverse
is bounded on the space of $\mu$-centered functions. Applying
this inverse to the first and second derivatives of
$\widetilde G=G-\bar G$ shows that the Poisson corrector
$\mathcal U$ has bounded first and second derivatives,
uniformly in the Markov state and $z$. Its values at $z^\star$
are bounded because the state space is finite. Together with
the derivative bounds on $G$, this gives
\begin{align}
 \|G(y,z)\|+\|\mathcal U(y,z)\|
 &\le C(1+\|z-z^\star\|),
 \nonumber\\
 \|\mathcal U(y,z')-\mathcal U(y,z)\|
 &\le C\|z'-z\|,
 \nonumber\\
 \|\zeta_{k+1}(z)-\zeta_{k+1}(z^\star)\|
 &\le C\|z-z^\star\|.
 \label{eq:cov-poisson-bounds}
\end{align}
For the adapted iterate, the innovation satisfies
$\E[\zeta_{k+1}(z_k)\mid\mathcal F_k]=0$; this uses no
independence between $Y_k$ and $z_k$.

Set
\[
 w_k=Te_k,
 \qquad
 \widehat w_k=w_k+\eta B_\rho\mathcal U(Y_k,z_k),
 \qquad
 R^{(2)}_{\bar G,k}=\bar G(z_k)-Je_k.
\]
Applying $T$ to the exact Poisson-corrected recursion
\eqref{eq:sup:report-exact-corrected-recursion} gives
\begin{equation}
 \widehat w_{k+1}
 =\widehat w_k
 +\eta\{a_k+n_{k+1}+\mathfrak r_{k+1}\},
 \qquad
 a_k=L_\rho\widehat w_k,
 \qquad
 n_{k+1}=B_\rho\zeta_{k+1}(z_k),
 \label{eq:cov-corrected-recursion}
\end{equation}
where
\begin{align}
 \mathfrak r_{k+1}
 ={}& B_\rho R^{(2)}_{\bar G,k}
 -\eta L_\rho B_\rho\mathcal U(Y_k,z_k)
 \nonumber\\
 &+B_\rho
 \{\mathcal U(Y_{k+1},z_{k+1})
       -\mathcal U(Y_{k+1},z_k)\}.
 \label{eq:cov-exact-remainder}
\end{align}
The three terms account for the nonlinear drift, the correction
to the linear drift, and the change of the Poisson corrector
over one update. In particular, $\mathfrak r_{k+1}$ can depend
on $Y_{k+1}$ and need not be $\mathcal F_k$-measurable.

\medskip
\noindent\textbf{Step 2: Derive the stationary covariance balance.}
The corrected process is stationary because $\widehat w_k$
is a fixed function of the stationary pair $(Y_k,z_k)$.
Let
\[
 q_k=\widehat w_k-\E\widehat w_k,
 \qquad
 \widehat P=\E[q_kq_k^\top].
\]
Stationarity makes the mean increment in
\eqref{eq:cov-corrected-recursion} zero. Moreover, $q_k$ and
$a_k$ are $\mathcal F_k$-measurable, whereas $n_{k+1}$ has
conditional mean zero. Hence
\[
 \E[q_kn_{k+1}^\top]=0,
 \qquad
 \E[a_kn_{k+1}^\top]=0.
\]
To identify the leading noise covariance, recall that the
$Y$-marginal of the joint invariant law is $\mu$. Thus
$(Y_k,Y_{k+1})$ has law $\mu(dy)P(y,dy')$, and
\[
 \E[\zeta_{k+1}(z^\star)\zeta_{k+1}(z^\star)^\top]=Q.
\]
Define the error from evaluating the innovation at $z_k$
instead of $z^\star$ by
\[
 \Delta Q_{\eta,\rho}
 =\E[\zeta_{k+1}(z_k)\zeta_{k+1}(z_k)^\top]-Q.
\]
Expanding $\E[q_{k+1}q_{k+1}^\top-q_kq_k^\top]=0$
and dividing by $\eta$ now yields the exact equation
\begin{equation}
 L_\rho\widehat P+\widehat P L_\rho^\top
 +\eta B_\rho Q B_\rho^\top
 +\mathcal R_{\eta,\rho}=0,
 \label{eq:cov-stationary-balance}
\end{equation}
where, suppressing time indices on $a,n,\mathfrak r$,
\begin{align}
 \mathcal R_{\eta,\rho}
 ={}&
 \E[q_k\mathfrak r^\top+\mathfrak r q_k^\top]
 +\eta B_\rho\Delta Q_{\eta,\rho}B_\rho^\top
 \nonumber\\
 &+\eta\E[
 (a+\mathfrak r)(a+\mathfrak r)^\top
 +n\mathfrak r^\top+\mathfrak r n^\top
 ].
 \label{eq:cov-balance-remainder}
\end{align}
The three groups are the state--remainder cross terms, the
error in the innovation covariance, and the remaining products
of increments. The terms $n\mathfrak r^\top$ and
$\mathfrak r n^\top$ must be retained because the remainder
need not be predictable. It remains to bound these three
groups with their slow-coordinate factors intact.

\medskip
\noindent\textbf{Step 3: Bound the remainder and retain its slow factor.}
The raw fourth-moment assumption gives
\begin{equation}
 \|e_k\|_2\le C\sqrt\eta,
 \qquad
 \bigl\|\|e_k\|^2\bigr\|_2\le C\eta.
 \label{eq:cov-error-moments}
\end{equation}
Bounded second derivatives of $\bar G$ and
$\bar G(z^\star)=0$ imply
$\|R^{(2)}_{\bar G,k}\|\le C\|e_k\|^2$.
Also, the update and \eqref{eq:cov-poisson-bounds} give
\[
 \|\mathcal U(Y_{k+1},z_{k+1})
       -\mathcal U(Y_{k+1},z_k)\|
 \le C\eta(1+\|e_k\|).
\]
The upper block rows of $B_\rho$, $L_\rho$, and
$L_\rho B_\rho$ are uniformly bounded, and their lower
block rows are $O(\rho)$. Applying these facts to
\eqref{eq:cov-corrected-recursion} and
\eqref{eq:cov-exact-remainder} yields
\begin{align}
 \|\widehat w_k\|_2&\le C\sqrt\eta,
 \nonumber\\
 \|\mathfrak r_{k+1}^u\|_2&\le C\eta,
 &
 \|\mathfrak r_{k+1}^v\|_2&\le C\rho\eta,
 \nonumber\\
 \|a_k^u\|_2&\le C\sqrt\eta,
 &
 \|a_k^v\|_2&\le C\rho\sqrt\eta,
 \nonumber\\
 \|n_{k+1}^u\|_2&\le C,
 &
 \|n_{k+1}^v\|_2&\le C\rho.
 \label{eq:cov-block-L2}
\end{align}
In particular,
$\|q_k^u\|_2,\|q_k^v\|_2\le C\sqrt\eta$.
For the innovation covariance, the Lipschitz bound in
\eqref{eq:cov-poisson-bounds} and boundedness of the frozen
innovation give
\begin{equation}
 \|\Delta Q_{\eta,\rho}\|
 \le C\E(\|e_k\|+\|e_k\|^2)
 \le C\sqrt\eta.
 \label{eq:cov-noise-freezing}
\end{equation}

We can now estimate each group in
\eqref{eq:cov-balance-remainder} using
$\|\E[XY^\top]\|\le\|X\|_2\|Y\|_2$.
The first group has $uu$ and $uv$ blocks of order
$\eta^{3/2}$. Its $vv$ block is of order
$\rho\eta^{3/2}$ because each product contains
$\mathfrak r^v$. The second group has blocks of orders
$\eta^{3/2}$, $\rho\eta^{3/2}$, and
$\rho^2\eta^{3/2}$, respectively. The last group has blocks
of orders $\eta^2$, $\rho\eta^2$, and $\rho^2\eta^2$.
Consequently, for $\eta,\rho\le1$,
\begin{equation}
 \begin{aligned}
 \|(\mathcal R_{\eta,\rho})_{uu}\|
 +\|(\mathcal R_{\eta,\rho})_{uv}\|
 &\le C\eta^{3/2},
 \\
 \|(\mathcal R_{\eta,\rho})_{vv}\|
 &\le C\rho\eta^{3/2}.
 \end{aligned}
 \label{eq:cov-residual-blocks}
\end{equation}
The slow factor comes from the update coefficients.
Only the total raw-error moments in
\eqref{eq:cov-error-moments} were used; no sharper
slow-coordinate or cross-moment bound is needed.

\medskip
\noindent\textbf{Step 4: Solve the covariance equation and return to the original variables.}
The leading covariance in tangent coordinates,
$\widetilde\Sigma_\rho=T\Sigma_\rho T^\top$, satisfies
\[
 L_\rho\widetilde\Sigma_\rho
 +\widetilde\Sigma_\rho L_\rho^\top
 +B_\rho Q B_\rho^\top=0.
\]
Subtracting $\eta$ times this equation from
\eqref{eq:cov-stationary-balance} shows that
$\widehat P-\eta\widetilde\Sigma_\rho$ solves a Lyapunov
equation with forcing $\mathcal R_{\eta,\rho}$.
Lemma~\ref{lem:cov-lyapunov-inverse} and
\eqref{eq:cov-residual-blocks} therefore give
\begin{equation}
 \begin{aligned}
 \|\widehat P-\eta\widetilde\Sigma_\rho\|
 &\le C_*\bigl(
 \|(\mathcal R_{\eta,\rho})_{uu}\|
 +\|(\mathcal R_{\eta,\rho})_{uv}\|
 +\rho^{-1}\|(\mathcal R_{\eta,\rho})_{vv}\|
 \bigr)
 \\
 &\le C\eta^{3/2}.
 \end{aligned}
 \label{eq:cov-corrected-approximation}
\end{equation}
This is where the retained factor $\rho$ gives a bound
uniform as $\rho\downarrow0$.

Finally, the Poisson correction changes the covariance by
at most the same order. Indeed, write
\[
 h_k=B_\rho\mathcal U(Y_k,z_k),
 \qquad
 \widehat w_k=w_k+\eta h_k.
\]
The preceding bounds give
$\|w_k\|_2\le C\sqrt\eta$ and $\|h_k\|_2\le C$.
Expanding the centered covariances yields
\[
 \begin{aligned}
 &\|\operatorname{Cov}(\widehat w_k)
       -\operatorname{Cov}(w_k)\|
 \\
 &\qquad\le
 2\eta\|w_k\|_2\|h_k\|_2+\eta^2\|h_k\|_2^2
 \le C\eta^{3/2}.
 \end{aligned}
\]
Since $\operatorname{Cov}(w_k)=T\Gamma_{\eta,\rho}T^\top$
and $T,T^{-1}$ are fixed, combining this estimate with
\eqref{eq:cov-corrected-approximation} gives
\[
 \|\Gamma_{\eta,\rho}-\eta\Sigma_\rho\|
 \le C\eta^{3/2}.
\]
This proves \eqref{eq:sup:report-actual-covariance-expansion}
and Proposition~\ref{prop:covariance}. The argument uses
neither a stationary mean-bias expansion nor an assumed
covariance approximation.
\end{proof}

Theorem~\ref{thm:iid} sharpens the remainder to $O(\eta^2)$
under its additional additive-i.i.d.\ hypotheses.
An absolute $O(\eta^{3/2})$ covariance remainder does not establish
the sharper $O(\varepsilon)$ slow and cross-covariance bounds on all
step-size paths. Those bounds are separate conditions in
Assumption~\ref{as:block-localization} for the supplementary
fluctuation statements.

\subsection{Proof of Lemma~\ref{lem:mean-O-eta} and
Theorem~\ref{thm:fixed-ratio}}
\label{subsec:fixed-ratio-proof}

\begin{proof}[Proof of the lemma and theorem]
Stationarity and invertibility of $D_\rho$ imply $\E G(Y_k,z_k)=0$. Exogeneity and the Poisson equation give the exact moving-corrector identity
\[
 \E\widetilde G(Y_k,z_k)=\E[\mathcal U(Y_{k+1},z_{k+1})
 -\mathcal U(Y_{k+1},z_k)].
\]
The derivative and moment bounds first bound this expression by $C\eta$. The first-order drift expansion then gives $Jm_{\eta,\rho}=O(\eta)$, proving the lemma because $\det J=\det A\det S\ne0$. This inverts the fixed matrix $J$, avoiding the $1/\rho$ loss from $J_\rho^{-1}$.

Taylor expansion of the corrector, followed by freezing its coefficient at $z^\star$, gives
\[
 \E\widetilde G(Y_k,z_k)=\eta r_\rho+O(\eta^{3/2}).
\]
By Proposition~\ref{prop:covariance} and the mean bound,
\begin{equation}
 \E\bar G(z_k)=Jm_{\eta,\rho}+\eta\mathcal C(\Sigma_\rho)
 +O(\eta^{3/2}).\label{eq:report-drift-taylor-final}
\end{equation}
The mean-square contribution is $O(\eta^2)$. Combining the two expressions with the zero expected drift proves the theorem. Appendix~\ref{sec:sup:bias-expansion} supplies the full identities and remainder bounds.
\end{proof}

\subsection{Stationary Mean Bias}
\label{sec:sup:bias-expansion}
We now use the covariance approximation from
Proposition~\ref{prop:covariance} to determine the stationary
mean displacement $m_{\eta,\rho}$, proving
Theorem~\ref{thm:fixed-ratio}.
The proof starts from the fact that the expected update
vanishes under the invariant law. The Poisson equation
relates this stationary balance to the dependence between
the iterate and the Markov input. A first-order Taylor
expansion then gives the preliminary bound
$m_{\eta,\rho}=O(\eta)$.
With this bound in hand, we expand the averaged drift to
second order, so that its expectation is expressed through
the mean displacement and the stationary covariance.
Substituting the covariance approximation and combining
the result with the Poisson identity yields the bias
expansion, including both the curvature contribution and
the contribution from Markov dependence.

Recall from Appendix~\ref{sec:sup:report-markov-covariance} that
\(e_k=z_k-z^\star\). Expectations without a subscript are under the
joint invariant law \(\pi_{\eta,\rho}\) of \((Y_k,z_k)\). Set
\[
 m_{\eta,\rho}=\mathbb E e_k,\qquad
 \Gamma_{\eta,\rho}=\operatorname{Cov}(e_k).
\]
The \(Y\)-marginal of \(\pi_{\eta,\rho}\) is \(\mu\); denote its
\(z\)-marginal, obtained by integrating out \(Y_k\), by
\(\pi^z_{\eta,\rho}\).

Recall that \(\bar G\) and \(\widetilde G\) are defined in
\eqref{eq:sup:report-centered-field}. Although
\(\widetilde G(\cdot,z)\) has zero \(\mu\)-mean for each fixed \(z\),
\(\mathbb E\widetilde G(Y_k,z_k)\) need not vanish under the joint
invariant law because \(Y_k\) and \(z_k\) may be dependent. Equivalently,
\(\bar G(z_k)\) is generally not
\(\mathbb E[G(Y_k,z_k)\mid z_k]\). The Jacobian
\(J=D\bar G(z^\star)\), defined in \eqref{eq:sup:report-Jrho}, is
nonsingular because \(\det J=\det A\,\det S\ne0\).

\begin{appendixbiasrestatement}[Fixed-ratio bias; restatement]
Under \Cref{as:sup:smooth,as:sup:hurwitz,as:sup:markov,as:sup:localization,as:sup:taylor}, for
\(0<\rho\le\rho_0\),
\begin{equation}
 \boxed{
 m_{\eta,\rho}=\eta b(\rho)+O(\eta^{3/2}),
 \qquad
 b(\rho)=-J^{-1}\{\mathcal C(\Sigma_\rho)+r_\rho\}.}
 \label{eq:sup:report-fixed-ratio-bias}
\end{equation}
with a remainder uniform in \(\rho\).
\end{appendixbiasrestatement}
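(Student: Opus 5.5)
The plan is to start from the stationary balance. Under $\pi_{\eta,\rho}$ the increment $z_{k+1}-z_k=\eta D_\rho G(Y_k,z_k)$ has zero mean, and $D_\rho$ is invertible, so $\E G(Y_k,z_k)=0$. Split $G=\bar G+\widetilde G$. This gives
\[
 0=\E\bar G(z_k)+\E\widetilde G(Y_k,z_k),
\]
and I will expand each term to order $\eta$ with an $O(\eta^{3/2})$ remainder whose constant does not depend on $\rho$.

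\textbf{Markov term.} I will use the coboundary decomposition \eqref{eq:sup:report-coboundary-decomposition} at $z=z_k$. Since $\zeta_{k+1}(z_k)$ has conditional mean zero, its expectation vanishes. By stationarity, $\E\,\mathcal U(Y_k,z_k)=\E\,\mathcal U(Y_{k+1},z_{k+1})$. Together these give the exact identity
\[
 \E\widetilde G(Y_k,z_k)=\E[\mathcal U(Y_{k+1},z_{k+1})-\mathcal U(Y_{k+1},z_k)].
\]
Next I insert the decomposition of the corrector increment from Appendix~\ref{app:taylor-estimates}: an equilibrium response term $\eta D_z\mathcal U(Y_{k+1},z^\star)D_\rho G(Y_k,z^\star)$, a frozen-coefficient error $\eta\Delta_{\mathcal U,k}$, and a remainder $R_{\mathcal U,k}$. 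The $Y$-marginal is $\mu$, so $(Y_k,Y_{k+1})\sim\mu(dy)P(y,dy')$, and the equilibrium term has expectation exactly $\eta r_\rho$. Lemma~\ref{lem:derived-taylor-estimates} gives $\E\|\Delta_{\mathcal U,k}\|\le C\sqrt\eta$ and $\E\|R_{\mathcal U,k}\|\le C\eta^2$. Hence $\E\widetilde G=\eta r_\rho+O(\eta^{3/2})$, with constants uniform in $\rho$ because $\|D_\rho\|\le1$. Using only the cruder integral bound in \eqref{eq:taylor-remainder-bounds}, the same identity gives $\|\E\widetilde G\|\le C\eta$.

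\textbf{Drift term and a priori bound.} I will first prove Lemma~\ref{lem:mean-O-eta}. Write $\E\bar G(z_k)=Jm_{\eta,\rho}+\E R^{(2)}_{\bar G,k}$, where $\E\|R^{(2)}\|\le C\eta$. Combined with the crude Markov bound this gives $\|Jm_{\eta,\rho}\|\le C\eta$. Since $\det J=\det A\det S\neq0$ and $J$ does not depend on $\rho$, we get $\|m_{\eta,\rho}\|\le C\eta$ uniformly. This is the point where the argument must invert $J$ rather than $J_\rho$, whose inverse is of size $1/\rho$.

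At second order I write
\[
 \E\bar G(z_k)=Jm+\tfrac12\E D^2\bar G(z^\star)[e,e]+\E R^{(3)}_{\bar G,k}.
\]
The quadratic term equals $\mathcal C(\E ee^\top)$, and $\E ee^\top=\Gamma_{\eta,\rho}+mm^\top$. Proposition~\ref{prop:covariance} gives $\Gamma_{\eta,\rho}=\eta\Sigma_\rho+O(\eta^{3/2})$, and the a priori bound gives $mm^\top=O(\eta^2)$. Lemma~\ref{lem:derived-taylor-estimates} gives $\E\|R^{(3)}\|\le C\eta^{3/2}$. This yields \eqref{eq:report-drift-taylor-final}.

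\textbf{Conclusion and main obstacle.} Adding the two expansions gives $0=Jm+\eta\{\mathcal C(\Sigma_\rho)+r_\rho\}+O(\eta^{3/2})$. Applying the fixed $J^{-1}$ proves \eqref{eq:sup:report-fixed-ratio-bias}. I expect the main obstacle to be uniformity in $\rho$. The covariance input is already uniform by Proposition~\ref{prop:covariance}, so what remains is checking two things. First, every remainder bound must rest only on the raw fourth moment and $\|D_\rho\|\le1$, never on a $\rho$-dependent inverse. Second, $\|\Sigma_\rho\|$ must stay bounded as $\rho\downarrow0$; I will get this from Lemma~\ref{lem:cov-lyapunov-inverse} applied to $T\Sigma_\rho T^\top$, whose forcing $B_\rho QB_\rho^\top$ has $vv$-block of size $O(\rho^2)$.
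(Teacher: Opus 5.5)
Your proposal is correct and follows the paper's proof essentially step for step. It uses the same chain: the zero-drift balance $\E G(Y_k,z_k)=0$, the moving-corrector identity split into the equilibrium response $\eta r_\rho$, the frozen-coefficient error and the Taylor remainder, the a priori bound $m_{\eta,\rho}=O(\eta)$ obtained by inverting the fixed $J$, and the second-order drift expansion via Proposition~\ref{prop:covariance}. Your extra check that $\|\Sigma_\rho\|$ stays bounded is correct but not needed, since the uniform $O(\eta^{3/2})$ covariance remainder already gives $\mathcal C(\Gamma_{\eta,\rho})=\eta\,\mathcal C(\Sigma_\rho)+O(\eta^{3/2})$ with a $\rho$-independent constant.
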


\paragraph{Proof of Theorem~\ref{thm:fixed-ratio}.}

\begin{proof}
For \(M\in\mathbb R^{(d_x+d_\theta)\times(d_x+d_\theta)}\), define
\[
 \mathcal C(M)_i
 =\frac12\operatorname{Tr}\!\left(
   D^2\bar G_i(z^\star)M\right),
 \qquad i=1,\ldots,d_x+d_\theta.
\]
Define the direct Markov response by
\begin{equation}
 r_\rho
 =\sum_{y,y'\in\mathsf Y}\mu(y)P(y,y')
   D_z\mathcal U(y',z^\star)D_\rho G(y,z^\star).
 \label{eq:sup:report-markov-response}
\end{equation}
The weights \(\mu(y)P(y,y')\) give the stationary one-step transition
law, and \(\mathcal U\) is the Poisson corrector from
\eqref{eq:sup:report-poisson-equation}.

Stationarity of \(z_{k+1}=z_k+\eta D_\rho G(Y_k,z_k)\), and invertibility
of \(D_\rho\) for \(\rho>0\), give
\begin{equation}
 \mathbb E G(Y_k,z_k)=0,\qquad
 \mathbb E\bar G(z_k)=-\mathbb E\widetilde G(Y_k,z_k).
 \label{eq:sup:report-stationary-drift-zero}
\end{equation}

The Poisson equation, exogeneity, and stationarity yield
\begin{align}
 \mathbb E\widetilde G(Y_k,z_k)
 &=\mathbb E\{\mathcal U(Y_k,z_k)
                   -\mathcal U(Y_{k+1},z_k)\}\nonumber\\
 &=\mathbb E\{\mathcal U(Y_{k+1},z_{k+1})
                   -\mathcal U(Y_{k+1},z_k)\}.
 \label{eq:sup:report-moving-corrector-identity}
\end{align}
Taylor expansion across the update \(z_{k+1}-z_k\), followed by the
remainder bounds in \eqref{eq:taylor-remainder-bounds}, gives
\begin{equation}
 \mathbb E\widetilde G(Y_k,z_k)
 =\eta r_\rho+O(\eta^{3/2}).
 \label{eq:sup:report-mean-markov-response}
\end{equation}
Consequently, \eqref{eq:sup:report-stationary-drift-zero} implies
\begin{equation}
 \mathbb E\bar G(z_k)
 =-\eta r_\rho+O(\eta^{3/2}).
 \label{eq:sup:report-averaged-drift-response}
\end{equation}

The localization moment bounds give
\(\mathbb E\|e_k\|^2=O(\eta)\). The moving-corrector identity and the
mean-value bound in \eqref{eq:taylor-remainder-bounds} give
\(\mathbb E\widetilde G(Y_k,z_k)=O(\eta)\). Also,
\(\mathbb E\|R^{(2)}_{\bar G,k}\|=O(\eta)\). Taking expectations in
\[
 \bar G(z_k)=Je_k+R^{(2)}_{\bar G,k}
\]
and using \eqref{eq:sup:report-stationary-drift-zero} therefore yields
\[
 Jm_{\eta,\rho}
 =-\mathbb E\widetilde G(Y_k,z_k)
  -\mathbb E R^{(2)}_{\bar G,k}=O(\eta).
\]
Since \(J\) is nonsingular, \(m_{\eta,\rho}=O(\eta)\).

By Proposition~\ref{prop:covariance},
\begin{equation}
 \Gamma_{\eta,\rho}
 =\eta\Sigma_\rho+O(\eta^{3/2}),
 \label{eq:sup:report-covariance-expansion}
\end{equation}
where \(\Sigma_\rho\) solves \eqref{eq:sup:report-stationary-lyapunov}.
Taylor expansion of the averaged drift gives
\[
 \bar G(z_k)
 =Je_k+\tfrac12D^2\bar G(z^\star)[e_k,e_k]
   +R^{(3)}_{\bar G,k}.
\]
Using
\(\mathbb E[e_ke_k^\top]
 =\Gamma_{\eta,\rho}+m_{\eta,\rho}m_{\eta,\rho}^\top\),
the covariance expansion, \(m_{\eta,\rho}=O(\eta)\), and the
third-order remainder bound yields
\begin{equation}
 \mathbb E\bar G(z_k)
 =Jm_{\eta,\rho}
  +\eta\mathcal C(\Sigma_\rho)+O(\eta^{3/2}).
 \label{eq:sup:report-drift-taylor-final}
\end{equation}

Equating \eqref{eq:sup:report-averaged-drift-response} and
\eqref{eq:sup:report-drift-taylor-final}, then multiplying by
\(J^{-1}\), proves the stated expansion.

All remainder bounds above are uniform for
$0<\rho\le\rho_0\le1$: the Taylor estimates and
Proposition~\ref{prop:covariance} are uniform,
$\|D_\rho\|\le1$, and both the Poisson solution operator
and $J^{-1}$ are independent of $\rho$.
The inverse of $D_\rho$ is used only to deduce the exact
identity $\mathbb E G(Y_k,z_k)=0$, so it introduces no
factor $1/\rho$ into the error estimates.
\end{proof}

\section{The singular ratio limit and path-aware extrapolation}
\label{sec:sup:singular-path-rr}
Appendix~\ref{sec:sup:bias-expansion} established the stationary
mean expansion
\[
 m_{\eta,\rho}=\eta b(\rho)+O(\eta^{3/2}),
\]
with a remainder uniform for $0<\rho\le\rho_0$.
We now expand $b(\rho)$ as $\rho\downarrow0$ to obtain
the expansion stated in Theorem~\ref{thm:singular}.
The coefficient vectors $b_0,b_1,b_2$, defined in
\eqref{eq:report-b0}--\eqref{eq:report-b2}, multiply
$\eta$, $\varepsilon$, and $\varepsilon^2/\eta$,
respectively, and are independent of the step sizes.
To derive these coefficients, we expand the covariance
coefficient $\Sigma_\rho$ and the Markov response $r_\rho$.
For the covariance calculation, we use the tangent coordinates
\[
 u=x-x^\star-\Lambda(\theta-\theta^\star),
 \qquad
 v=\theta-\theta^\star,
\]
where $u$ is the fast tracking error and $v$ is the slow
displacement. In these coordinates, the leading covariance
coefficient is $\widetilde\Sigma_\rho=T\Sigma_\rho T^\top$.
We partition it as
\[
 \widetilde\Sigma_\rho=
 \begin{pmatrix}
  \widetilde\Sigma_\rho^{uu}
  &\widetilde\Sigma_\rho^{uv}\\
  (\widetilde\Sigma_\rho^{uv})^\top
  &\widetilde\Sigma_\rho^{vv}
 \end{pmatrix}.
\]
The $uu$ block describes fluctuations within the fast
coordinates, the $vv$ block describes fluctuations within
the slow coordinates, and the $uv$ block describes their
cross-covariance.

For $\rho>0$, define the rescaled matrices
\[
 U(\rho)=\widetilde\Sigma_\rho^{uu},
 \qquad
 R(\rho)=\rho^{-1}\widetilde\Sigma_\rho^{uv},
 \qquad
 V(\rho)=\rho^{-1}\widetilde\Sigma_\rho^{vv}.
\]
Thus we leave the fast block unchanged and divide the
cross and slow blocks by $\rho$, writing
\[
 \widetilde\Sigma_\rho=
 \begin{pmatrix}
 U(\rho)&\rho R(\rho)\\
 \rho R(\rho)^\top&\rho V(\rho)
 \end{pmatrix}.
\]
Substituting this representation into the transformed
Lyapunov equation, we divide its $uv$ equation by $\rho$
and its $vv$ equation by $\rho^2$.
The resulting equations remain informative at $\rho=0$
and determine finite limits for $U(\rho)$, $R(\rho)$,
and $V(\rho)$.
This allows us to compute the covariance expansion
needed for $b_0,b_1,b_2$.
\subsection{Proof of Theorem~\ref{thm:singular}}
\label{subsec:singular-proof}
\begin{proof}
Let $B_\rho=TD_\rho$ and $\widetilde\Sigma_\rho=T\Sigma_\rho T^\top$. The transformed Lyapunov equation is
\begin{equation}
 L_\rho\widetilde\Sigma_\rho+\widetilde\Sigma_\rho L_\rho^\top
 +B_\rho QB_\rho^\top=0.\label{eq:report-normal-lyapunov}
\end{equation}
Partition $Q$ according to $(x,\theta)$ and seek
\begin{equation}
 \widetilde\Sigma_\rho=
 \begin{pmatrix}U(\rho)&\rho R(\rho)\\
 \rho R(\rho)^\top&\rho V(\rho)\end{pmatrix}.
 \label{eq:report-regularized-Sigma}
\end{equation}
After dividing the cross and slow equations by $\rho$ and $\rho^2$, respectively, one obtains the regular system
\begin{align}
0={}&(A-\rho\Lambda C)U+U(A-\rho\Lambda C)^\top\nonumber\\
 &-\rho^2\Lambda SR^\top-\rho^2RS^\top\Lambda^\top\nonumber\\
 &+Q_{xx}-\rho(\Lambda Q_{\theta x}+Q_{x\theta}\Lambda^\top)
 +\rho^2\Lambda Q_{\theta\theta}\Lambda^\top,\label{eq:report-regular-U}\\
0={}&(A-\rho\Lambda C)R-\rho\Lambda SV+UC^\top\nonumber\\
 &+\rho RS^\top+Q_{x\theta}-\rho\Lambda Q_{\theta\theta},\label{eq:report-regular-R}\\
0={}&CR+SV+R^\top C^\top+VS^\top+Q_{\theta\theta}.
 \label{eq:report-regular-V}
\end{align}
At zero the system is triangular:
\begin{align}
 AU_0+U_0A^\top+Q_{xx}&=0,\label{eq:report-U0-equation}\\
 AR_0+U_0C^\top+Q_{x\theta}&=0,\label{eq:report-R0-equation}\\
 SV_0+V_0S^\top+CR_0+R_0^\top C^\top+Q_{\theta\theta}&=0.
 \label{eq:report-V0-equation}
\end{align}
Hurwitz stability makes this finite-dimensional linear system invertible. Its coefficients are polynomial in $\rho$, so its inverse and solution are analytic near zero. Successive coefficient matching therefore gives
\begin{equation}
 \Sigma_\rho=\Sigma_0+\rho\Sigma_1+\rho^2\Sigma_2+O(\rho^3),
 \quad\Sigma_0=\diag(U_0,0).\label{eq:report-Sigma-series}
\end{equation}
The corrector response is affine:
\begin{equation}
 \begin{gathered}
 r_\rho=r_0+\rho r_1,\quad
 r_0=\E[D_x\mathcal U(Y_1,z^\star)\xi_0^x],\\
 r_1=\E[D_\theta\mathcal U(Y_1,z^\star)\xi_0^\theta].
 \end{gathered}\label{eq:report-r-series}
\end{equation}
Consequently,
\begin{align}
 b_0&=-J^{-1}\{\mathcal C(\Sigma_0)+r_0\},\label{eq:report-b0}\\
 b_1&=-J^{-1}\{\mathcal C(\Sigma_1)+r_1\},\label{eq:report-b1}\\
 b_2&=-J^{-1}\mathcal C(\Sigma_2).\label{eq:report-b2}
\end{align}
Insert these expressions into Theorem~\ref{thm:fixed-ratio} and use $\eta\rho=\varepsilon$. Appendix~\ref{sec:sup:singular-path-rr} details the coefficient equations.
\end{proof}

\subsection{Overview of the Coefficient Expansion}

The starting point is the bias coefficient obtained in
Appendix~\ref{sec:sup:bias-expansion}:
\[
 b(\rho)=-J^{-1}\{\mathcal C(\Sigma_\rho)+r_\rho\}.
\]
Here $J=D\bar G(z^\star)$ and the curvature map
$\mathcal C$ are independent of $\rho$.
Thus, to expand $b(\rho)$, we need to determine how the
leading covariance coefficient $\Sigma_\rho$ and the
Markov response $r_\rho$ depend on the step-size ratio.

The dependence of $r_\rho$ is explicit.
Its defining expression
\eqref{eq:sup:report-markov-response} contains
$D_\rho=\operatorname{diag}(I,\rho I)$ linearly, so
\[
 r_\rho=r_0+\rho r_1,
\]
where $r_0$ and $r_1$ are defined in
\eqref{eq:report-r-series}.
This identity is exact; there are no higher-order
powers of $\rho$ in the direct Markov response.

The covariance coefficient requires a separate calculation.
As shown below, the rescaled covariance equations
\eqref{eq:sup:report-regular-U}--%
\eqref{eq:sup:report-regular-V}
form a finite-dimensional linear system that is invertible
at $\rho=0$. Its coefficients depend polynomially on $\rho$,
so its solution admits a Taylor expansion near zero.
Returning from the rescaled tangent coordinates to the
original coordinates gives
\[
 \Sigma_\rho
 =\Sigma_0+\rho\Sigma_1+\rho^2\Sigma_2+O(\rho^3).
\]
The matrices $\Sigma_0,\Sigma_1,\Sigma_2$ are independent
of the step sizes. The following subsections derive the
equations that determine them.

Substituting these expansions into the formula for
$b(\rho)$ and using the linearity of $\mathcal C$ gives
\[
 b(\rho)=b_0+\rho b_1+\rho^2b_2+O(\rho^3),
\]
with
\[
 \begin{aligned}
 b_0&=-J^{-1}\{\mathcal C(\Sigma_0)+r_0\},\\
 b_1&=-J^{-1}\{\mathcal C(\Sigma_1)+r_1\},\\
 b_2&=-J^{-1}\mathcal C(\Sigma_2).
 \end{aligned}
\]
These are the coefficient vectors defined in
\eqref{eq:report-b0}--\eqref{eq:report-b2}.
The first two coefficients contain both covariance and
direct Markov-response terms. The quadratic coefficient
$b_2$ contains only the covariance contribution because
$r_\rho$ has no quadratic term.

To express the result in the original step sizes, multiply
by $\eta$ and use $\rho=\varepsilon/\eta$.
The identities
$\eta\rho=\varepsilon$ and
$\eta\rho^2=\varepsilon^2/\eta$ yield
\begin{equation}
 \eta b(\rho)
 =\eta b_0+\varepsilon b_1
   +\frac{\varepsilon^2}{\eta}b_2
   +O\!\left(\frac{\varepsilon^3}{\eta^2}\right).
 \label{eq:sup:report-mixed-power-preview}
\end{equation}
Theorem~\ref{thm:fixed-ratio} has already established
$m_{\eta,\rho}=\eta b(\rho)+O(\eta^{3/2})$
with a remainder uniform in $\rho$.
Combining the two expansions therefore gives
\[
 \begin{aligned}
 m_{\eta,\rho}
 &=\eta b_0+\varepsilon b_1
   +\frac{\varepsilon^2}{\eta}b_2\\
 &\quad
   +O\!\left(\eta^{3/2}
             +\frac{\varepsilon^3}{\eta^2}\right),
 \end{aligned}
\]
which is the expansion in Theorem~\ref{thm:singular}.
The two remainder terms have different origins:
$O(\eta^{3/2})$ comes from the stationary mean approximation
in Theorem~\ref{thm:fixed-ratio}, whereas
$O(\varepsilon^3/\eta^2)$ comes from truncating the
expansion of $b(\rho)$ after its quadratic term.

\subsection{Rescaled Covariance Equations}

At \(\rho=0\) both slow drift and slow noise vanish.  We therefore pass to
fast-manifold coordinates and divide out the known powers of \(\rho\) before
taking the limit.  With \(J\), \(\Lambda\), and \(S\) as in
\eqref{eq:sup:report-Jrho}, \eqref{eq:sup:report-manifold-tangent}, and
\eqref{eq:sup:report-reduced-jacobian}, define
\begin{equation}
 T=\begin{pmatrix}I&-\Lambda\\0&I\end{pmatrix},
 \qquad
 \begin{pmatrix}u\\v\end{pmatrix}
 =T\begin{pmatrix}x-x^\star\\\theta-\theta^\star\end{pmatrix}.
 \label{eq:sup:report-T-transform}
\end{equation}
so that \((u,v)\) are the tangent coordinates
\eqref{eq:sup:report-linear-normal-coordinate}.

Let \(D_\rho=\operatorname{diag}(I,\rho I)\).  In the new coordinates, the
linear drift and noise scaling are
\begin{align}
 L_\rho=T D_\rho J T^{-1}
 &=\begin{pmatrix}
 A-\rho\Lambda C&-\rho\Lambda S\\
 \rho C&\rho S
 \end{pmatrix},
 \label{eq:sup:report-Lrho-again}\\
 B_\rho=T D_\rho
 &=\begin{pmatrix}I&-\rho\Lambda\\0&\rho I\end{pmatrix}.
 \label{eq:sup:report-Brho}
\end{align}
The leading covariance coefficient from Appendix~\ref{sec:sup:report-markov-covariance}
becomes \(\widetilde\Sigma_\rho=T\Sigma_\rho T^\top\) and solves
\begin{equation}
 L_\rho\widetilde\Sigma_\rho
 +\widetilde\Sigma_\rho L_\rho^\top
 +B_\rho Q B_\rho^\top=0.
 \label{eq:sup:report-normal-lyapunov}
\end{equation}
Partition the long-run noise covariance as
\[
 Q=\begin{pmatrix}Q_{xx}&Q_{x\theta}\\
 Q_{\theta x}&Q_{\theta\theta}\end{pmatrix}.
\]

The $uu$, $uv$, and $vv$ blocks of $B_\rho Q B_\rho^\top$
are respectively $O(1)$, $O(\rho)$, and $O(\rho^2)$.
Together with the different fast and slow relaxation rates,
these noise scales motivate the following rescaling of
the covariance coefficient.
For each $\rho>0$, write
\begin{equation}
 \widetilde\Sigma_\rho=
 \begin{pmatrix}
 U(\rho)&\rho R(\rho)\\
 \rho R(\rho)^\top&\rho V(\rho)
 \end{pmatrix}.
 \label{eq:sup:report-regularized-Sigma}
\end{equation}
For positive $\rho$, this is an exact change of variables.
The assertion that $U(\rho)$, $R(\rho)$, and $V(\rho)$
remain bounded as $\rho\downarrow0$ is justified by the
invertibility argument in
Appendix~\ref{app:singular-coefficients}.

Substitute this representation into
\eqref{eq:sup:report-normal-lyapunov}.
Leave the $uu$ equation unchanged, divide the $uv$ equation
by $\rho$, and divide the $vv$ equation by $\rho^2$.
For $\rho>0$, these operations give the equivalent system:
\begin{align}
0={}&(A-\rho\Lambda C)U+U(A-\rho\Lambda C)^\top
 -\rho^2\Lambda SR^\top-\rho^2RS^\top\Lambda^\top\nonumber\\*
&+Q_{xx}-\rho(\Lambda Q_{\theta x}+Q_{x\theta}\Lambda^\top)
 +\rho^2\Lambda Q_{\theta\theta}\Lambda^\top,
 \label{eq:sup:report-regular-U}\\
0={}&(A-\rho\Lambda C)R-\rho\Lambda SV+UC^\top
 +\rho RS^\top+Q_{x\theta}-\rho\Lambda Q_{\theta\theta},
 \label{eq:sup:report-regular-R}\\
0={}&CR+SV+R^\top C^\top+VS^\top+Q_{\theta\theta}.
 \label{eq:sup:report-regular-V}
\end{align}
We can now set \(\rho=0\).  The resulting equations form a triangular chain:
solve first for the fast covariance \(U_0\), then the cross-covariance
\(R_0\), and finally the slow covariance \(V_0\):
\begin{align}
 AU_0+U_0A^\top+Q_{xx}&=0,
 \label{eq:sup:report-U0-equation}\\
 AR_0+U_0C^\top+Q_{x\theta}&=0,
 \label{eq:sup:report-R0-equation}\\
 SV_0+V_0S^\top+CR_0+R_0^\top C^\top+Q_{\theta\theta}&=0.
 \label{eq:sup:report-V0-equation}
\end{align}
Stability of \(A\) and \(S\) gives uniqueness at every step.

Eliminating \(R_0\) makes the slow-noise interpretation explicit.  The middle
equation gives
\(R_0=-A^{-1}(U_0C^\top+Q_{x\theta})\).  Substituting this into the last
equation gives
\[
 SV_0+V_0S^\top+Q_{\mathrm{red}}=0,
\]
with
\begin{equation}
 Q_{\mathrm{red}}
 =\begin{pmatrix}-CA^{-1}&I\end{pmatrix}
 Q
 \begin{pmatrix}-CA^{-1}&I\end{pmatrix}^{\!\top}.
 \label{eq:sup:report-Qred}
\end{equation}
 The slow system receives both direct noise and fast noise transmitted by the equilibrium response \(-CA^{-1}\).

Multiplication by $\eta$ gives the leading covariance
contributions:
\begin{equation}
 [\eta\widetilde\Sigma_\rho]_{uu}=O(\eta),\qquad
 [\eta\widetilde\Sigma_\rho]_{uv}=O(\varepsilon),\qquad
 [\eta\widetilde\Sigma_\rho]_{vv}=O(\varepsilon).
 \label{eq:sup:report-actual-block-orders}
\end{equation}
These are orders for the leading covariance approximation.
Proposition~\ref{prop:covariance} bounds the difference
between the actual covariance and this approximation by
$O(\eta^{3/2})$. Consequently, under
Assumptions~\ref{as:smooth}--\ref{as:taylor}, it gives
\[
 \|\operatorname{Cov}(v)\|
 +\|\operatorname{Cov}(u,v)\|
 =O(\varepsilon+\eta^{3/2}).
\]
When $\rho\lesssim\sqrt{\eta}$, the approximation error
can be comparable to or larger than the leading
$O(\varepsilon)$ terms.
The stronger bounds
\[
 \|\operatorname{Cov}(v)\|=O(\varepsilon),
 \qquad
 \|\operatorname{Cov}(u,v)\|=O(\varepsilon),
\]
uniformly over all sufficiently small positive $\eta$
and $\rho$, are additional conditions collected in
Assumption~\ref{as:block-localization}.

Extrapolation needs terms through \(\rho^2\).  \[
 U=U_0+\rho U_1+\rho^2U_2+O(\rho^3),\qquad
 R=R_0+\rho R_1+O(\rho^2),\qquad
 V=V_0+\rho V_1+O(\rho^2).
\]
Differentiate \eqref{eq:sup:report-regular-U}--\eqref{eq:sup:report-regular-V} to
obtain the coefficients by successive Lyapunov or Sylvester solves.
Transforming back gives
\begin{equation}
 \Sigma_\rho=\Sigma_0+\rho\Sigma_1+\rho^2\Sigma_2+O(\rho^3).
 \label{eq:sup:report-Sigma-series}
\end{equation}
The coefficients follow from
\(\Sigma_\rho=T^{-1}\widetilde\Sigma_\rho T^{-\top}\); in particular,
\(\Sigma_0=\operatorname{diag}(U_0,0)\).

\subsection{Derivation of the Covariance Expansion}
\label{app:singular-coefficients}

We now justify the expansion
$\Sigma_\rho=\Sigma_0+\rho\Sigma_1+\rho^2\Sigma_2+O(\rho^3)$
and derive equations for its coefficients.
The argument first establishes that the rescaled covariance
equations have a solution analytic in $\rho$ near zero.
We then compute its coefficients and transform back to the
original coordinates.

\paragraph{Existence of the Expansion.}
Let $\mathbb S^m$ denote the space of real symmetric
$m\times m$ matrices, and set
\[
 \mathcal X
 =\mathbb S^{d_x}\times\mathbb R^{d_x\times d_\theta}
  \times\mathbb S^{d_\theta}.
\]
The rescaled equations
\eqref{eq:sup:report-regular-U}--%
\eqref{eq:sup:report-regular-V} can be written as
\[
 \mathscr A(\rho)(U,R,V)+\mathscr F(\rho)=0.
\]
Here $\mathscr A(\rho):\mathcal X\to\mathcal X$ contains
all terms linear in $U,R,V$, while $\mathscr F(\rho)$
consists of the terms involving the blocks of $Q$.
In fixed bases, both have polynomial entries in $\rho$.

To prove invertibility at zero, consider the homogeneous
system $\mathscr A(0)(U,R,V)=0$:
\[
 \begin{aligned}
 AU+UA^\top&=0,\\
 AR+UC^\top&=0,\\
 SV+VS^\top+CR+R^\top C^\top&=0.
 \end{aligned}
\]
Since $A$ is Hurwitz, its Lyapunov equation in the first
line has only the solution $U=0$. The second line then
gives $R=0$ because $A$ is invertible. Finally, Hurwitz
stability of $S$ gives $V=0$ from the third line.
Thus $\mathscr A(0)$ has trivial kernel.
It maps the finite-dimensional space $\mathcal X$ into
itself and is therefore invertible.

Its determinant remains nonzero on a neighborhood of zero.
Because its entries are polynomial in $\rho$, the inverse
$\mathscr A(\rho)^{-1}$ is analytic on that neighborhood.
Consequently, the solution is analytic and is given by
\[
 (U(\rho),R(\rho),V(\rho))
 =-\mathscr A(\rho)^{-1}\mathscr F(\rho).
\]

For sufficiently small positive $\rho$, assemble these
matrices as
\[
 \widetilde\Sigma_\rho=
 \begin{pmatrix}
 U(\rho)&\rho R(\rho)\\
 \rho R(\rho)^\top&\rho V(\rho)
 \end{pmatrix}.
\]
The equivalence of the rescaled equations and
\eqref{eq:sup:report-normal-lyapunov} shows that this matrix
solves the transformed Lyapunov equation.
Proposition~\ref{prop:semigroup} ensures that $L_\rho$ is
Hurwitz for sufficiently small positive $\rho$, so that
equation has a unique solution.
Hence the assembled matrix is precisely
$T\Sigma_\rho T^\top$.
This establishes the expansion for the leading covariance
coefficient.

\paragraph{Calculation of the Coefficients}
To obtain the assembled covariance through order $\rho^2$,
we need $U$ through order $\rho^2$ and $R,V$ through order
$\rho$, since the latter two blocks already carry a factor
$\rho$. Write
\[
 \begin{aligned}
 U(\rho)&=U_0+\rho U_1+\rho^2U_2+O(\rho^3),\\
 R(\rho)&=R_0+\rho R_1+O(\rho^2),\\
 V(\rho)&=V_0+\rho V_1+O(\rho^2).
 \end{aligned}
\]
These are power-series coefficients; in particular,
$U_2=U''(0)/2$.
First compute $U_0,R_0,V_0$ from the successive equations
\eqref{eq:sup:report-U0-equation}--%
\eqref{eq:sup:report-V0-equation}.

Next, match the coefficients of $\rho$ in the three
rescaled equations. This gives, in the order
$U_1,R_1,V_1$,
\begin{align}
 AU_1+U_1A^\top
 &=\Lambda C U_0+U_0C^\top\Lambda^\top
   +\Lambda Q_{\theta x}+Q_{x\theta}\Lambda^\top,
 \label{eq:explicit-U1}\\
 AR_1
 &=\Lambda C R_0+\Lambda S V_0-U_1C^\top-R_0S^\top
   +\Lambda Q_{\theta\theta},
 \label{eq:explicit-R1}\\
 SV_1+V_1S^\top
 &=-CR_1-R_1^\top C^\top.
 \label{eq:explicit-V1}
\end{align}
Matching the coefficient of $\rho^2$ in
\eqref{eq:sup:report-regular-U} then gives
\begin{equation}
 \begin{aligned}
 AU_2+U_2A^\top
 ={}&\Lambda C U_1+U_1C^\top\Lambda^\top\\
 &+\Lambda SR_0^\top+R_0S^\top\Lambda^\top
   -\Lambda Q_{\theta\theta}\Lambda^\top.
 \end{aligned}
 \label{eq:explicit-U2}
\end{equation}
Each right-hand side contains only known matrices at the
time it is used. The equations for $U_1,V_1,U_2$ have unique
solutions because the corresponding Lyapunov operators
are invertible; the equation for $R_1$ is uniquely solved
by multiplication by $A^{-1}$.

\paragraph{Return to the Original Coordinates.}

Define
\[
 \widetilde\Sigma_0=\begin{pmatrix}U_0&0\\0&0\end{pmatrix},\qquad
 \widetilde\Sigma_1=\begin{pmatrix}U_1&R_0\\R_0^\top&V_0\end{pmatrix},\qquad
 \widetilde\Sigma_2=\begin{pmatrix}U_2&R_1\\R_1^\top&V_1\end{pmatrix}.
\]
The assembled expansion is
\[
 \widetilde\Sigma_\rho
 =\widetilde\Sigma_0+\rho\widetilde\Sigma_1
  +\rho^2\widetilde\Sigma_2+O(\rho^3).
\]
Using $T$ from \eqref{eq:sup:report-T-transform}, set
\[
 \Sigma_j=T^{-1}\widetilde\Sigma_j(T^{-1})^\top,
 \qquad j=0,1,2.
\]

Since $T$ is independent of $\rho$, transforming back yields
\[
 \Sigma_\rho
 =\Sigma_0+\rho\Sigma_1+\rho^2\Sigma_2+O(\rho^3),
 \qquad
 \Sigma_0=\operatorname{diag}(U_0,0).
\]
After choosing $\rho_0$ so that $[0,\rho_0]$ lies inside
the analyticity neighborhood, the derivatives needed for
these Taylor remainders are uniformly bounded.
Thus the final remainder is bounded by $C\rho^3$ for
$0\le\rho\le\rho_0$, where the value at zero denotes the
analytic extension.
Substitution into
\eqref{eq:report-b0}--\eqref{eq:report-b2}
then gives the three bias coefficients.

\subsection{Bias Coefficients}

The preceding subsection determines the covariance coefficients
$\Sigma_0,\Sigma_1,\Sigma_2$.
We now use them to compute the bias coefficients
$b_0,b_1,b_2$ and establish the stationary mean expansion
in the two step sizes.
The starting point is the formula from
Theorem~\ref{thm:fixed-ratio}:
\[
 b(\rho)=-J^{-1}\{\mathcal C(\Sigma_\rho)+r_\rho\}.
\]

First consider the direct Markov response $r_\rho$.
Let $(Y_0,Y_1)$ have the stationary one-step distribution
$\mathbb P(Y_0=y,Y_1=y')=\mu(y)P(y,y')$, and write
\[
 \xi_0=G(Y_0,z^\star)
 =\begin{pmatrix}\xi_0^x\\ \xi_0^\theta\end{pmatrix}.
\]
By \eqref{eq:sup:report-markov-response},
\[
 r_\rho
 =\mathbb E\!\left[
 D_z\mathcal U(Y_1,z^\star)D_\rho\xi_0
 \right].
\]
The matrix $D_\rho$ leaves the fast component unchanged
and multiplies the slow component by $\rho$.
Since the other quantities in this expectation do not
depend on $\rho$, we obtain the exact decomposition
\begin{equation}
 \begin{aligned}
 r_\rho&=r_0+\rho r_1,\\
 r_0&=\mathbb E[D_x\mathcal U(Y_1,z^\star)\xi_0^x],\\
 r_1&=\mathbb E[D_\theta\mathcal U(Y_1,z^\star)
                          \xi_0^\theta].
 \end{aligned}
 \label{eq:sup:report-r-series}
\end{equation}
Here $D_x\mathcal U$ and $D_\theta\mathcal U$ are
derivatives of the full vector-valued corrector with
respect to the fast and slow variables, respectively.

Next, insert this decomposition and the covariance
expansion \eqref{eq:sup:report-Sigma-series} into
the formula for $b(\rho)$.
Because $\mathcal C$ is linear and $J^{-1}$ is fixed,
\begin{equation}
 b(\rho)=b_0+\rho b_1+\rho^2b_2+O(\rho^3),
 \label{eq:sup:report-b-series}
\end{equation}
where
\begin{align}
 b_0&=-J^{-1}\{\mathcal C(\Sigma_0)+r_0\},
 \label{eq:sup:report-b0}\\
 b_1&=-J^{-1}\{\mathcal C(\Sigma_1)+r_1\},
 \label{eq:sup:report-b1}\\
 b_2&=-J^{-1}\mathcal C(\Sigma_2).
 \label{eq:sup:report-b2}
\end{align}
These vectors are independent of both step sizes.
There is no quadratic term in $r_\rho$, so $b_2$ has
no direct Markov-response term.
It can nevertheless depend on temporal correlations
through the covariance coefficient $\Sigma_2$.

To pass from the coefficient expansion to the stationary
mean, we combine two uniform bounds.
Theorem~\ref{thm:fixed-ratio} gives
\[
 \|m_{\eta,\rho}-\eta b(\rho)\|
 \le C_1\eta^{3/2},
\]
while \eqref{eq:sup:report-b-series} gives
\[
 \|b(\rho)-b_0-\rho b_1-\rho^2b_2\|
 \le C_2\rho^3.
\]
The constants are independent of sufficiently small
positive $\eta,\rho$.
Multiplying the second bound by $\eta$ and adding the
first yields
\[
 \bigl\|m_{\eta,\rho}
 -\eta b_0-\eta\rho b_1-\eta\rho^2b_2\bigr\|
 \le C_1\eta^{3/2}+C_2\eta\rho^3.
\]
Using
$\eta\rho=\varepsilon$,
$\eta\rho^2=\varepsilon^2/\eta$, and
$\eta\rho^3=\varepsilon^3/\eta^2$
proves Theorem~\ref{thm:singular}, restated below.

\begin{appendixsingularrestatement}[Singular expansion; restatement]
Under
\Cref{as:sup:smooth,as:sup:hurwitz,as:sup:markov,as:sup:localization,as:sup:taylor},
there exist $\eta_0,\rho_0>0$ such that, for
$0<\eta\le\eta_0$ and $0<\rho\le\rho_0$, with
$\varepsilon=\eta\rho$,
\begin{equation}
 \begin{aligned}
 m_{\eta,\rho}
 &=\eta b_0+\varepsilon b_1
   +\frac{\varepsilon^2}{\eta}b_2\\
 &\quad
   +O\!\left(\eta^{3/2}+\frac{\varepsilon^3}{\eta^2}\right),
 \end{aligned}
 \label{eq:sup:report-mixed-expansion}
\end{equation}
where the implied constant is independent of $\eta$
and $\rho$.
\end{appendixsingularrestatement}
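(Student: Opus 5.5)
The plan is to combine two uniform estimates already available: the fixed-ratio expansion of Theorem~\ref{thm:fixed-ratio}, whose $O(\eta^{3/2})$ remainder is uniform in $0<\rho\le\rho_0$, and a Taylor expansion of the coefficient $b(\rho)$ to order $\rho^2$ with an $O(\rho^3)$ remainder whose constant does not depend on $\rho$. Since $b(\rho)=-J^{-1}\{\mathcal C(\Sigma_\rho)+r_\rho\}$ with $J$ and $\mathcal C$ fixed and linear, it suffices to expand $\Sigma_\rho$ and $r_\rho$ in $\rho$.

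The direct Markov response is immediate. In \eqref{eq:sup:report-markov-response} the matrix $D_\rho=\operatorname{diag}(I,\rho I)$ enters linearly and nothing else depends on $\rho$, so $r_\rho=r_0+\rho r_1$ exactly, as in \eqref{eq:sup:report-r-series}.

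For the covariance coefficient I would work in tangent coordinates with $\widetilde\Sigma_\rho=T\Sigma_\rho T^\top$, which solves the transformed Lyapunov equation with $L_\rho$ and $B_\rho=TD_\rho$. I would use the rescaled block representation \eqref{eq:sup:report-regularized-Sigma}, leave the $uu$ equation as is, divide the $uv$ equation by $\rho$ and the $vv$ equation by $\rho^2$. This gives \eqref{eq:sup:report-regular-U}--\eqref{eq:sup:report-regular-V}, a linear system $\mathscr A(\rho)(U,R,V)+\mathscr F(\rho)=0$ on a fixed finite-dimensional space, with coefficients polynomial in $\rho$.

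The key step is showing that $\mathscr A(0)$ is invertible. It is block triangular: the Lyapunov operator of $A$, then left multiplication by $A$, then the Lyapunov operator of $S$. Each diagonal block is invertible by Assumption~\ref{as:hurwitz}. By continuity of the determinant, $\mathscr A(\rho)^{-1}$ then exists and is analytic on a neighborhood of $0$, so $(U,R,V)(\rho)$ is analytic there.

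For $\rho>0$ the assembled matrix solves the original transformed Lyapunov equation. That solution is unique because $L_\rho$ is Hurwitz by Proposition~\ref{prop:semigroup}, so the assembled matrix equals $T\Sigma_\rho T^\top$.

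Taylor expansion then proceeds block by block. I expand $U$ to order $\rho^2$ and $R,V$ to order $\rho$, since the latter already carry a factor $\rho$. Shrinking $\rho_0$ so that $[0,\rho_0]$ lies in the analyticity disc gives a uniform $C\rho^3$ remainder. Transforming back with the fixed matrix $T$ yields $\Sigma_\rho=\Sigma_0+\rho\Sigma_1+\rho^2\Sigma_2+O(\rho^3)$. This defines $b_0,b_1,b_2$ as in \eqref{eq:report-b0}--\eqref{eq:report-b2}, with $b_2$ having no $r$-term, and gives $\|b(\rho)-b_0-\rho b_1-\rho^2 b_2\|\le C_2\rho^3$.

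Finally, multiply this estimate by $\eta$ and add the bound $\|m_{\eta,\rho}-\eta b(\rho)\|\le C_1\eta^{3/2}$. The identities $\eta\rho=\varepsilon$, $\eta\rho^2=\varepsilon^2/\eta$ and $\eta\rho^3=\varepsilon^3/\eta^2$ then give \eqref{eq:report-mixed-expansion}.

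The main obstacle is the uniformity of the remainder as $\rho\downarrow0$. All of the real work for this sits in Proposition~\ref{prop:covariance} and Theorem~\ref{thm:fixed-ratio}, which are taken as given. The remaining point to check carefully is that the rescaled system is genuinely regular at $\rho=0$. That is, after the divisions, every term of $B_\rho QB_\rho^\top$ and of $L_\rho\widetilde\Sigma+\widetilde\Sigma L_\rho^\top$ must carry at least the powers of $\rho$ being divided out. The identity $A\Lambda+B=0$ is what guarantees this for the $O(\rho)$ off-diagonal blocks of $L_\rho$.
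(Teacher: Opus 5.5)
Your proposal is correct and follows the paper's proof essentially step for step: the rescaled tangent-coordinate system \eqref{eq:sup:report-regular-U}--\eqref{eq:sup:report-regular-V}, invertibility of the block-triangular $\mathscr A(0)$ because $A$ and $S$ are Hurwitz, analyticity of the solution near $\rho=0$, identification with $T\Sigma_\rho T^\top$ through uniqueness for the Hurwitz $L_\rho$, the exact affine form $r_\rho=r_0+\rho r_1$, and finally adding the uniform bounds $C_1\eta^{3/2}$ and $\eta\cdot C_2\rho^3$. Your regularity check also holds: after substituting the rescaled blocks, every block of $L_\rho\widetilde\Sigma_\rho+\widetilde\Sigma_\rho L_\rho^\top+B_\rho QB_\rho^\top$ carries the power of $\rho$ that is divided out, which relies on $A\Lambda+B=0$.
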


The factor $\varepsilon^2/\eta$ comes from the quadratic
term $\eta\rho^2$ in the ratio expansion.
Although it contains a negative power of $\eta$ when
written in the two step sizes, it satisfies
\[
 0<\frac{\varepsilon^2}{\eta}
 =\eta\rho^2\le\eta\rho_0^2.
\]
Thus the mixed contribution tends to zero uniformly
as $\eta\downarrow0$ in the stated parameter range.
Along paths with $\rho\downarrow0$,
\[
 \frac{\varepsilon^2/\eta}{\varepsilon}=\rho\longrightarrow0,
\]
so its step-size factor is $o(\varepsilon)$.
Whether this contribution can be distinguished from
the remainder is considered in the next subsection.

\subsection{Visibility along \texorpdfstring{\(\varepsilon=\eta^p\)}{eps = eta\^p}}

The expansion in the preceding subsection displays three
bias terms, but some may be smaller than the approximation
error. We now compare their sizes along a prescribed
relation between the two step sizes.
This identifies which terms can be distinguished from
the proved remainder and determines the powers relevant
to extrapolation in the next subsection.

Fix $p>1$ and choose the step-size path
\begin{equation}
 \varepsilon=\eta^p.
 \label{eq:sup:report-path}
\end{equation}
Then $\rho=\varepsilon/\eta=\eta^{p-1}\downarrow0$
as $\eta\downarrow0$.
Define $m(\eta):=m_{\eta,\eta^{p-1}}$.
Substitution into \eqref{eq:sup:report-mixed-expansion}
gives
\begin{equation}
 \begin{aligned}
 m(\eta)
 &=\eta b_0+\eta^p b_1+\eta^{2p-1}b_2\\
 &\quad+O(\eta^{3/2}+\eta^{3p-2}).
 \end{aligned}
 \label{eq:sup:report-path-expansion}
\end{equation}
Both contributions to the remainder must be retained:
$O(\eta^{3/2})$ comes from the stationary mean approximation,
and $O(\eta^{3p-2})$ comes from truncating the ratio expansion.

A nonzero bias term is resolved by this expansion when,
after subtracting the preceding terms, the remaining error
is of smaller order than that term.
For positive powers of $\eta$, a smaller exponent means
slower decay as $\eta\downarrow0$.
Since
\[
 1<p<2p-1<3p-2,
\]
the error $O(\eta^{3p-2})$ from omitting higher-order
terms in the expansion of $b(\rho)$ is of smaller order
than every displayed bias term.
The comparison with $\eta^{3/2}$
therefore determines which terms are resolved.
For nonzero corresponding coefficients, $\eta b_0$ is
resolved for every $p>1$, while
\[
 \begin{aligned}
 \eta^p b_1 &: \qquad 1<p<\frac32,\\
 \eta^{2p-1}b_2 &: \qquad 1<p<\frac54.
 \end{aligned}
\]

For example, subtracting the first two contributions and
dividing by the mixed-term scale gives
\[
 \frac{m(\eta)-\eta b_0-\eta^p b_1}{\eta^{2p-1}}
 =b_2+O\!\left(\eta^{5/2-2p}+\eta^{p-1}\right).
\]
Both error terms tend to zero when $1<p<5/4$.
At $p=5/4$, the first error bound is only $O(1)$,
so the stated expansion does not guarantee convergence
of this normalized expression to $b_2$.

Under the additional assumptions of Theorem~\ref{thm:iid},
the stationary mean remainder improves to $O(\eta^2)$.
The path expansion then has remainder
$O(\eta^2+\eta^{3p-2})$.
The same comparison extends the range for a nonzero
$b_1$ term to $1<p<2$ and for a nonzero $b_2$ term to
$1<p<3/2$.
These endpoints are strict: when a term and the allowed
remainder have the same order, the bound alone does not
separate them.

More generally, a term $\rho^j$ in the analytic expansion
of $b(\rho)$ contributes
$\eta\rho^j=\eta^{1+j(p-1)}$ to $\eta b(\rho)$.
Thus the ratio expansion generates the exponents
\begin{equation}
 \gamma_j=1+j(p-1),\qquad j=0,1,2,\ldots:
 \qquad 1,\ p,\ 2p-1,\ 3p-2,\ldots.
 \label{eq:sup:report-path-ladder}
\end{equation}
This sequence describes the powers in $\eta b(\rho)$;
expanding it further does not reduce the stationary
approximation error.
If a higher-order stationary expansion establishes
additional terms, such as a term proportional to $\eta^2$,
their exponents must also be included when choosing
extrapolation weights. Contributions with the same exponent
are combined into a single coefficient along the path.

\subsection{Richardson--Romberg Extrapolation along Step-Size Paths}

The preceding subsection identifies the bias powers along
a chosen step-size path. We now combine stationary means
at several step sizes so that selected bias terms cancel
while the equilibrium $z^\star$ is preserved.
We begin with a fixed step-size ratio, then consider the
path $\varepsilon=h^p$, and finally describe the weights
for cancelling an arbitrary finite set of bias powers.

First fix $\rho$ and let $M(h)$ denote the stationary mean
at fast step $h$ and slow step $\rho h$.
Theorem~\ref{thm:fixed-ratio} gives
\(M(h)=z^\star+h b(\rho)+O(h^{3/2})\).
Halving both step sizes preserves $\rho$, so the same
coefficient $b(\rho)$ appears at the finer level.
The weights $-1$ and $2$ preserve the equilibrium and
cancel the term linear in $h$, giving
\begin{equation}
 2M(h/2)-M(h)=z^\star+O(h^{3/2}).
 \label{eq:sup:report-fixed-ratio-RR}
\end{equation}

Now fix $p>1$ and use $M(h)$ for the stationary mean
at fast step $h$ and slow step $h^p$.
Refinement must preserve this relation, so the step-size
pairs are \((h_j,\varepsilon_j)=(2^{-j}h,h_j^p)\).
The preceding two-level rule cancels the term proportional
to $h$ but generally leaves the term proportional to $h^p$.
To cancel both powers while preserving $z^\star$, combine
three levels with weights satisfying
\begin{equation}
 \sum_{j=0}^2w_j=1,\qquad
 \sum_{j=0}^2w_j2^{-j}=0,\qquad
 \sum_{j=0}^2w_j2^{-jp}=0.
 \label{eq:sup:report-three-weight-system}
\end{equation}
The first constraint preserves the common equilibrium.
The second and third cancel the $h$ and $h^p$ terms,
whose coefficients at level $j$ are multiplied by
$2^{-j}$ and $2^{-jp}$, respectively.
Put \(q_1=2^{-1}\) and \(q_p=2^{-p}\).
Solving these three linear equations gives
\begin{equation}
 \boxed{
 w_0=\frac{q_1q_p}{(1-q_1)(1-q_p)},\qquad
 w_1=-\frac{q_1+q_p}{(1-q_1)(1-q_p)},\qquad
 w_2=\frac{1}{(1-q_1)(1-q_p)}.}
 \label{eq:sup:report-three-weights}
\end{equation}

Applying these weights to
\eqref{eq:sup:report-path-expansion} removes its first
two bias terms. The remaining stationary bias is
\(O(h^{\min\{2p-1,\,3/2\}})\).
The exponent reflects the surviving mixed term and the
stationary approximation error.
The additional error of order $h^{3p-2}$ is absorbed
into this bound because $3p-2>2p-1$ for $p>1$.
When the sharper uniform weak remainder is available,
as under the additional assumptions of
Theorem~\ref{thm:iid}, replace \(3/2\) by \(2\).

More generally, suppose we wish to cancel the distinct
positive bias powers \(\gamma_1,\ldots,\gamma_M\).
Using $M+1$ levels of the same geometric refinement,
the weights must satisfy
\begin{equation}
 \sum_{j=0}^Mw_j=1,
 \qquad
 \sum_{j=0}^Mw_j2^{-j\gamma_\ell}=0,
 \quad \ell=1,\ldots,M.
 \label{eq:sup:report-general-RR-system}
\end{equation}
As before, the first equation preserves the equilibrium,
and each remaining equation cancels one selected power.
These conditions can equivalently be expressed through
the polynomial
\[
 W(t)=\sum_{j=0}^Mw_jt^j
 =\prod_{\ell=1}^M
 \frac{t-2^{-\gamma_\ell}}{1-2^{-\gamma_\ell}}.
\]
Each numerator factor gives
$W(2^{-\gamma_\ell})=0$, while the denominator normalizes
the product so that $W(1)=1$.
Expanding this polynomial in powers of $t$ gives the
weights $w_0,\ldots,w_M$.

For example, adding \(\gamma_3=2p-1\) to the powers
$1$ and $p$ gives a four-level rule that also cancels
the term containing $b_2$.
The resulting bias order remains limited by the remainder
established for the stationary expansion; the weight
equations alone do not justify a sharper remainder.
For finite runs, initialization error must also be
controlled at the retained remainder scale to preserve
the stated bias order.

\section{Additive Independent Noise}\label{app:iid}

For additive i.i.d.\ noise, we can sharpen the covariance and mean
remainders to $O(\eta^2)$. The key step is an $O(\eta^2)$ bound on
centered third moments, obtained from cancellations in their
stationary balance equation. We use this bound first to refine the
covariance and then to obtain the mean expansion. The theorem below
states the required smoothness and moment assumptions explicitly,
including localization around the selected equilibrium $z^\star$.

\begin{theorem}[\texorpdfstring{$O(\eta^2)$}{O(eta squared)} remainder under additive i.i.d.\ noise]
\label{thm:iid}
Consider
\begin{equation}
 z_{k+1}=z_k+\eta D_\rho\{\bar G(z_k)+\zeta_{k+1}\},
 \label{eq:assumption-iid-recursion}
\end{equation}
where the innovations are identically distributed, independent of one
another and of the past, centered, and have a finite sixth moment.
Assume that $\bar G(z^\star)=0$, that the matrices $A$ and $S$ in
\eqref{eq:J} are Hurwitz, and that $\bar G\in C^4(\mathbb R^{d_x+d_\theta})$
has bounded derivatives of orders one through four. For every
sufficiently small positive $\eta,\rho$, suppose that a stationary law
exists and satisfies
\begin{equation}
 \mathbb E\|z-z^\star\|^{2j}\le C_j\eta^j,
 \qquad j=1,2,3,
 \label{eq:iid-raw-localization}
\end{equation}
with constants independent of the two step sizes.
Let $Q_\zeta=\mathbb E[\zeta_1\zeta_1^\top]$, and let $\Sigma_\rho$
solve \eqref{eq:lyap} with $Q=Q_\zeta$. Then, uniformly for
$0<\rho\le\rho_0$,
\begin{align}
 \operatorname{Cov}(z-z^\star)
 &=\eta\Sigma_\rho+O(\eta^2),\label{eq:iid-covariance-result}\\
 m:=\mathbb E(z-z^\star)
 &=-\eta J^{-1}\mathcal C(\Sigma_\rho)+O(\eta^2).
 \label{eq:assumption-iid-refined-mean}
\end{align}
\end{theorem}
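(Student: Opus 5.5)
The plan is to rerun the argument of Proposition~\ref{prop:covariance} and Theorem~\ref{thm:fixed-ratio} for the additive i.i.d.\ recursion. Here no Poisson correction is needed: $\zeta_{k+1}$ is already a martingale difference independent of $\mathcal F_k$, and $r_\rho=0$. The only source of the $\eta^{3/2}$ loss there was that odd moments of $e$ were bounded by H\"older, e.g.\ $\E\|e\|^3\le C\eta^{3/2}$ and $\E\|e\|\le C\sqrt\eta$. So the key new ingredient is a bound on the \emph{centered} third moment,
\[
 \mathcal T:=\E[q\otimes q\otimes q]=O(\eta^2),\qquad q=e-m .
\]
This replaces the naive $O(\eta^{3/2})$. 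With it, every remainder term in the covariance balance and in the drift balance becomes $O(\eta^2)$.

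\textbf{Step 1 (a priori mean).} I will work in tangent coordinates $w=Te$ throughout, so that $L_\rho$ and $B_\rho=TD_\rho$ expose the $\rho$ factors. Taking expectations of the stationary update gives $\E\bar G(z)=0$. Combined with $\|R^{(2)}_{\bar G}\|\le C\|e\|^2$, this gives $Jm=O(\eta)$, so $m=O(\eta)$.

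\textbf{Step 2 (third moments).} I will expand the stationary identity $\E[q_{k+1}^{\otimes3}]=\E[q_k^{\otimes3}]$ with
\[
 q_{k+1}=q_k+\eta D_\rho\{\bar G(z_k)-\E\bar G(z_k)+\zeta_{k+1}\}.
\]
Terms with a single factor $\zeta_{k+1}$ vanish by independence and centering. Terms $\eta^2\E[q\otimes\zeta\otimes\zeta]$ equal $\eta^2\E[q]\otimes Q_\zeta=0$ because $q$ is centered; this is the signed cancellation. The $\eta^3\E\zeta^{\otimes3}$ term is $O(\eta^3)$. Terms with two drift factors are $O(\eta^2\cdot\eta^{3/2})$. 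The linear part gives the tensor Lyapunov operator $\mathcal L^{(3)}_\rho(\mathcal T)=\sum(L_\rho\text{ acting on each slot})\mathcal T$. The quadratic drift part contributes $\eta\E[q\otimes q\otimes(D^2\bar G[e,e]-\E D^2\bar G[e,e])]$, which is a fourth centered moment. By \eqref{eq:iid-raw-localization} it is $O(\eta\cdot\eta^2)$.

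Dividing by $\eta$ leaves
\[
 \mathcal L^{(3)}_\rho\mathcal T=O(\eta^2)+O(\eta\cdot\eta^{3/2})+O(\eta^3)\cdot\eta^{-1}.
\]
The next task is to invert $\mathcal L^{(3)}_\rho$ uniformly in $\rho$. \textbf{This is the main obstacle.} I expect a tensor analogue of Lemma~\ref{lem:cov-lyapunov-inverse} to hold: a block with $j$ slow slots should be rescaled by $\rho$, or equivalently its equation divided by $\rho$. At $\rho=0$ the resulting operator is block triangular in the number of slow slots, with diagonal pieces given by sums of $A$ and $S$ actions, which are invertible. I then need to check that the forcing in each block carries the matching power of $\rho$, since the slow rows of $B_\rho$ and $L_\rho$ are $O(\rho)$, exactly as in \eqref{eq:cov-residual-blocks}. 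I would first verify this bookkeeping for the $vvv$ block, which carries the worst weight $\rho^{-1}$.

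\textbf{Step 3 (covariance).} I will redo Step 2 of the proof of Proposition~\ref{prop:covariance} with $\mathfrak r=B_\rho R^{(2)}_{\bar G}$ and no corrector terms. The cross term $\E[q\,(R^{(2)})^\top]$ splits as
\[
 \tfrac12\E[q\,(D^2\bar G[q,q])^\top]+\E[q\,(R^{(3)}_{\bar G})^\top]+\text{terms involving }m .
\]
The first is linear in $\mathcal T$, hence $O(\eta^2)$. The second is $O(\E\|e\|^4)=O(\eta^2)$, using $C^4$ with bounded derivatives and the sixth moment. The terms involving $m$ are $O(\eta\cdot\eta)$. There is now no innovation-freezing error, because $\Delta Q=0$. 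The slow block keeps its factor $\rho$, so Lemma~\ref{lem:cov-lyapunov-inverse} yields \eqref{eq:iid-covariance-result}.

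\textbf{Step 4 (mean).} I will take expectations in the third-order Taylor expansion of $\bar G$:
\[
 0=Jm+\tfrac12D^2\bar G(z^\star)[\Gamma+mm^\top]+\tfrac16\E D^3\bar G(z^\star)[e,e,e]+O(\E\|e\|^4).
\]
The cubic term is handled by writing $\E e^{\otimes3}=\mathcal T+O(\|m\|\,\E\|q\|^2+\|m\|^3)=O(\eta^2)$. Substituting \eqref{eq:iid-covariance-result} and inverting the fixed matrix $J$ then gives \eqref{eq:assumption-iid-refined-mean}.
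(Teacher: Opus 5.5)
Your proposal is correct and follows essentially the same route as the paper: the a priori bound $m=O(\eta)$, then an $O(\eta^2)$ bound on the signed centered third moments from the stationary third-moment balance (single-innovation terms vanish by independence, two-innovation terms by centering), then a $\rho$-uniform inverse of the tensor Lyapunov operator that needs a factor $\rho$ only in the all-slow forcing block, and finally substitution into the covariance balance and into the third-order Taylor expansion of $\mathbb E\bar G(z)=0$. The differences are cosmetic (the paper expands the drift at $z^\star+m$ and proves the tensor inverse through the block-diagonalizing similarity $Q_\rho$ from the semigroup proof rather than by continuity at $\rho=0$); note, however, that in your continuity argument only the $vvv$ equation may be divided by $\rho$, because mixed-block forcing such as $\mathbb E[r^u q^u q^v]$ carries no factor $\rho$.
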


\begin{proof}
We first bound the mean and center the recursion. We then establish
an inverse estimate that is uniform as $\rho\downarrow0$ and use it
to control the third moments and the covariance, in that order.
The mean expansion follows from a final Taylor expansion of the
stationary drift equation. All constants below are independent of
sufficiently small positive $\eta$ and $\rho$.

\paragraph{Step 1: Mean Control and Centering.}
Stationary increment balance, together with the centered innovations
and the invertibility of $D_\rho$ for $\rho>0$, gives
$\mathbb E\bar G(z)=0$. Taylor expansion at $z^\star$ and
\eqref{eq:iid-raw-localization} give
\[
 0=Jm+O(\mathbb E\|z-z^\star\|^2)=Jm+O(\eta).
\]
Since $J$ is fixed and invertible, $m=O(\eta)$ uniformly in $\rho$.
This preliminary bound does not use the bias expansion to be proved.
Put $q=T(z-z^\star-m)$, where $T$ is defined in
\eqref{eq:tangent-coordinates}, and set $B_\rho=TD_\rho$.
Define the nonlinear remainder by
\[
 r_m(q)=B_\rho\bar G(z^\star+m+T^{-1}q)-L_\rho q.
\]
The centered recursion is then exactly
\begin{equation}
 q_{k+1}=q_k+\eta\{L_\rho q_k+r_m(q_k)+B_\rho\zeta_{k+1}\},
 \label{eq:assumption-iid-centered}
\end{equation}
with $\mathbb E q=0$ and $\mathbb E r_m(q)=0$. The localization
bounds and $m=O(\eta)$ imply the centered moment bounds below;
Taylor expansion at $z^\star+m$ gives the remainder estimates:
\begin{align}
 \mathbb E\|q\|^{2j}&\le C_j\eta^j,\qquad j=1,2,3,
 \label{eq:assumption-iid-even}\\
 \|r_m^u(q)\|&\le C(\|q\|^2+\eta\|q\|+\eta),\nonumber\\
 \|r_m^v(q)\|&\le C\rho(\|q\|^2+\eta\|q\|+\eta).
 \label{eq:assumption-iid-remainder}
\end{align}
Indeed, the linearization at $z^\star+m$ differs from $J$ by
$O(\eta)$. Subtracting the expectation of the quadratic remainder
introduces a constant of order $\eta$. The factor $\rho$ in the
second estimate comes from the lower block row of $B_\rho$.
We will also use a linear-growth bound for the full centered drift
$d(q)=L_\rho q+r_m(q)$. Global Lipschitz continuity gives
\[
 \|d^u(q)\|\le C(\|q\|+\sqrt\eta),\qquad
 \|d^v(q)\|\le C\rho(\|q\|+\sqrt\eta),\qquad
 \mathbb E d(q)=0.
\]

\paragraph{Step 2: A Uniform Inverse for the Moment Equations.}
Both the covariance and third-moment equations involve the following
operator, with $s=2$ and $s=3$, respectively:
\[
 \mathscr K_{s,\rho}
 =\sum_{a=1}^s I^{\otimes(a-1)}\otimes L_\rho
                         \otimes I^{\otimes(s-a)}.
\]
Its inverse need not be uniformly bounded on arbitrary tensors.
The estimate we need retains the factor $\rho$ in the block whose
indices are all slow coordinates. Specifically, if a tensor $R$
satisfies
\[
 \|R\|\le a_\eta,\qquad
 \|R_{v\cdots v}\|\le\rho a_\eta,
\]
then
\begin{equation}
 \|\mathscr K_{s,\rho}^{-1}R\|\le C_s a_\eta.
 \label{eq:iid-tensor-inverse}
\end{equation}
Here $R_{v\cdots v}$ denotes the block in which every tensor index
is a slow coordinate. To prove the estimate, use the change of basis
from the proof of Proposition~\ref{prop:sup:semigroup}:
\[
 Q_\rho^{-1}L_\rho Q_\rho
 =\operatorname{diag}(A_f(\rho),\rho S_s(\rho)),
 \qquad Q_\rho=I+O(\rho),
\]
where $A_f(\rho)$ and $S_s(\rho)$ are uniformly Hurwitz.
The all-slow block of the transformed forcing
$(Q_\rho^{-1})^{\otimes s}R$ is still $O(\rho a_\eta)$.
Indeed, any contribution from a block containing a fast index
includes at least one fast-to-slow entry of $Q_\rho^{-1}$,
which is $O(\rho)$.

In the transformed operator, a tensor block with at least one fast
index has a uniformly bounded inverse: the integral of its product
semigroup contains a uniformly decaying fast factor. The all-slow
inverse is $O(1/\rho)$, but its forcing is $O(\rho a_\eta)$.
Thus every solution block is $O(a_\eta)$. Transforming back proves
\eqref{eq:iid-tensor-inverse}.

\paragraph{Step 3: The Centered Third-Moment Bound.}
We now derive the bound on signed third moments that sharpens the
covariance remainder. Expanding the stationary third-moment identity
using \eqref{eq:assumption-iid-centered} and dividing by $\eta$ gives
an equation of the form
\[
 0=\mathscr K_{3,\rho}\mathbb E(q^{\otimes3})+R_3.
\]
We identify $R_3$ explicitly to show where independence and centering
are used. Write $q=q_k$, $q'=q_{k+1}$, $d=d(q)$,
$n=B_\rho\zeta_{k+1}$, and $h=d+n$, so that $q'=q+\eta h$.
For fixed indices $i,j,\ell$, let $\sum_{\mathrm{cyc}}$ denote
the sum over $(i,j,\ell)$, $(j,\ell,i)$, and $(\ell,i,j)$,
retaining all three placements even when indices coincide.
Expanding the product and using stationarity gives the exact identity
\[
 \begin{aligned}
 0={}&\sum_{\mathrm{cyc}}\mathbb E[h_iq_jq_\ell]
 +\eta\sum_{\mathrm{cyc}}\mathbb E[h_ih_jq_\ell]
 +\eta^2\mathbb E[h_ih_jh_\ell].
 \end{aligned}
\]
The innovation $n$ is centered and independent of $q$, and hence
of $d(q)$, so every term containing exactly one innovation vanishes.
Writing $V_{ij}=\mathbb E[n_in_j]$, the terms containing exactly
two innovations also vanish:
\[
 \mathbb E[n_in_jq_\ell]=V_{ij}\mathbb E q_\ell=0,
 \qquad
 \mathbb E[n_in_jd_\ell]=V_{ij}\mathbb E d_\ell=0.
\]
These cancellations use both $\mathbb E q=0$ and
$\mathbb E d=0$. The term containing three innovations remains;
no symmetry of the innovation distribution is required.

Set $M_{ij\ell}=\mathbb E[q_iq_jq_\ell]$ and $r=r_m(q)$.
Substituting $d=L_\rho q+r$ in the first-order term identifies
$\mathscr K_{3,\rho}M$, leaving
\[
 \begin{aligned}
 (R_3)_{ij\ell}={}&
 \sum_{\mathrm{cyc}}\mathbb E[r_iq_jq_\ell]
 +\eta\sum_{\mathrm{cyc}}\mathbb E[d_id_jq_\ell]\\
 &+\eta^2\mathbb E[d_id_jd_\ell]
 +\eta^2\mathbb E[n_in_jn_\ell].
 \end{aligned}
\]
The remainder estimate in \eqref{eq:assumption-iid-remainder}
bounds the first group by
\[
 C\{\mathbb E\|q\|^4
       +\eta\mathbb E\|q\|^3
       +\eta\mathbb E\|q\|^2\}=O(\eta^2).
\]
The linear-growth bound on $d$ and the centered moment bounds give
orders $\eta^{5/2}$ and $\eta^{7/2}$ for the second and third
groups, respectively. The finite third moment of the innovations
gives $O(\eta^2)$ for the fourth group.

To apply Step~2, we also need the all-slow block. Each term in the
first group then contains a factor $\rho$ from $r^v$.
The remaining three groups contain at least $\rho^2$, $\rho^3$,
and $\rho^3$, respectively, from their slow drift or innovation
coordinates. Thus $\|R_3\|\le C\eta^2$ and
$\|(R_3)_{vvv}\|\le C\rho\eta^2$. By
\eqref{eq:iid-tensor-inverse},
\begin{equation}
 \max_{i,j,\ell}|\mathbb E(q_iq_jq_\ell)|\le C\eta^2.
 \label{eq:assumption-iid-signed-third}
\end{equation}

\paragraph{Step 4: Covariance Refinement.}
We next use the signed third-moment bound in the stationary
covariance equation. Let $P_c=\mathbb E(qq^\top)$.
Independence and centering of the innovations give the exact balance
\begin{align}
 0={}&L_\rho P_c+P_cL_\rho^\top
       +\eta B_\rho Q_\zeta B_\rho^\top\nonumber\\
 &+\mathbb E\{q r_m(q)^\top+r_m(q)q^\top\}
       +\eta\mathbb E\{d(q)d(q)^\top\}.
 \label{eq:assumption-iid-covariance-balance}
\end{align}
Expand $r_m$ through its quadratic term at the stationary mean
$z^\star+m$. In the cross terms with $q$, the change in the linear
coefficient contributes $O(\eta)\mathbb E\|q\|^2=O(\eta^2)$.
The quadratic term contracts with the signed third moments in
\eqref{eq:assumption-iid-signed-third} and is therefore
$O(\eta^2)$. The cubic remainder contributes
$O(\mathbb E\|q\|^4)=O(\eta^2)$. All constants subtracted when
centering these terms disappear because $\mathbb E q=0$.
Finally, the discretization term satisfies
$\eta\mathbb E\|d(q)\|^2=O(\eta^2)$.

Let $R_2$ denote the sum of the nonlinear cross terms and the
discretization term in this balance. Its slow--slow block retains
a factor $\rho$ in each nonlinear term and $\rho^2$ in the
discretization term. Consequently,
\[
 \|R_2\|\le C\eta^2,\qquad
 \|(R_2)_{vv}\|\le C\rho\eta^2.
\]
Subtract the Lyapunov equation for
$\eta\widetilde\Sigma_\rho=\eta T\Sigma_\rho T^\top$ and apply
\eqref{eq:iid-tensor-inverse} with $s=2$. This yields
\begin{equation}
 P_c=\eta\widetilde\Sigma_\rho+O(\eta^2).
 \label{eq:assumption-iid-refined-covariance}
\end{equation}
Transforming back with the fixed matrix $T$ proves
\eqref{eq:iid-covariance-result}. Since $m=O(\eta)$, adding
$mm^\top$ also gives the raw second moment needed for the mean
expansion:
\begin{equation}
 \mathbb E\{(z-z^\star)(z-z^\star)^\top\}
 =T^{-1}P_cT^{-\top}+mm^\top
 =\eta\Sigma_\rho+O(\eta^2).
 \label{eq:assumption-iid-raw-second}
\end{equation}

\paragraph{Step 5: Mean Expansion.}
It remains to substitute the moment estimates into
$\mathbb E\bar G(z)=0$. Expand $\bar G$ through degree three
at $z^\star$. Every raw signed third moment is $O(\eta^2)$:
using $z-z^\star=T^{-1}q+m$, it is a linear combination of the
centered third moments in \eqref{eq:assumption-iid-signed-third},
products of $m$ with entries of $P_c$, and products of three
entries of $m$.
The fourth-order remainder is bounded by
$C\mathbb E\|z-z^\star\|^4=O(\eta^2)$, since the fourth
derivatives are bounded. Using \eqref{eq:assumption-iid-raw-second}
for the quadratic term therefore gives
\[
 0=Jm+\eta\mathcal C(\Sigma_\rho)+O(\eta^2).
\]
Multiplication by the fixed matrix $J^{-1}$ proves
\eqref{eq:assumption-iid-refined-mean}, with the required
uniform remainder.
\end{proof}

\section{Proof of localization under uniform block contraction}
\label{app:contractive-localization}

\begin{proposition}[A sufficient localization condition]\label{prop:contractive-localization}
Suppose Assumption~\ref{as:markov} holds, each $G(y,\cdot)$ is globally
$C^2$ with \eqref{eq:cov-regularity}, and there exist $a,d>0$,
$b,c\ge0$, $bc<ad$, such that, uniformly in the noise state $y$ and
all arguments,
\begin{align}
 \langle x-x',H(y,x,\theta)-H(y,x',\theta)\rangle
 &\le-a\|x-x'\|^2,\nonumber\\
 \langle\theta-\theta',F(y,x,\theta)-F(y,x,\theta')\rangle
 &\le-d\|\theta-\theta'\|^2,\nonumber\\
 \|H(y,x,\theta)-H(y,x,\theta')\|
 &\le b\|\theta-\theta'\|,\nonumber\\
 \|F(y,x,\theta)-F(y,x',\theta)\|
 &\le c\|x-x'\|.\label{eq:loc-block-conditions}
\end{align}
Then $\bar G$ has a unique equilibrium and a globally defined $C^2$
fast equilibrium map. For sufficiently small positive $\eta,\rho$,
the joint recursion has a unique invariant law satisfying
Assumptions~\ref{as:localization} and \ref{as:block-localization}, with
$\|\E z-z^\star\|\le C\eta$ uniformly.
\end{proposition}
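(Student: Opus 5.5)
The plan has four steps: the deterministic structure first, then existence and uniqueness of the invariant law by synchronous coupling, then moment localization with a weighted Lyapunov function, and finally the block bounds.

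\textbf{Deterministic structure.} Averaging the four inequalities in \eqref{eq:loc-block-conditions} over $\mu$ shows that $\bar H(\cdot,\theta)$ is $a$-strongly monotone and $\bar F(x,\cdot)$ is $d$-strongly monotone, with cross Lipschitz constants $b$ and $c$.

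For each fixed $\theta$, the map $x\mapsto x+\tau\bar H(x,\theta)$ is a contraction for small $\tau>0$. Its fixed point is a unique $\lambda(\theta)$.

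Monotonicity also gives $\|D_x\bar H\|$-coercivity: $\langle w,D_x\bar H\,w\rangle\le-a\|w\|^2$. Hence $D_x\bar H$ is invertible everywhere. The global implicit function theorem then makes $\lambda$ a $C^2$ map, with $\|D\lambda\|\le b/a$ and $D^2\lambda$ bounded through the bound on $D^2G$.

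For the reduced field $g$, compare $\lambda(\theta)-\lambda(\theta')$ using monotonicity:
\[
\langle\theta-\theta',g(\theta)-g(\theta')\rangle\le-(d-bc/a)\|\theta-\theta'\|^2 .
\]
Since $bc<ad$, this is strictly negative, so $g$ has a unique zero $\theta^\star$. This yields the unique equilibrium $z^\star$ and the Hurwitz property of $A$ and $S$.

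\textbf{Invariant law by synchronous coupling.} Run two copies driven by the same input sequence $Y_k$. Use the weighted squared distance
\[
\Phi=\|x-x'\|^2+\kappa\rho^{-1}\|\theta-\theta'\|^2,
\]
with $\kappa>0$ to be chosen.

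Over one step, the pointwise monotonicity contributes $-2\eta a\|\delta x\|^2-2\eta\kappa d\|\delta\theta\|^2$. The cross terms are at most $2\eta b\|\delta x\|\|\delta\theta\|+2\eta\kappa c\|\delta x\|\|\delta\theta\|$. The quadratic terms are $O(\eta^2)\Phi$, using the global Lipschitz bound $K$.

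Choose $\kappa$ so that the $2\times2$ form
\[
\begin{pmatrix}a&-(b+\kappa c)/2\\-(b+\kappa c)/2&\kappa d\end{pmatrix}
\]
is positive definite, i.e.\ $(b+\kappa c)^2<4a\kappa d$. Taking $\kappa=b/c$ (or any $\kappa$ when $c=0$) reduces this to $bc<ad$. Some care is needed because the $\theta$-increment carries $\rho$ while the weight carries $\rho^{-1}$; the cross term $\langle\delta x,\eta(H-H')\rangle$ then mixes $\|\delta x\|$ with $\|\delta\theta\|$ at rate $\eta$, but $\|\delta\theta\|^2$ is only contracted at rate $\eta\rho$ in the weighted norm. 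So I will take the weight as $\kappa\|\theta-\theta'\|^2/\rho$ and check carefully which terms carry which powers. If the direct form fails, I expect to use the Gronwall-type two-clock estimate instead.

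With a contraction $\E\Phi_{k+1}\le(1-c\eta\rho)\E\Phi_k$, the chain is Wasserstein-contractive. Existence and uniqueness of $\pi_{\eta,\rho}$ then follow by the standard completeness argument on the product with the finite chain $Y$, using a meeting-time coupling of the inputs from Assumption~\ref{as:markov}.

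\textbf{Fourth-moment localization.} Apply the same weighted function around $z^\star$, $\Psi=\|x-\lambda(\theta)\|^2+\kappa\|\theta-\theta^\star\|^2$, written in the manifold coordinates \eqref{eq:sup:report-nonlinear-normal-coordinate}.

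The noise $G(Y,z^\star)$ is not mean zero conditionally on $Y$. I handle this with the Poisson-corrected iterate $\widehat e_k$ from \eqref{eq:sup:report-corrected-error}, so that the forcing becomes a martingale difference plus $O(\eta)$ terms.

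Expanding $\Psi^2$ over one step gives a drift inequality of the form
\[
\E\Psi^2_{k+1}\le(1-c\eta)\E\Psi_k^2+C\eta^2\E\Psi_k+C\eta^3
\]
in the fast block, and the analogous inequality with $\eta\rho$ in the slow block. Here the slow noise is of size $\rho\eta$, so its contribution is $\rho^2\eta^2$. Stationarity then yields $\E\Psi=O(\eta)$ and $\E\Psi^2=O(\eta^2)$ uniformly in $\rho$. From this, \eqref{eq:cov-raw-fourth} follows.

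\textbf{Block bounds and mean.} The same computation, kept separately for $u$ and $v$, gives the block statements of Assumption~\ref{as:block-localization}: $\E\|\widehat u\|^2=O(\eta)$, and $\E\|v\|^2=O(\varepsilon)$ plus the squared-mean part. The mean bound $\|\E z-z^\star\|\le C\eta$ is then Lemma~\ref{lem:mean-O-eta}.

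\textbf{Main obstacle.} I expect the hardest part to be keeping the Markov (non-martingale) noise uniform in $\rho$ in the slow block. The $O(\eta)$ Poisson correction in the $\theta$-coordinate must carry its factor $\rho$, so that dividing by the slow contraction rate $\eta\rho$ does not produce $1/\rho$. This mirrors Step 3 of the proof of Proposition~\ref{prop:covariance}.
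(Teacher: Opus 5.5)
Your deterministic step is fine, and so is the synchronous coupling. With the weight $\kappa/\rho$, the linear-in-$\eta$ terms equal $-2\eta$ times a positive definite form in $(\|\delta x\|,\|\delta\theta\|)$ once $(b+\kappa c)^2<4a\kappa d$. The $O(\eta^2)$ terms are absorbed, and $\|\delta x\|^2+\|\delta\theta\|^2\ge(\rho/\kappa)\Phi$ gives the factor $1-c\eta\rho$, so no fallback is needed. For $c=0$ you need $\kappa>b^2/(4ad)$, and for $b=0$ a small $\kappa>0$, since $\kappa=b/c$ then degenerates. This is the squared-norm counterpart of the paper's weighted max-norm comparison. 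A blockwise drift argument can plausibly also give the raw bound of Assumption~\ref{as:localization}.

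The genuine gap is Assumption~\ref{as:block-localization}, which you dispose of with ``the same computation, kept separately for $u$ and $v$''. That computation does not reach the $\varepsilon$ scale. Besides the slow noise, the slow-block drift contains the fast-to-slow forcing $2\varepsilon\langle v,F(Y,x,\theta)-F(Y,\lambda(\theta),\theta)\rangle$. A drift argument can bound this only by $2\varepsilon c\|v\|\|\widehat u\|$, and at stationarity this yields
\[
 (d-bc/a)\,\E\|v\|^2\le c\,(\E\|v\|^2)^{1/2}(\E\|\widehat u\|^2)^{1/2}+O(\varepsilon),
\]
hence only $\E\|v\|^2=O(\eta)$, not $O(\varepsilon+\eta^2)$. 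If $\|v\|$ were of order $\sqrt\varepsilon$, this term would exceed your slow-noise contribution $\rho^2\eta^2$ by a factor $\rho^{-1/2}$. The $\varepsilon$ scale holds only because the fast fluctuations of $\widehat u$ average out over the slow relaxation time, that is, because $\Cov(u,v)=O(\varepsilon)$. That is itself one of the claims to be proved, and your proposal never addresses it. You also leave open the centered bounds $\E\|v-\E v\|^{2j}\le C\varepsilon^j$, and you only expand up to $\Psi^2$ while $j=3$ is required. The obstacle you single out is actually the easy part: after Poisson correction, the Markov term in the slow block is $O(\varepsilon^2)$.

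The paper gets the missing averaging from orthogonality rather than from a drift inequality. It shows that a compact set $K$ is invariant under every update, so the fields are bounded on the stationary support. It then proves the componentwise comparison with $M=\begin{pmatrix}1-a_0\eta&b\eta\\ c\varepsilon&1-d_0\varepsilon\end{pmatrix}$ and an entrywise two-clock bound on $M^k$ whose fast-to-slow entry is $O(\rho e^{-\kappa\varepsilon k})$. Changing one input $k$ steps in the past moves $x$ by $O(\eta)$ and $\theta$ by $O(\varepsilon)$. It therefore changes $X_0$ by at most $A_k=C(\eta e^{-\kappa\eta k}+\varepsilon e^{-\kappa\varepsilon k})$ and $\Theta_0$ by at most $B_k=C\varepsilon e^{-\kappa\varepsilon k}$. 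The paper writes $X_0-\E X_0$ and $\Theta_0-\E\Theta_0$ as Doob martingale differences over past inputs, with a Doeblin coupling for the Markov dependence. Orthogonality then gives $\sum_kA_k^2\le C\eta$, $\sum_kB_k^2\le C\varepsilon$ and $\sum_kA_kB_k\le C\varepsilon$. These yield the cross-covariance bound, and conditional Hoeffding yields all even centered moments. To stay on your route you would need a substitute, for instance a separate covariance balance proving $\Cov(\widehat u,v)=O(\varepsilon)$, plus higher-moment control of the nonlinear and $y$-dependent remainders. That is substantially more work.

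A smaller point: Lemma~\ref{lem:mean-O-eta} is stated under Assumption~\ref{as:smooth}, which requires $\bar G\in C^3$ near $z^\star$ and is not a hypothesis here. Its proof uses only bounded second derivatives, so rerun it rather than cite it.
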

These conditions verify localization; the remaining assumptions of the
bias theorems remain in force.

\begin{proof}[Proof of Proposition~\ref{prop:contractive-localization}]
All coordinate norms are Euclidean, and matrix norms are the
corresponding operator norms. Constants may depend on the updates,
dimensions, and fixed Markov kernel, but not on sufficiently small
positive $\eta,\rho$. We write $\varepsilon=\rho\eta$ and take
$\eta_0,\rho_0\le1$.

\paragraph{1. Averaged Equilibrium and Deterministic Comparison.}
Averaging preserves \eqref{eq:loc-block-conditions}. For each fixed
$\theta$, global Lipschitz continuity and strong dissipativity make
$x\mapsto x+h\bar H(x,\theta)$ a contraction for sufficiently small
$h>0$. Its unique fixed point defines $\lambda(\theta)$.
For $x=\lambda(\theta)$, $x'=\lambda(\theta')$,
$\delta x=x-x'$, and $\delta\theta=\theta-\theta'$, the equilibrium
identities and the averaged block bounds give
\[
 \begin{aligned}
 0
 &=\langle\delta x,\bar H(x,\theta)-\bar H(x',\theta')\rangle\\
 &\le-a\|\delta x\|^2
      +b\|\delta x\|\,\|\delta\theta\|.
 \end{aligned}
\]
Consequently,
\[
 \|\lambda(\theta)-\lambda(\theta')\|
 \le\frac ba\|\theta-\theta'\|.
\]
Strong dissipativity also makes $D_x\bar H$ everywhere nonsingular.
The implicit function theorem and uniqueness of the fast equilibrium
therefore give a globally defined $C^2$ map $\lambda$.

For the reduced drift $g(\theta)=\bar F(\lambda(\theta),\theta)$,
the same block bounds yield
\[
 \begin{aligned}
 \langle\delta\theta,g(\theta)-g(\theta')\rangle
 &\le-d\|\delta\theta\|^2
       +c\|\delta\theta\|\,
          \|\lambda(\theta)-\lambda(\theta')\|\\
 &\le-\left(d-\frac{cb}{a}\right)\|\delta\theta\|^2.
 \end{aligned}
\]
Thus $g$ is strongly dissipative with constant
$\kappa_g=d-cb/a>0$. If $L_\theta$ bounds
$\|D_\theta\bar F\|$, then
\[
 \|g(\theta)-g(\theta')\|
 \le\left(L_\theta+\frac{cb}{a}\right)\|\theta-\theta'\|.
\]
Writing $L_g=L_\theta+cb/a$, we obtain
\[
 \|\theta+h g(\theta)-\theta'-h g(\theta')\|^2
 \le(1-2\kappa_g h+L_g^2h^2)\|\delta\theta\|^2.
\]
For sufficiently small $h>0$, this is a contraction. Banach's
fixed-point theorem gives a unique zero $\theta^\star$ of $g$,
and $z^\star=(\lambda(\theta^\star),\theta^\star)$ is the unique
equilibrium of $\bar G$.

We next compare two trajectories driven by the same input.
Choose $a_0\in(0,a)$ and $d_0\in(0,d)$ with $bc<a_0d_0$.
If necessary, enlarge $b,c$ slightly so that they are positive
and still satisfy this inequality. Let $L$ be a uniform bound
for the derivatives of both update fields, and write
$\Delta x=x-x'$ and $\Delta\theta=\theta-\theta'$.
For a common noise state $y$, set
$\Delta H_x=H(y,x,\theta)-H(y,x',\theta)$. Then
\[
 \|\Delta x+\eta\Delta H_x\|^2
 \le(1-2a\eta+L^2\eta^2)\|\Delta x\|^2
 \le(1-a_0\eta)^2\|\Delta x\|^2
\]
for sufficiently small $\eta$. Taking square roots and applying
the cross-Lipschitz bound gives
\[
 \|\Delta x_{k+1}\|
 \le(1-a_0\eta)\|\Delta x_k\|
       +b\eta\|\Delta\theta_k\|.
\]
The analogous calculation for the slow update gives
\[
 \|\Delta\theta_{k+1}\|
 \le c\varepsilon\|\Delta x_k\|
       +(1-d_0\varepsilon)\|\Delta\theta_k\|.
\]
Since $\varepsilon\le\eta$, a single sufficiently small $\eta_0$,
independent of $\rho$, ensures both bounds and nonnegative diagonal
coefficients. Hence
\begin{equation}
 \begin{pmatrix}\|\Delta x_{k+1}\|\\
                 \|\Delta\theta_{k+1}\|\end{pmatrix}
 \le M\begin{pmatrix}\|\Delta x_k\|\\
                     \|\Delta\theta_k\|\end{pmatrix},
 \qquad
 M=\begin{pmatrix}
 1-a_0\eta&b\eta\\
 c\varepsilon&1-d_0\varepsilon
 \end{pmatrix}.
 \label{eq:loc-comparison}
\end{equation}
The comparison is componentwise, and $M$ is nonnegative.
Choose $t$ with $c/d_0<t<a_0/b$ and define
\[
 \|(x,\theta)\|_t=\max\{\|x\|,\|\theta\|/t\}.
\]
The corresponding weighted row sums of $M$ are
$1-(a_0-bt)\eta$ and $1-(d_0-c/t)\varepsilon$.
Thus every update map has Lipschitz constant at most
$1-\gamma\varepsilon$, where
$\gamma=\min\{a_0-bt,d_0-c/t\}>0$ is independent of the steps.

\paragraph{2. A Common Invariant Set and the Stationary Law.}
Let
\[
 B_H=\max_\xi\|H(\xi,0,0)\|,
 \qquad B_F=\max_\xi\|F(\xi,0,0)\|,
\]
and choose $R$ large enough that
\[
 (a_0-bt)R\ge B_H,\qquad (d_0t-c)R\ge B_F.
\]
The comparison above shows that the convex compact set
\[
 K=\{(x,\theta):\|x\|\le R,\ \|\theta\|\le tR\}
\]
is invariant under every noise-state update, uniformly in both
step sizes. Starting at zero at time $-N$ along a two-sided
stationary noise sequence and letting $N\to\infty$ gives a limit
$Z_k=(X_k,\Theta_k)\in K$. Contraction ensures convergence;
the limit is measurable with respect to $\sigma(Y_i:i\le k-1)$
and is stationary jointly with $Y_k$.

Any invariant law must be supported on $K$. Comparison with a
trajectory started in $K$ gives
\[
 \operatorname{dist}_t(z_k,K)
 \le(1-\gamma\varepsilon)^k
     \operatorname{dist}_t(z_0,K).
\]
Under an invariant law, the left-hand side has the same distribution
for every $k$, while the right-hand side tends to zero almost surely.
The Doeblin condition allows two copies of the noise to couple in
finite time almost surely. Their iterates contract thereafter,
which proves uniqueness of the joint invariant law.

\paragraph{3. Influence of a Past Input.}
To obtain separate fast and slow moment bounds, we estimate the
effect of changing one past input. Write
\[
 M=I-\eta K_\rho,
 \qquad K_\rho=
 \begin{pmatrix}a_0&-b\\-\rho c&\rho d_0\end{pmatrix}.
\]
Its eigenvalues are
\[
 \ell_\pm=
 \frac{a_0+\rho d_0
 \pm\sqrt{(a_0-\rho d_0)^2+4\rho bc}}{2}.
\]
For sufficiently small $\rho_0$, uniformly over $0<\rho\le\rho_0$,
\[
 \ell_+\asymp1,\quad \ell_-\asymp\rho,\quad
 \ell_+-\ell_-\ge a_0/2,\quad
 \ell_+-a_0=O(\rho).
\]
Decrease $\eta_0$ so that $1-\eta\ell_\pm\in[0,1]$.
The spectral formula
\[
 M^k=
 (1-\eta\ell_+)^k
 \frac{K_\rho-\ell_-I}{\ell_+-\ell_-}
 +(1-\eta\ell_-)^k
 \frac{\ell_+I-K_\rho}{\ell_+-\ell_-}
\]
then gives the entrywise bound
\begin{equation}
 0\le M^k\le C
 \begin{pmatrix}
 e^{-\kappa\eta k}+\rho e^{-\kappa\varepsilon k}
   &e^{-\kappa\varepsilon k}\\
 \rho e^{-\kappa\varepsilon k}
   &e^{-\kappa\varepsilon k}
 \end{pmatrix}
 \label{eq:loc-matrix-powers}
\end{equation}
for some fixed $\kappa>0$.

Since the update fields are bounded on $K$, changing one noise
input changes the next iterate by at most $C(\eta,\varepsilon)$
componentwise. Applying \eqref{eq:loc-matrix-powers} for the
subsequent $k$ updates bounds its effects on $X_0$ and $\Theta_0$
by, respectively,
\begin{equation}
 A_k=C\bigl(\eta e^{-\kappa\eta k}
          +\varepsilon e^{-\kappa\varepsilon k}\bigr),
 \qquad
 B_k=C\varepsilon e^{-\kappa\varepsilon k}.
 \label{eq:loc-influence}
\end{equation}
These bounds hold for arbitrary input sequences because every
update preserves $K$. Telescoping therefore bounds the effect
of several input changes by the sum of their individual bounds.

\paragraph{4. Centered Moments under Markov Dependence.}
We combine the influence bounds with a coupling of the Markov
inputs. The Doeblin condition allows chains started in any two
states to be coupled so that
\begin{equation}
 \mathbb P(Y_\ell\ne Y'_\ell)\le C_Pq^\ell,
 \qquad \ell\ge0,
 \label{eq:loc-noise-coupling}
\end{equation}
for fixed $q\in(0,1)$ and $C_P<\infty$. Couple the endpoints of
successive blocks of length $n_0$ using the common minorizing
measure, and use identical transitions after coupling.
Sampling intermediate paths conditionally on their endpoints
gives \eqref{eq:loc-noise-coupling} at all integer times.

Set $\mathcal F_j=\sigma(Y_i:i\le j)$, $j\le-1$, and define
\[
 D_j^X=\mathbb E[X_0\mid\mathcal F_j]
       -\mathbb E[X_0\mid\mathcal F_{j-1}],
 \qquad
 D_j^\Theta=\mathbb E[\Theta_0\mid\mathcal F_j]
       -\mathbb E[\Theta_0\mid\mathcal F_{j-1}].
\]
Conditionally on $\mathcal F_{j-1}$, $Z_j$ is fixed. Comparing
two possible values of $Y_j$ by coupling their future chains
and telescoping the resulting input changes gives, for $j=-k-1$,
\[
 \|D_{-k-1}^X\|
 \le C_P\sum_{\ell=0}^kq^\ell A_{k-\ell},\qquad
 \|D_{-k-1}^\Theta\|
 \le C_P\sum_{\ell=0}^kq^\ell B_{k-\ell}.
\]
Decrease $\eta_0$ so that $qe^{\kappa\eta_0}<1$.
The convolutions are then bounded by constant multiples of
$A_k$ and $B_k$, uniformly in both steps. Enlarging the constant
in \eqref{eq:loc-influence}, we obtain
\[
 \|D_{-k-1}^X\|\le A_k,\qquad
 \|D_{-k-1}^\Theta\|\le B_k.
\]
Geometric summation gives
\begin{equation}
 \sum_{k\ge0}A_k^2\le C\eta,\qquad
 \sum_{k\ge0}B_k^2\le C\varepsilon,\qquad
 \sum_{k\ge0}A_kB_k\le C\varepsilon.
 \label{eq:loc-influence-sums}
\end{equation}

The remote past tail of the stationary Doeblin chain is trivial.
Reverse martingale convergence and the measurability of $Z_0$
with respect to $\mathcal F_{-1}$ therefore give the $L^2$ expansions
\[
 X_0-\mathbb EX_0=\sum_{j=-\infty}^{-1}D_j^X,
 \qquad
 \Theta_0-\mathbb E\Theta_0
 =\sum_{j=-\infty}^{-1}D_j^\Theta.
\]
For each scalar coordinate, conditional Hoeffding's inequality
applied to finite partial sums gives a sub-Gaussian moment
generating function with variance proxy bounded by $\sum A_k^2$
or $\sum B_k^2$. Passing to the limit and integrating the tail
bound yields, for each fixed integer $j\ge1$,
\begin{equation}
 \mathbb E\|X_0-\mathbb EX_0\|^{2j}\le C_j\eta^j,
 \qquad
 \mathbb E\|\Theta_0-\mathbb E\Theta_0\|^{2j}
 \le C_j\varepsilon^j.
 \label{eq:loc-centered-moments}
\end{equation}
The passage to Euclidean norms changes only dimension-dependent
constants. Orthogonality of martingale increments at distinct times
also gives
\begin{equation}
 \|\operatorname{Cov}(X_0,\Theta_0)\|
 \le\sum_{k\ge0}
 \mathbb E\bigl[\|D_{-k-1}^X\|
                 \|D_{-k-1}^\Theta\|\bigr]
 \le C\varepsilon.
 \label{eq:loc-cross-covariance}
\end{equation}

\paragraph{5. Location of the Stationary Mean.}
We now locate the stationary mean relative to $z^\star$, using
stationarity and the Poisson equation. Let $\bar z=\mathbb EZ_0$ and
$\widetilde G(\xi,z)=G(\xi,z)-\bar G(z)$. The equation
\[
 (I-P)\mathcal V(\cdot,z)=\widetilde G(\cdot,z),
 \qquad \sum_\xi\mu(\xi)\mathcal V(\xi,z)=0
\]
has a solution with bounded first $z$-derivative on $K$, since
the inverse of $I-P$ on the $\mu$-centered subspace is a fixed
finite-dimensional operator. Exogeneity and joint stationarity give
\begin{align*}
 \mathbb E\widetilde G(Y_0,Z_0)
 &=\mathbb E\{\mathcal V(Y_0,Z_0)
                    -\mathcal V(Y_1,Z_0)\}\\
 &=\mathbb E\{\mathcal V(Y_1,Z_1)
                    -\mathcal V(Y_1,Z_0)\}
 =O(\eta).
\end{align*}
The last bound uses convexity of $K$ and
$\|Z_1-Z_0\|\le C\eta$ on $K$.
Stationary increment balance gives $\mathbb EG(Y_0,Z_0)=0$.
Taylor expansion at $\bar z$, bounded second derivatives, and
\eqref{eq:loc-centered-moments} yield
\[
 \|\mathbb E\bar G(Z_0)-\bar G(\bar z)\|
 \le C\mathbb E\|Z_0-\bar z\|^2\le C\eta.
\]
Consequently, $\|\bar G(\bar z)\|\le C\eta$.

For any $w=(x,\theta)$, set $r=\|x-x^\star\|$ and
$s=\|\theta-\theta^\star\|$. The averaged block bounds give
\[
 ar\le bs+\|\bar H(w)\|,\qquad
 ds\le cr+\|\bar F(w)\|.
\]
Since $ad-bc>0$, these inequalities imply
\[
 \|w-z^\star\|\le C\|\bar G(w)\|.
\]
Applying this with $w=\bar z$ proves
\begin{equation}
 \|\mathbb EZ_0-z^\star\|\le C\eta.
 \label{eq:loc-mean-bound}
\end{equation}

\paragraph{6. Localization in the Exact and Tangent Coordinates.}
Combining \eqref{eq:loc-centered-moments} and
\eqref{eq:loc-mean-bound}, for $j=1,2,3$,
\begin{align*}
 \mathbb E\|X_0-x^\star\|^{2j}&\le C_j\eta^j,\\
 \mathbb E\|\Theta_0-\theta^\star\|^{2j}
 &\le C_j(\varepsilon^j+\eta^{2j}).
\end{align*}
For $\widehat u=X_0-\lambda(\Theta_0)$ and
$v=\Theta_0-\theta^\star$, the Lipschitz bound on $\lambda$ gives
\[
 \|\widehat u\|
 \le\|X_0-x^\star\|+(b/a)\|\Theta_0-\theta^\star\|.
\]
Thus $\mathbb E\|\widehat u\|^{2j}\le C_j\eta^j$.
The inequality
\[
 \mathbb E\|W-\mathbb EW\|^p\le2^p\mathbb E\|W\|^p,
 \qquad p\ge1,
\]
gives the centered fast tracking bounds. The centered slow bounds
are already contained in \eqref{eq:loc-centered-moments}.

In tangent coordinates, $u=X_0-x^\star-\Lambda v$, with
$\Lambda=D\lambda(\theta^\star)$. Hence
\begin{align*}
 \operatorname{Cov}(u,v)
 &=\operatorname{Cov}(X_0,\Theta_0)
   -\Lambda\operatorname{Cov}(\Theta_0,\Theta_0),\\
 \|\operatorname{Cov}(u,v)\|&\le C\varepsilon,
\end{align*}
by \eqref{eq:loc-cross-covariance} and the centered slow
second-moment bound. The raw fourth moments and
$\varepsilon\le\eta\le1$ also give
\[
 \mathbb E\|Z_0-z^\star\|^4\le C\eta^2,
\]
which verifies Assumption~\ref{as:localization}.

To control the coordinate remainder, differentiate
$\bar H(\lambda(\theta),\theta)=0$ twice. For
$h_1,h_2\in\mathbb R^{d_\theta}$,
\[
\begin{aligned}
 D^2\lambda(\theta)[h_1,h_2]
 ={}&-[D_x\bar H(\lambda(\theta),\theta)]^{-1}\\
 &\quad\times D^2\bar H(\lambda(\theta),\theta)
 [(D\lambda(\theta)h_1,h_1),
  (D\lambda(\theta)h_2,h_2)].
\end{aligned}
\]
Strong dissipativity gives
$\|[D_x\bar H(\lambda(\theta),\theta)]^{-1}\|\le a^{-1}$,
and $\|D\lambda(\theta)\|\le b/a$. Together with
\eqref{eq:cov-regularity}, these bounds make $D^2\lambda$
globally bounded. Therefore
$\|r_\lambda(v)\|\le C\|v\|^2$, and the raw fourth-moment
bound yields
\[
 \E\|r_\lambda(v)\|\le C\eta,
 \qquad
 \E\|r_\lambda(v)\|^2\le C\eta^2.
\]
The global $C^2$ map supplies the required extension, completing
the verification of Assumption~\ref{as:block-localization}.
\end{proof}

\section{A nonlinear Markov example with an exact bias formula}
\label{sec:sup:exact-markov-example}

A shared two-state Markov input drives fast and slow linear filters,
and a nonlinear readout converts their variance and cross-covariance
into stationary bias. The exact mean contains a nonzero
$\varepsilon^2/\eta$ term and allows a direct comparison of the
extrapolation rules in Section~\ref{sec:rr}.

\subsection{Model and Markov Input}

Let \(\xi_k\in\{-1,+1\}\) be a stationary symmetric Markov chain with
transition matrix, in the state order \((-1,+1)\),
\begin{equation}
 P_q=\begin{pmatrix}
 (1+q)/2&(1-q)/2\\
 (1-q)/2&(1+q)/2
 \end{pmatrix},
 \qquad -1<q<1.
 \label{eq:sup:report-example-transition}
\end{equation}
The invariant distribution is uniform, and
$\mathbb E(\xi_{k+1}\mid\xi_k)=q\xi_k$. Consequently,
\begin{equation}
 \mathbb E\xi_k=0,\qquad
 \mathbb E(\xi_k\xi_{k+h})=q^{|h|}.
 \label{eq:sup:report-example-correlation}
\end{equation}
The case $q=0$ gives independent symmetric signs. The scalar
long-run covariance is
\begin{equation}
 c_q=\sum_{h\in\mathbb Z}\mathbb E(\xi_0\xi_h)
 =1+2\sum_{h=1}^\infty q^h
 =\frac{1+q}{1-q}.
 \label{eq:sup:report-example-cq}
\end{equation}

Consider
\begin{align}
 X_{k+1}&=(1-\eta)X_k+\eta\xi_{k+1},
 \label{eq:sup:report-example-X}\\
 Y_{k+1}&=(1-\varepsilon)Y_k+\varepsilon\xi_{k+1},
 \label{eq:sup:report-example-Y}\\
 Z_{k+1}&=Z_k+\varepsilon\{-Z_k+\phi(X_k,Y_k)\}.
 \label{eq:sup:report-example-Z}
\end{align}
The fast coordinate is $X$, and the slow coordinates are $(Y,Z)$.
The filters $X$ and $Y$ share the same input. We take
$0<\varepsilon\le\eta\le1$ and choose a smooth cutoff
$\chi:\mathbb R\to[0,1]$ equal to one on $[-1,1]$ and zero
outside $(-2,2)$. Define
\begin{equation}
 \phi(x,y)=\chi(x)\chi(y)(x^2+xy).
 \label{eq:sup:report-example-cutoff}
\end{equation}
Thus $\phi$ and all its derivatives are globally bounded.
The stationary filters lie in $[-1,1]^2$, as shown below, so the
cutoff leaves their readout equal to $X^2+XY$.

The averaged drift is
\[
 \bar G(x,y,z)=(-x,-y,-z+\phi(x,y))^\top.
\]
Its unique equilibrium is the origin. The fast equilibrium map is
$\lambda\equiv0$, and the reduced slow drift is $(-y,-z)^\top$.

\subsection{Stationary Representations}

Set
\begin{equation}
 a=1-\eta,\qquad d=1-\varepsilon.
 \label{eq:sup:report-example-ad}
\end{equation}
Backward iteration gives the unique stationary causal filters
\begin{equation}
 X_k=\eta\sum_{i=0}^\infty a^i\xi_{k-i},
 \qquad
 Y_k=\varepsilon\sum_{j=0}^\infty d^j\xi_{k-j}.
 \label{eq:sup:report-example-causal-filters}
\end{equation}
Their weights are nonnegative and sum to one, so
$|X_k|,|Y_k|\le1$ almost surely. Their means are zero by
\eqref{eq:sup:report-example-correlation}. The stationary slow
readout is
\[
 Z_k=\varepsilon\sum_{j\ge0}d^j
                  \phi(X_{k-1-j},Y_{k-1-j}).
\]
Since $|\phi(x,y)|\le2$ on $[-1,1]^2$, this also gives
$|Z_k|\le2$.

\subsection{Exact Second Moments}

Define the correlations with the current driver by
\begin{equation}
 r_X=\mathbb E(X_k\xi_k),
 \qquad
 r_Y=\mathbb E(Y_k\xi_k).
 \label{eq:sup:report-example-rXY}
\end{equation}
The Markov property gives
$\mathbb E(X_{k-1}\xi_k)=qr_X$ and
$\mathbb E(Y_{k-1}\xi_k)=qr_Y$. Hence
$r_X=aqr_X+\eta$ and $r_Y=dqr_Y+\varepsilon$, so
\begin{equation}
 r_X=\frac{\eta}{1-aq},
 \qquad
 r_Y=\frac{\varepsilon}{1-dq}.
 \label{eq:sup:report-example-rXY-solved}
\end{equation}
Squaring the fast recursion and taking stationary expectations yields
\[
 (1-a^2)\mathbb EX^2=2a\eta q r_X+\eta^2.
\]
Substituting \eqref{eq:sup:report-example-rXY-solved} gives
\begin{equation}
 \boxed{
 \mathbb EX^2
 =\frac{\eta^2}{1-a^2}\frac{1+aq}{1-aq}.}
 \label{eq:sup:report-example-var-X}
\end{equation}
The same calculation for the slow filter gives
\begin{equation}
 \boxed{
 \mathbb EY^2
 =\frac{\varepsilon^2}{1-d^2}\frac{1+dq}{1-dq}.}
 \label{eq:sup:report-example-var-Y}
\end{equation}
Multiplying the two filter recursions and taking expectations gives
\begin{align*}
 (1-ad)\mathbb E(XY)
 &=a\varepsilon q r_X+d\eta q r_Y+\eta\varepsilon\\
 &=\eta\varepsilon
 \frac{1-adq^2}{(1-aq)(1-dq)}.
\end{align*}
Therefore
\begin{equation}
 \boxed{
 \mathbb E(XY)
 =\frac{\eta\varepsilon(1-adq^2)}
 {(1-ad)(1-aq)(1-dq)}.}
 \label{eq:sup:report-example-cross}
\end{equation}
For independent observations, $q=0$, this reduces to
\[
 \mathbb E(XY)
 =\frac{\eta\varepsilon}
 {\eta+\varepsilon-\eta\varepsilon}.
\]

\subsection{Stationary Mean}

Stationarity of \eqref{eq:sup:report-example-Z} gives
$\mathbb EZ=\mathbb E\phi(X,Y)$. Since the cutoff is inactive on
the stationary support,
\begin{equation}
 \boxed{
 \mathbb EZ=\mathbb EX^2+\mathbb E(XY).}
 \label{eq:sup:report-example-mean-Z}
\end{equation}
The full mean displacement is $(0,0,\mathbb EZ)$.
The two quadratic terms contribute the fast variance and the
fast--slow cross-covariance, respectively.

\subsection{Fixed-Ratio Bias}

Set $\varepsilon=\rho\eta$ and first let $\eta\to0$ at fixed
$\rho$. The exact fast variance is
\begin{equation}
 \mathbb EX^2
 =\frac{\eta}{2-\eta}
   \frac{1+q-q\eta}{1-q+q\eta}.
 \label{eq:sup:report-example-var-rational}
\end{equation}
Taylor expansion at $\eta=0$ gives
\begin{equation}
 \mathbb EX^2=\frac{c_q}{2}\eta+O(\eta^2).
 \label{eq:sup:report-example-var-expansion}
\end{equation}
Similarly, substituting $a=1-\eta$ and $d=1-\rho\eta$ into
\eqref{eq:sup:report-example-cross} gives
\begin{equation}
 \mathbb E(XY)
 =\rho\eta\,
 \frac{1-q^2+q^2(1+\rho)\eta-q^2\rho\eta^2}
 {(1+\rho-\rho\eta)(1-q+q\eta)(1-q+q\rho\eta)}.
 \label{eq:sup:report-example-cross-rational}
\end{equation}
For fixed $q\in(-1,1)$, the rational factor is smooth near
$(\eta,\rho)=(0,0)$, so
\[
 \mathbb E(XY)
 =c_q\eta\frac{\rho}{1+\rho}+O(\eta^2\rho).
\]
Combining the two moments yields
\begin{equation}
 \mathbb EZ
 =c_q\eta\left\{\frac12+\frac{\rho}{1+\rho}\right\}
 +O(\eta^2),
 \label{eq:sup:report-example-fixed-ratio}
\end{equation}
uniformly for sufficiently small positive $\rho$. Thus the leading
fixed-ratio coefficient is
\begin{equation}
 b_Z(\rho)=c_q\left\{\frac12+\frac{\rho}{1+\rho}\right\}.
 \label{eq:sup:report-example-b-rho}
\end{equation}

\subsection{Singular Expansion and Coefficient Recovery}

Using
\begin{equation}
 \frac{\rho}{1+\rho}
 =\rho-\rho^2+\frac{\rho^3}{1+\rho}
 \label{eq:sup:report-example-rho-identity}
\end{equation}
in \eqref{eq:sup:report-example-fixed-ratio} gives
\begin{equation}
 \mathbb EZ
 =\frac{c_q}{2}\eta+c_q\eta\rho-c_q\eta\rho^2
 +O(\eta\rho^3+\eta^2).
 \label{eq:sup:report-example-rho-series}
\end{equation}
Returning to the two step sizes,
\begin{equation}
 \boxed{
 \mathbb EZ
 =\frac{c_q}{2}\eta+c_q\varepsilon
 -c_q\frac{\varepsilon^2}{\eta}
 +O\!\left(\eta^2+\frac{\varepsilon^3}{\eta^2}\right).}
 \label{eq:sup:report-example-singular-expansion}
\end{equation}
The vector coefficients are therefore
\begin{equation}
 b_0=\begin{pmatrix}0\\0\\c_q/2\end{pmatrix},\qquad
 b_1=\begin{pmatrix}0\\0\\c_q\end{pmatrix},\qquad
 b_2=\begin{pmatrix}0\\0\\-c_q\end{pmatrix}.
 \label{eq:sup:report-example-vector-coefficients}
\end{equation}
In particular, the mixed coefficient is nonzero for every fixed
$q\in(-1,1)$.

Write $b_i^Z$ for the third component of $b_i$.
The coefficients can also be recovered algebraically through
\begin{align}
 \widehat b_0&=\frac{\mathbb EZ}{\eta}
 =b_0^{Z}+O(\rho+\eta),\nonumber\\
 \widehat b_1&=\frac{\mathbb EZ-\eta b_0^{Z}}{\varepsilon}
 =b_1^{Z}+O\!\left(\rho+\frac{\eta}{\rho}\right),\nonumber\\
 \widehat b_2&=
 \frac{\mathbb EZ-\eta b_0^{Z}-\varepsilon b_1^{Z}}
 {\varepsilon^2/\eta}
 =b_2^{Z}+O\!\left(\rho+\frac{\eta}{\rho^2}\right).
 \label{eq:sup:report-example-sequential-recovery}
\end{align}
Along $\rho=\eta^{p-1}$, these bounds guarantee convergence of
all three quantities for $1<p<3/2$. The upper threshold is sharp
when the regular $\eta^2$ coefficient is nonzero.

\subsection{Extrapolation along Power-law Paths}

For fixed $p>1$, setting $\varepsilon=\eta^p$ in
\eqref{eq:sup:report-example-singular-expansion}
gives
\begin{equation}
 \mathbb EZ
 =\frac{c_q}{2}\eta+c_q\eta^p-c_q\eta^{2p-1}
 +O(\eta^{3p-2}+\eta^2).
 \label{eq:sup:report-example-path-expansion}
\end{equation}
For $1<p<3/2$, the mixed term precedes the regular quadratic
remainder. More generally, analyticity of the exact formula in
$(\eta,\rho)$ gives path powers $k+j(p-1)$, with $k\ge1$ and
$j\ge0$. Contributions with the same exponent must be combined.

Applying the extrapolation weights from
Appendix~\ref{sec:sup:singular-path-rr} to the expansion above
gives the following cancellation properties.
For $1<p<3/2$, cancelling the integer powers $\{1,2\}$ leaves
the $\eta^p$ term, whereas cancelling $\{1,p\}$ leaves bias of
order $\eta^{2p-1}$. Cancelling $\{1,p,2p-1\}$ gives the bound
$O(\eta^{\min\{3p-2,2\}})$. This last exponent is attained when
the corresponding combined coefficient is nonzero; additional
cancellation can give a faster rate.

\subsection{Agreement with the General Bias Formula}
\label{subsec:sup:example-consistency}

To apply \eqref{eq:sup:report-fixed-ratio-bias}, reindex the driver
as $s_k=\xi_{k+1}$. The update then has the form $G(s_k,z_k)$,
where, for $z=(x,y,z_3)$,
\[
 G(s,z)=\begin{pmatrix}-x+s\\-y+s\\-z_3+\phi(x,y)\end{pmatrix},
 \qquad
 \bar G(z)=\begin{pmatrix}-x\\-y\\-z_3+\phi(x,y)\end{pmatrix}.
\]
At the origin,
\begin{equation}
 J=-I_3,\qquad
 D_\rho=\operatorname{diag}(1,\rho,\rho).
 \label{eq:sup:report-example-J}
\end{equation}
Let $\ell=(1,1,0)^\top$. The centered update is $\ell s$, and
$P_qs=qs$, so the centered Poisson solution is
\begin{equation}
 \mathcal U(s,z)=\frac{\ell s}{1-q}.
 \label{eq:sup:report-example-Poisson}
\end{equation}
It is independent of $z$, hence
\begin{equation}
 r_\rho=0.
 \label{eq:sup:report-example-r-zero}
\end{equation}
The temporal dependence enters through the long-run covariance
\begin{equation}
 Q=c_q\ell\ell^\top
 =c_q\begin{pmatrix}1&1&0\\1&1&0\\0&0&0\end{pmatrix}.
 \label{eq:sup:report-example-Q}
\end{equation}
The nonzero entries of the Lyapunov solution satisfy
\begin{align*}
 -2(\Sigma_\rho)_{11}+c_q&=0,\\
 -(1+\rho)(\Sigma_\rho)_{12}+\rho c_q&=0,\\
 -2\rho(\Sigma_\rho)_{22}+\rho^2c_q&=0.
\end{align*}
Thus
\begin{equation}
 \Sigma_\rho=c_q
 \begin{pmatrix}
 1/2&\rho/(1+\rho)&0\\
 \rho/(1+\rho)&\rho/2&0\\
 0&0&0
 \end{pmatrix}.
 \label{eq:sup:report-example-Sigma}
\end{equation}
The only nonzero Hessian of the averaged drift at the origin is
\[
 D^2\bar G_3(0)=
 \begin{pmatrix}2&1&0\\1&0&0\\0&0&0\end{pmatrix}.
\]
Its contraction with the covariance gives
\begin{equation}
 \mathcal C(\Sigma_\rho)
 =c_q\begin{pmatrix}0\\0\\
 \dfrac12+\dfrac{\rho}{1+\rho}
 \end{pmatrix}.
 \label{eq:sup:report-example-curvature}
\end{equation}
Since $-J^{-1}=I_3$ and $r_\rho=0$, the general formula reproduces
\eqref{eq:sup:report-example-b-rho}. If the filters instead use
independent centered drivers, their cross-covariance vanishes and
the leading bias comes only from the fast variance.

The mixing and coupling estimates are uniform when $q$ ranges over
a compact subset of $(-1,1)$. Mixing slows as $q$ approaches either
endpoint: $q\uparrow1$ also makes $c_q$ diverge, while
$q\downarrow-1$ approaches periodic alternation. The exact rational
mean formulas remain regular at $q=-1$, although the uniform mixing
conditions used in the general theory no longer hold there.

\subsection{Stationary Localization and Coupling Verification}
\label{app:example-verification}

For this model, $\widehat u=X$, $v=(Y,Z)$, and $\lambda\equiv0$.
We verify the fourth-moment condition in Assumption~\ref{as:localization}
and the additional block conditions in Assumption~\ref{as:block-localization}:
\begin{align}
 \E_{\pi_{\eta,\rho}}\norm{\widehat u}^{2j}&\le C\eta^j,
 &\E_{\pi_{\eta,\rho}}\norm{v}^{2j}&\le C\varepsilon^j+C\eta^{2j},
 \nonumber\\
 \E_{\pi_{\eta,\rho}}\norm{\widehat u-\E\widehat u}^{2j}
 &\le C\eta^j,
 &\E_{\pi_{\eta,\rho}}\norm{v-\E v}^{2j}
 &\le C\varepsilon^j,
 \qquad j=1,2,3,
 \label{eq:sup:assumption-block-moments}
\end{align}
Fix $-1<q<1$ and $0<\varepsilon\le\eta\le\eta_0\le1$.
Constants may depend on $q$, but not on the step sizes.

\paragraph{Independent-Input Representation and Sensitivity.}

Put $r=|q|$ and choose $\sigma\in\{-1,1\}$ with $q=r\sigma$,
taking $\sigma=1$ when $q=0$. Let $W_k=(B_k,\omega_k)$, $k\in\mathbb Z$,
be independent pairs, with $B_k\sim\operatorname{Bernoulli}(r)$
and $\omega_k$ an independent symmetric sign. The recursion
\[
 \xi_k=B_k\sigma\xi_{k-1}+(1-B_k)\omega_k
\]
defines a stationary causal chain: almost surely, a reset $B_k=0$
occurs at a finite time in the past. Its invariant distribution is
uniform and its conditional mean is $q\xi_{k-1}$, so its transition
matrix is $P_q$. The stationary representations above give
$|X_k|,|Y_k|\le1$ and $|Z_k|\le2$.

Replace $W_0$ by an independent copy, keep all other inputs fixed,
and denote the resulting processes by primes. The drivers agree
after their first common reset at a positive time. For
$p\in\{2,4,6\}$, set $g_0=1$ and $g_j=r^{j/p}$ for $j\ge1$.
Then $\|\xi_j-\xi'_j\|_{L^p}\le2g_j$. Write
$\delta^A_{j,p}=\|A_j-A'_j\|_{L^p}$ and let $*$ denote convolution
on the nonnegative integers. With
\[
 a_j=\eta(1-\eta)^j,\qquad d_j=\varepsilon(1-\varepsilon)^j,
\]
the filter representations and Minkowski's inequality give
\[
 \delta^X_{\cdot,p}\le2a*g,\qquad
 \delta^Y_{\cdot,p}\le2d*g.
\]
On the invariant square, $\phi(x,y)=x^2+xy$ and
\[
 |\phi(x,y)-\phi(x',y')|\le3|x-x'|+|y-y'|.
\]
Thus $\delta^\phi\le3\delta^X+\delta^Y$ and
$\delta^Z\le\mathcal S(d*\delta^\phi)$, where
$(\mathcal S h)_0=0$ and $(\mathcal S h)_j=h_{j-1}$ for $j\ge1$.
This shift accounts for the use of $(X_k,Y_k)$ in the update of
$Z_{k+1}$ and preserves every sequence norm used below. Since
\[
 \|a\|_{\ell^1}=\|d\|_{\ell^1}=1,\quad
 \|a\|_{\ell^2}^2=\frac{\eta}{2-\eta},\quad
 \|d\|_{\ell^2}^2=\frac{\varepsilon}{2-\varepsilon},\quad
 \|g\|_{\ell^1}=\frac1{1-r^{1/p}},
\]
Young's convolution inequality yields
\[
 \|\delta^X_{\cdot,p}\|_{\ell^2}\le C_p\sqrt\eta,\qquad
 \|\delta^Y_{\cdot,p}\|_{\ell^2}\le C_p\sqrt\varepsilon,\qquad
 \|\delta^\phi_{\cdot,p}\|_{\ell^1}\le C_p,\qquad
 \|\delta^Z_{\cdot,p}\|_{\ell^2}\le C_p\sqrt\varepsilon.
\]

\paragraph{Stationary Moments and Cross-Covariance.}
Let $\mathcal H_i=\sigma(W_j:j\le i)$ and
$\mathcal P_i A_0=\E(A_0\mid\mathcal H_i)-\E(A_0\mid\mathcal H_{i-1})$.
Each stationary coordinate is bounded and causal, and the independent-input
tail sigma-field is trivial. Consequently,
\[
 A_0-\E A_0=\sum_{i\le0}\mathcal P_i A_0
 \quad\text{in }L^p.
\]
Replacing $W_i$ by an independent copy and using conditional Jensen's
inequality gives $\|\mathcal P_{-j}A_0\|_{L^p}\le\delta^A_{j,p}$.
Orthogonality for $p=2$, and the martingale square-function inequality
followed by Minkowski's inequality for $p=4,6$, imply
\[
 \|A_0-\E A_0\|_{L^p}
 \le C_p\left(\sum_{j\ge0}\|\mathcal P_{-j}A_0\|_{L^p}^2\right)^{1/2}
 \le C_p\|\delta^A_{\cdot,p}\|_{\ell^2}.
\]
The centered $p$th moments are therefore $O(\eta^{p/2})$ for $X$
and $O(\varepsilon^{p/2})$ for $Y,Z$. Moreover, $\E X=\E Y=0$ and
\[
 \E Z=\E X^2+\E(XY)=O(\eta),
\]
by stationarity, Cauchy--Schwarz, and $\varepsilon\le\eta$.
Combining these estimates with
$\E|A|^p\le2^{p-1}\{\E|A-\E A|^p+|\E A|^p\}$ proves
\eqref{eq:sup:assumption-block-moments}.

The sensitivity bounds also give
\[
 \|\delta^X_{\cdot,2}\|_{\ell^1}\le C,\qquad
 \|\delta^Y_{\cdot,2}\|_{\ell^\infty}
 \le2\|d\|_{\ell^\infty}\|g\|_{\ell^1}\le C\varepsilon,\qquad
 \|\delta^Z_{\cdot,2}\|_{\ell^\infty}
 \le\|d\|_{\ell^\infty}\|\delta^\phi_{\cdot,2}\|_{\ell^1}
 \le C\varepsilon.
\]
Orthogonality of projections at distinct times therefore yields, for $A=Y,Z$,
\[
 |\Cov(X,A)|
 \le\sum_{j\ge0}\delta^X_{j,2}\delta^A_{j,2}
 \le\|\delta^X_{\cdot,2}\|_{\ell^1}
      \|\delta^A_{\cdot,2}\|_{\ell^\infty}
 \le C\varepsilon.
\]
Since $\lambda=0$ globally, $u=\widehat u=X$ and $r_\lambda=0$.
This verifies the remaining covariance, extension, and coordinate-remainder
conditions in Assumption~\ref{as:block-localization}. The raw fourth moments give
\[
 \E\|(X,Y,Z)\|^4=O(\eta^2),
\]
verifying Assumption~\ref{as:localization}. The cutoff makes all derivatives
of $\phi$ bounded, and the other drift terms are linear. Hence the updates
satisfy the global derivative bounds, and
Proposition~\ref{prop:covariance} applies to this example.

\paragraph{Coupling from a Prescribed Initial State.}
The cutoff makes $\phi$ globally Lipschitz; let $L_\phi$ be a Lipschitz
constant. Couple a trajectory from any deterministic $(X_0,Y_0,Z_0)$
and initial observed sign $s_0=\xi_1$ to the stationary trajectory.
Use common reset variables $(B_i,\omega_i)$ from time $i=2$ onward.
The initial observed signs may differ, but the drivers agree after
their first common reset. Thus
\[
 \|\xi_i-\xi_i^{\rm st}\|_{L^2}\le2r^{(i-1)/2},\qquad i\ge1,
\]
with the bound at $i=1$ equal to $2$, including when $r=0$.
Put $D_{X,k}=\|X_k-X_k^{\rm st}\|_{L^2}$ and define
$D_{Y,k},D_{Z,k}$ similarly. The filter equations give
\begin{align*}
 D_{X,k}&\le(1-\eta)^kD_{X,0}
       +2\eta\sum_{i=1}^k(1-\eta)^{k-i}r^{(i-1)/2},\\
 D_{Y,k}&\le(1-\varepsilon)^kD_{Y,0}
       +2\varepsilon\sum_{i=1}^k(1-\varepsilon)^{k-i}r^{(i-1)/2}.
\end{align*}
Choose $\eta_0$ so that $1-\eta_0>\sqrt r$. Geometric summation gives
\[
 D_{X,k}\le Ce^{-\eta k}(D_{X,0}+\eta),\qquad
 D_{Y,k}\le Ce^{-\varepsilon k}(D_{Y,0}+\varepsilon).
\]
The Lipschitz bound also gives
\[
 D_{Z,k}\le(1-\varepsilon)^kD_{Z,0}
 +\varepsilon L_\phi\sum_{j=0}^{k-1}(1-\varepsilon)^{k-1-j}
                   (D_{X,j}+D_{Y,j}).
\]
For $0<\rho\le\rho_0<1$, the fast-to-slow convolution is bounded by
$C\rho(e^{-c\eta k}+e^{-c\varepsilon k})$. The slow-to-slow convolution
is bounded by $Ce^{-c\varepsilon k}$ after decreasing $c>0$ to absorb
the factor $k\varepsilon$. Writing $F_k=e^{-c\eta k}$ and
$S_k=e^{-c\varepsilon k}$, we obtain
\begin{align*}
 D_{X,k}&\le CF_kA_u,\\
 D_{Y,k}+D_{Z,k}&\le C\{\rho(F_k+S_k)A_u+S_kA_v\},\\
 A_u&=D_{X,0}+\eta,\qquad
 A_v=D_{Y,0}+D_{Z,0}+\varepsilon.
\end{align*}
The bounded stationary coordinates ensure
$A_u^2+A_v^2\le C(1+\|(X_0,Y_0,Z_0)\|^2)$.
Because $\lambda=0$ globally, these imply the exact-coordinate block
$L^2$ coupling bounds with $V_2(z)=\|z\|^2$.
Cauchy--Schwarz gives Assumption~\ref{as:coupling} with $V(z)=\|z\|$;
both size functions have uniformly bounded stationary expectations.

\section{From the stationary calculation to finite time}
\label{sec:sup:finite-time}

The stationary block scales and mean-square bounds below use
Assumptions~\ref{as:smooth}--\ref{as:taylor} together with the additional
localization condition stated next. Constants are independent of
$0<\eta\le\eta_0$ and $0<\rho\le\rho_0\le1$, with
$\varepsilon=\eta\rho$. Stationary expectations are taken under the
selected joint invariant law $\pi_{\eta,\rho}$.

\begin{assumption}[Additional block localization]
\label{as:block-localization}
For the selected invariant laws in Assumption~\ref{as:localization},
the local map $\lambda$ has a $C^2$ extension to a neighborhood
containing the stationary slow states. In the resulting exact
coordinates $\widehat u=x-\lambda(\theta)$ and
$v=\theta-\theta^\star$, suppose that, for $j=1,2,3$,
\begin{align}
 \E\|\widehat u\|^{2j}&\le C\eta^j,
 &\E\|v\|^{2j}&\le C(\varepsilon^j+\eta^{2j}),\nonumber\\
 \E\|\widehat u-\E\widehat u\|^{2j}&\le C\eta^j,
 &\E\|v-\E v\|^{2j}&\le C\varepsilon^j.
 \label{eq:moments}
\end{align}
In tangent coordinates, $\|\Cov(u,v)\|\le C\varepsilon$.
The coordinate remainder
$r_\lambda(v)=\lambda(\theta^\star+v)-x^\star-\Lambda v$
satisfies
\begin{equation}
 \E\|r_\lambda(v)\|\le C\eta,
 \qquad
 \E\|r_\lambda(v)\|^2\le C\eta^2.
 \label{eq:lambda-remainder-bounds}
\end{equation}
All constants are independent of sufficiently small positive
$\eta,\rho$.
\end{assumption}

Appendices~\ref{app:contractive-localization} and
\ref{app:example-verification} verify this assumption under their
respective conditions. The main covariance and bias expansions do not
require it.

\subsection{Stationary Block Scales}

The exact and tangent tracking errors satisfy
$u=\widehat u+r_\lambda(v)$. By \eqref{eq:moments} and
\eqref{eq:lambda-remainder-bounds},
\[
 \E\|u-\E u\|^2
 \le2\E\|\widehat u-\E\widehat u\|^2
      +2\E\|r_\lambda-\E r_\lambda\|^2
 =O(\eta).
\]
The reverse comparison follows from $\widehat u=u-r_\lambda(v)$,
and the slow coordinate is unchanged. Thus
\begin{align}
 \E\norm{u-\E u}^2&=O(\eta),
 &\E\norm{v-\E v}^2&=O(\varepsilon),
 \label{eq:sup:finite-centered-scales}\\
 \norm{\E(u,v)}&=O(\eta).
 \label{eq:sup:finite-mean-scale}
\end{align}
For the mean estimate, Lemma~\ref{lem:mean-O-eta} gives
$\E(z-z^\star)=O(\eta)$, and $(u,v)=T(z-z^\star)$ with fixed $T$.
The expected-remainder bound also gives $\E\widehat u=O(\eta)$.
If $D^2\lambda$ is bounded on the segments from $\theta^\star$ to
stationary slow states, Taylor's formula gives
$\|r_\lambda(v)\|\le C\|v\|^2$; the second and fourth moments in
\eqref{eq:moments} then imply \eqref{eq:lambda-remainder-bounds}.

Using $\E\|W\|^2=\E\|W-\E W\|^2+\|\E W\|^2$, we obtain
\begin{align}
 \E\norm{u}^2&=O(\eta),
 &\E\norm{v}^2&=O(\varepsilon+\eta^2).
 \label{eq:sup:finite-raw-floors}
\end{align}
The same fast bound holds for $\widehat u$. The slow $\eta^2$ term
comes from squared bias and may dominate the variance when
$\varepsilon\ll\eta^2$.

\subsection{Initialization and Blockwise Mean-Square Bounds}

Let $z_k^{\mathrm{st}}$ be a stationary version coupled to a trajectory
from a prescribed state $z_0$. Assumption~\ref{as:coupling} requires
\begin{equation}
 \E\norm{z_k-z_k^{\mathrm{st}}}
 \le C\{1+V(z_0)\}
       \left(e^{-c\eta k}+e^{-c\varepsilon k}\right),
 \label{eq:sup:finite-two-clock-coupling}
\end{equation}
where $V\ge0$ and
$\sup_{\eta,\rho}\E_{\pi_{\eta,\rho}}V(z)<\infty$.
Expectations include the stationary starting state and any coupling
of differently initialized Markov inputs. Theorem~\ref{thm:tail}
gives the corresponding tail-average initialization bound.

For blockwise mean-square control, assume that the chosen extension of
$\lambda$ is defined at the slow states of both coupled trajectories.
Require, in addition, the following exact-coordinate $L^2$ bounds,
with $F_k=e^{-c\eta k}$ and $S_k=e^{-c\varepsilon k}$:
\begin{align*}
 \|\widehat u_k-\widehat u_k^{\mathrm{st}}\|_{L^2}
 &\le C\{(F_k+\rho S_k)A_u+\rho(F_k+S_k)A_v\},\\
 \|v_k-v_k^{\mathrm{st}}\|_{L^2}
 &\le C\{\rho(F_k+S_k)A_u+(S_k+\rho F_k)A_v\},\\
 A_u^2+A_v^2&\le C\{1+V_2(z_0)\}.
\end{align*}
Here $A_u,A_v\ge0$ bound the corresponding $L^2$ initial
discrepancies, $V_2\ge0$, and
$\sup_{\eta,\rho}\E_{\pi_{\eta,\rho}}V_2(z)<\infty$.
These bounds must include any contribution from different initial input
states; they do not follow from the $L^1$ condition
\eqref{eq:sup:finite-two-clock-coupling}.
Appendix~\ref{app:example-verification} verifies them for the nonlinear
Markov example.

Squaring the block bounds, using $(a+b)^2\le2a^2+2b^2$, and
adjusting the positive decay constant give
\begin{align*}
 \E\|\widehat u_k-\widehat u_k^{\mathrm{st}}\|^2
 &\le C\{1+V_2(z_0)\}
          (e^{-c\eta k}+\rho^2e^{-c\varepsilon k}),\\
 \E\|v_k-v_k^{\mathrm{st}}\|^2
 &\le C\{1+V_2(z_0)\}
          (\rho^2e^{-c\eta k}+e^{-c\varepsilon k}).
\end{align*}
Combining these estimates with \eqref{eq:sup:finite-raw-floors}
and $\|a+b\|^2\le2\|a\|^2+2\|b\|^2$ yields
\begin{align}
 \E\norm{\widehat u_k}^2
 &\le C\eta+C\{1+V_2(z_0)\}
   \left(e^{-c\eta k}+\rho^2e^{-c\varepsilon k}\right),
 \label{eq:sup:finite-mse-fast}\\
 \E\norm{v_k}^2
 &\le C(\varepsilon+\eta^2)+C\{1+V_2(z_0)\}
   \left(\rho^2e^{-c\eta k}+e^{-c\varepsilon k}\right).
 \label{eq:sup:finite-mse-slow}
\end{align}

\section{Application details and additional simulations}
\label{app:experiments}

\subsection{Application: Linear Temporal-Difference Learning with Correction}
\label{subsec:tdc-application}

We use temporal-difference learning with correction (TDC) to approximate
the discounted value function of a fixed Markov reward process. The
state space is $\{0,1,2,3,4\}$. From an interior state, the process
stays put with probability $1/2$ and moves to either neighbor with
probability $1/4$; at a boundary, the missing outward move is replaced
by a self-loop. The invariant distribution is uniform. Set
\[
 \gamma=0.2,\qquad \phi(s)=1+0.05s,\qquad
 R_{k+1}=\mathbf 1\{S_{k+1}=4\},\qquad
 V_\theta(s)=\phi(s)\theta.
\]
For the temporal-difference error
\[
 \delta_k=R_{k+1}+\gamma\phi(S_{k+1})\theta_k
                         -\phi(S_k)\theta_k,
\]
the fast auxiliary variable $w$ and the slow value parameter $\theta$
follow
\begin{equation}
 \begin{aligned}
 w_{k+1}&=w_k+\eta\,[\delta_k-\phi(S_k)w_k]\phi(S_k),\\
 \theta_{k+1}&=\theta_k+\varepsilon\,[\delta_k\phi(S_k)
                  -\gamma\phi(S_{k+1})\phi(S_k)w_k],
 \qquad \varepsilon=\eta\rho.
 \end{aligned}
 \label{eq:tdc-recursion}
\end{equation}
Both updates use the old parameters $(w_k,\theta_k)$ and the same
observed transition $Y_k=(S_k,S_{k+1})$.

\paragraph{Averaged Objective and Equilibrium.}
For a transition $y=(s,s')$, define
\[
 C_y=\phi(s)^2,\quad
 A_y=\phi(s)\{\phi(s)-\gamma\phi(s')\},\quad
 D_y=\gamma\phi(s)\phi(s'),\quad
 b_y=\phi(s)\mathbf 1\{s'=4\}.
\]
Thus $A_y=C_y-D_y$ and, in the coordinate order $z=(w,\theta)$,
\begin{equation}
 G(y,z)=J_yz+b_y\binom{1}{1},\qquad
 J_y=\begin{pmatrix}-C_y&-A_y\\-D_y&-A_y\end{pmatrix}.
 \label{eq:tdc-affine-field}
\end{equation}
Let $\nu(s,s')=P(s,s')/5$ on the 13 admissible transitions, and use
bars for $\nu$-averages. Direct summation gives
\[
 \bar C=1.215,\qquad \bar A=0.9721,\qquad \bar b=0.2375.
\]
The fast equilibrium is
$\lambda(\theta)=(\bar b-\bar A\theta)/\bar C$.
Substituting it into the averaged slow update gives
\[
 g(\theta)=\frac{\bar A}{\bar C}(\bar b-\bar A\theta)
           =-\mathcal L'(\theta),\qquad
 \mathcal L(\theta)=\frac{(\bar b-\bar A\theta)^2}{2\bar C}.
\]
Here $\mathcal L$ is one half of the mean-squared projected Bellman
error for this scalar feature. Its unique minimizer and the joint
equilibrium are
\begin{equation}
 \theta^\star=\frac{\bar b}{\bar A}
       \approx0.2443164284,\qquad z^\star=(0,\theta^\star).
 \label{eq:tdc-equilibrium}
\end{equation}
The fast and reduced slow Jacobians are $-\bar C<0$ and
$S=-\bar A^2/\bar C<0$.

\paragraph{Verification of the Assumptions.}
The transition-pair driver is exogenous, with kernel
\[
 K((i,j),(u,v))=\mathbf 1\{u=j\}P(j,v).
\]
Although this kernel has zeros, it satisfies a finite-step Doeblin
condition. Indeed, $P^4(j,u)\ge1/256$ for every pair of states, so
\[
 K^5((i,j),(u,v))=P^4(j,u)P(u,v)
          \ge\frac{5}{256}\nu(u,v).
\]
The updates are affine with globally bounded derivatives. Since
$1\le\phi(s)\le1.2$, the global block-contraction constants in
Proposition~\ref{prop:contractive-localization} can be chosen as
\[
 a=1,\qquad b=1.2,\qquad c=0.288,\qquad d=0.76,
 \qquad bc=0.3456<ad=0.76.
\]
These follow from $C_y\ge1$, $0<A_y\le1.2$, $D_y\le0.288$,
and $A_y\ge0.76$. Together with the global affine fast equilibrium
and the two negative averaged Jacobians,
Proposition~\ref{prop:contractive-localization}
verifies Assumptions~\ref{as:smooth}--\ref{as:taylor}, including
existence, uniqueness, and uniform fourth-moment localization of the
stationary law for sufficiently small steps.

For the IID control, transitions are drawn independently from the
same distribution $\nu$. This preserves the averaged objective and
equilibrium, while removing serial dependence; these IID pairs are
not required to form a contiguous state trajectory.

\paragraph{Theoretical Bias Coefficients.}
The Poisson equation for the finite-state kernel $K$ determines the
centered corrector $\mathcal U(y,z)$. Because $G$ is affine,
$D^2\bar G=0$, and the leading bias contains only the direct Markov
response:
\[
 r_\rho=\sum_{y,y'}\nu(y)K(y,y')
       D_z\mathcal U(y',z^\star)D_\rho G(y,z^\star),\qquad
 b(\rho)=-J^{-1}r_\rho=b_0+\rho b_1.
\]
For this instance,
$b_0^\theta\approx0.00624921$ and
$b_1^\theta\approx-0.40166550$.

Consequently $b_2=0$ for this application. This example tests the
direct contribution of Markov dependence to stationary bias.
The nonlinear curvature contribution is illustrated by the exact
model in Appendix~\ref{sec:sup:exact-markov-example}.
In the IID control,
stationarity and independence give $\bar G(\mathbb E z)=0$;
therefore the stationary mean equals $z^\star$ exactly throughout
the verified small-step regime.

\paragraph{Optional Analytic Characterization of Stationary Bias.}
The affine structure also permits a separate analytic
characterization of the stationary mean.
This characterization is not needed to compute the empirical
bias in~\eqref{eq:tdc-empirical-bias}.

Write $z_k=(w_k,\theta_k)^\top$ and define
\[
 \xi_y=G(y,z^\star),\qquad
 M_y=I_2+\eta D_\rho J_y,\qquad
 D_\rho=\operatorname{diag}(1,\rho).
\]
The error recursion is
\[
 z_{k+1}-z^\star
 =M_{Y_k}(z_k-z^\star)+\eta D_\rho\xi_{Y_k}.
\]
For each of the 13 admissible transitions $y$, let
\[
 h_y=\mathbb E_{\pi_{\eta,\rho}}
     \bigl[(z_k-z^\star)\mathbf 1\{Y_k=y\}\bigr].
\]
Stationarity and the transition-pair kernel $K$ give
\begin{equation}
 h_{y'}=\sum_y K(y,y')
    \left\{M_yh_y+\eta\nu(y)D_\rho\xi_y\right\}.
 \label{eq:tdc-exact-mean-system}
\end{equation}
These are 26 scalar linear equations.
Their solution determines the stationary bias through
\[
 B^\theta_{\eta,\rho}
 =\begin{pmatrix}0&1\end{pmatrix}\sum_y h_y.
\]
This identity retains the full dependence on the step sizes.
It provides an additional theoretical characterization for
the affine example; the empirical estimates are obtained
from simulated trajectories.

\subsection{Simulation Protocol}
\label{subsec:mc-protocol}

We simulate the TDC model in~\eqref{eq:tdc-recursion} and the
nonlinear model in Appendix~\ref{sec:sup:exact-markov-example}.
All data are synthetic.
The TDC target is $\theta^\star$ in~\eqref{eq:tdc-equilibrium};
the nonlinear model has target $Z^\star=0$.
These targets are specified by the averaged problems.

Each setting uses $R=256$ independent replications.
Within a replication, extrapolation levels share the same
input sequence.
For a given method, let $\widehat\vartheta^{(r)}$ denote its
estimate in replication $r$: either a post-burn time average
or a weighted combination of such averages.
With target $\vartheta^\star$, define
\[
 \overline{\widehat\vartheta}
 =\frac1R\sum_{r=1}^{R}\widehat\vartheta^{(r)}.
\]
We compute the empirical bias, sample standard deviation,
and RMSE as
\begin{align*}
 \widehat b
   &=\overline{\widehat\vartheta}-\vartheta^\star,\\
 \widehat s^2
   &=\frac1{R-1}\sum_{r=1}^{R}
     \bigl(\widehat\vartheta^{(r)}
           -\overline{\widehat\vartheta}\bigr)^2,\\
 \widehat{\mathrm{RMSE}}
   &=\left\{\frac1R\sum_{r=1}^{R}
     \bigl(\widehat\vartheta^{(r)}
           -\vartheta^\star\bigr)^2\right\}^{1/2}.
\end{align*}
All three quantities are computed from the simulated
replication-level estimates.

For TDC, an approximate pointwise 95\% confidence interval
for the bias of the finite-run estimator is
\[
 \widehat b
 \ \pm\ t_{R-1,0.975}\frac{\widehat s}{\sqrt R},
\]
where $t_{R-1,0.975}$ is the corresponding Student-$t$
quantile.
For normalized bias, both the estimate and the interval
endpoints are divided by $\eta$.

Standard-deviation and RMSE intervals, and the nonlinear
model's bias intervals, use the percentile bootstrap with
2,000 resamples of entire replications.
RMSE ratios use paired resampling of methods.
All intervals are pointwise 95\% intervals.
Dependent iterations within a trajectory are not treated
as independent observations.

For a finite burn-in, empirical bias includes any remaining
initialization effect.
The confidence intervals quantify Monte Carlo uncertainty;
they do not account for this systematic transient error.
Stationary-bias interpretation therefore also requires
adequate burn-in.

\paragraph{Extrapolation Rules and Update Budgets.}
The base fast step is $h$, and level $j$ uses
$\eta_j=h2^{-j}$ and $\varepsilon_j=\eta_j^p$, with $p=6/5$.
The methods are ordinary averaging and extrapolation targeting
$\mathcal G=\{1\}$, $\{1,2\}$, or $\{1,p\}$.
For $m=1+|\mathcal G|$ levels, the weights solve
\[
 \sum_{j=0}^{m-1}w_j=1,\qquad
 \sum_{j=0}^{m-1}w_j2^{-j\gamma}=0
 \quad(\gamma\in\mathcal G).
\]
We apply the weights to post-burn tail averages, rather than to
individual terminal iterates. A recursion update advances one
complete state at one level. With total budget $B$, each of $m$
levels receives $N=\lfloor B/m\rfloor$ updates, including its
burn-in; at most $m-1$ updates are unused. Shared random inputs do
not reduce this accounting. Equal update counts are not a claim
of equal wall-clock cost.

\paragraph{Exact-model Runs.}
We use $q\in\{0,0.5\}$ and
$h\in\{0.1,0.05,0.025,0.0125\}$, starting the iterates at zero
and the sign chain in its uniform invariant distribution.
Level $j$ discards $K_j=\lceil20/\varepsilon_j\rceil$ updates.
The fixed-ratio study sets $\rho=0.25$ and uses 240,000 updates
per replication. The path-bias study uses 2,400,000 updates per
level, whereas the separate equal-budget comparison fixes
$B=240,000$ total updates per method. The longer path-bias runs
were chosen after a pilot showed that the smallest corrected bias
was close to Monte Carlo uncertainty; the step sizes, coefficients,
and seeds were unchanged. These long runs are not used as the
equal-budget comparison.

\paragraph{TDC Runs.}
We initialize $w_0=\theta_0=0$ and draw $S_0$ uniformly.
The fixed-ratio study uses $\rho\in\{0.1,0.25\}$ and
$\eta\in\{0.1,0.05,0.025,0.0125\}$, with 256 independent
replications for each Markov and IID transition-pair setting.
Each replication retains $2^{20}$ updates after discarding
$K=\lceil80/\varepsilon\rceil$ updates.
The IID control draws independent pairs from $\nu$; it does not
replace the Markov trajectory by overlapping pairs of IID states.

The TDC budget study uses $\eta_j=h2^{-j}$ and
$\varepsilon_j=\eta_j^p$, with $p=6/5$, $h=0.05$, and total
update budgets $B\in\{2^{17},2^{19},2^{21}\}$. We compare
ordinary averaging with the extrapolation rules targeting
$\mathcal G=\{1\}$, $\{1,2\}$, and $\{1,p\}$ defined above.
For $m$ levels, every level runs for $N=\lfloor B/m\rfloor$
updates and discards the common burn-in
$K=\lceil80/\min_j\varepsilon_j\rceil$.
All levels share a Markov trajectory and average the same
post-burn iteration indices. The total cost is $mN$, including
all $mK$ discarded updates.
The error target is $\theta^\star$ in~\eqref{eq:tdc-equilibrium}.
Bias intervals use the replication-level Student-$t$ construction;
RMSE intervals and ratios use 2,000 bootstrap resamples, paired
across methods for ratios.

As a simulation-based burn-in diagnostic, we compare
$K=64{,}000$ and $K=128{,}000$ at
$\rho=0.1$ and $\eta=0.0125$, keeping the retained length
and replication seeds fixed.
We compute the change in the estimated mean and a paired
confidence interval across replications.
Changing the burn-in also changes the retained stochastic
window, so this comparison is a sensitivity diagnostic
rather than an exact measurement of initialization error.

\subsection{Results}
\label{subsec:empirical-results}

All reported bias estimates are computed from simulated
post-burn averages across 256 independent replications,
subtracting the corresponding equilibrium target.
The simulations do not use a stationary linear-system
solution or an exact stationary-bias formula.
The protocol and uncertainty calculations are described in
Appendix~\ref{subsec:mc-protocol}.

\paragraph{Temporal-Difference Learning.}
Figure~\ref{fig:tdc-diagnostics} reports the empirical
normalized bias under Markov and independent-pair sampling,
together with the bias and standard deviation of the
equal-budget estimators.
Figure~\ref{fig:tdc-rmse} reports the corresponding RMSE
relative to $\theta^\star$.
This comparison accounts jointly for residual bias and
sampling variability when the total number of updates,
including burn-in across all extrapolation levels, is fixed.

\begin{figure}[t]
 \centering
 \includegraphics[width=.72\linewidth]{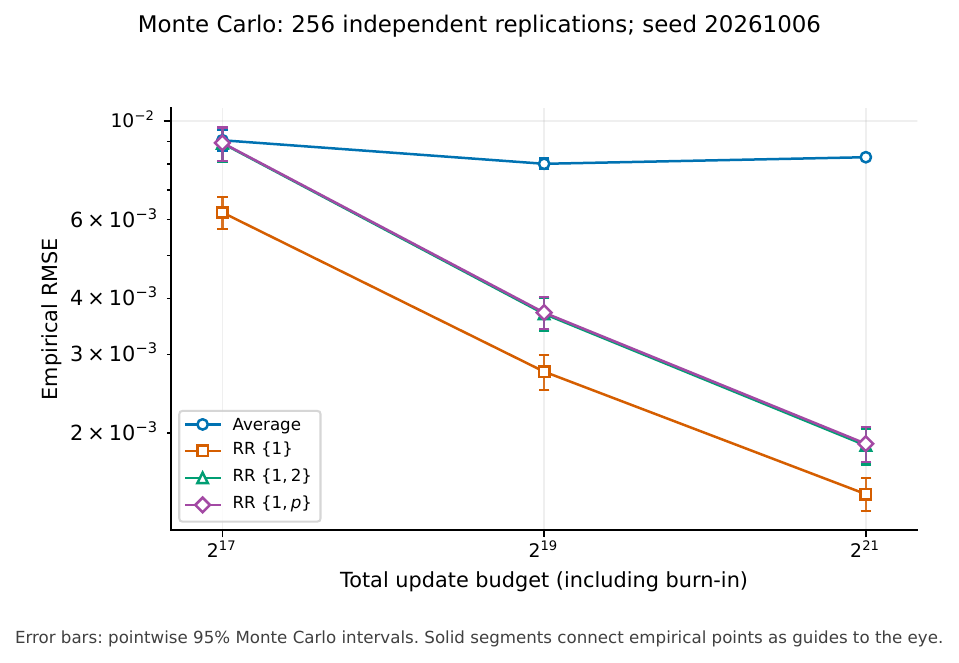}
 \caption{TDC estimation error at equal total update budgets.
 Points show empirical RMSE relative to $\theta^\star$ for
 ordinary averaging and three extrapolation rules.
 The step sizes are $\eta_j=h2^{-j}$ and
 $\varepsilon_j=\eta_j^p$, with $h=0.05$ and $p=6/5$.
 Each budget includes burn-in and all updates across levels.
 Error bars are pointwise 95\% bootstrap intervals based on
 256 independent replications and 2,000 bootstrap resamples.
 Solid segments connect empirical points.
 RR labels specify the cancelled powers.}
 \label{fig:tdc-rmse}
\end{figure}

\paragraph{Nonlinear Toy Example.}
We simulate the model of
Appendix~\ref{sec:sup:exact-markov-example}, whose equilibrium
target is $Z^\star=0$.
Figure~\ref{fig:exact-mc-bias} compares empirical normalized
bias with the theoretical leading coefficient at a fixed
step-size ratio.
Figure~\ref{fig:exact-mc-rr} separates the long-run bias study
from the comparison of RMSE at equal total update budgets.
The latter comparison includes the sampling variability
introduced by combining estimates from several levels.

\begin{figure}[t]
 \centering
 \includegraphics[width=.96\linewidth]{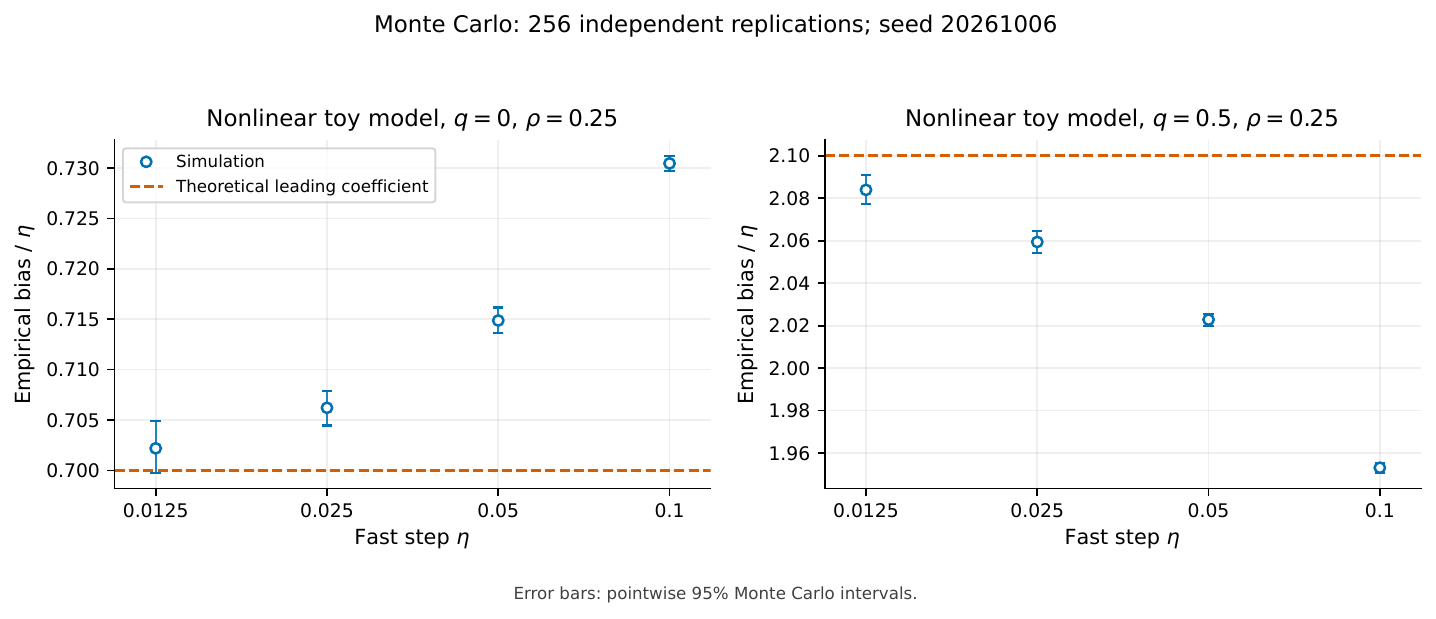}
 \caption{Empirical bias of the nonlinear toy model at
 $\rho=\varepsilon/\eta=0.25$, for $q=0$ and $q=0.5$.
 Points show estimated bias relative to $Z^\star=0$,
 divided by $\eta$.
 Dashed lines show the theoretical leading coefficient
 $c_q\bigl(1/2+\rho/(1+\rho)\bigr)$, where
 $c_q=(1+q)/(1-q)$.
 Error bars are pointwise 95\% bootstrap intervals based on
 256 independent replications and 2,000 bootstrap resamples.
 Each replication uses 240,000 updates, including burn-in.}
 \label{fig:exact-mc-bias}
\end{figure}

\begin{figure}[p]
 \centering
 \includegraphics[width=.90\linewidth]{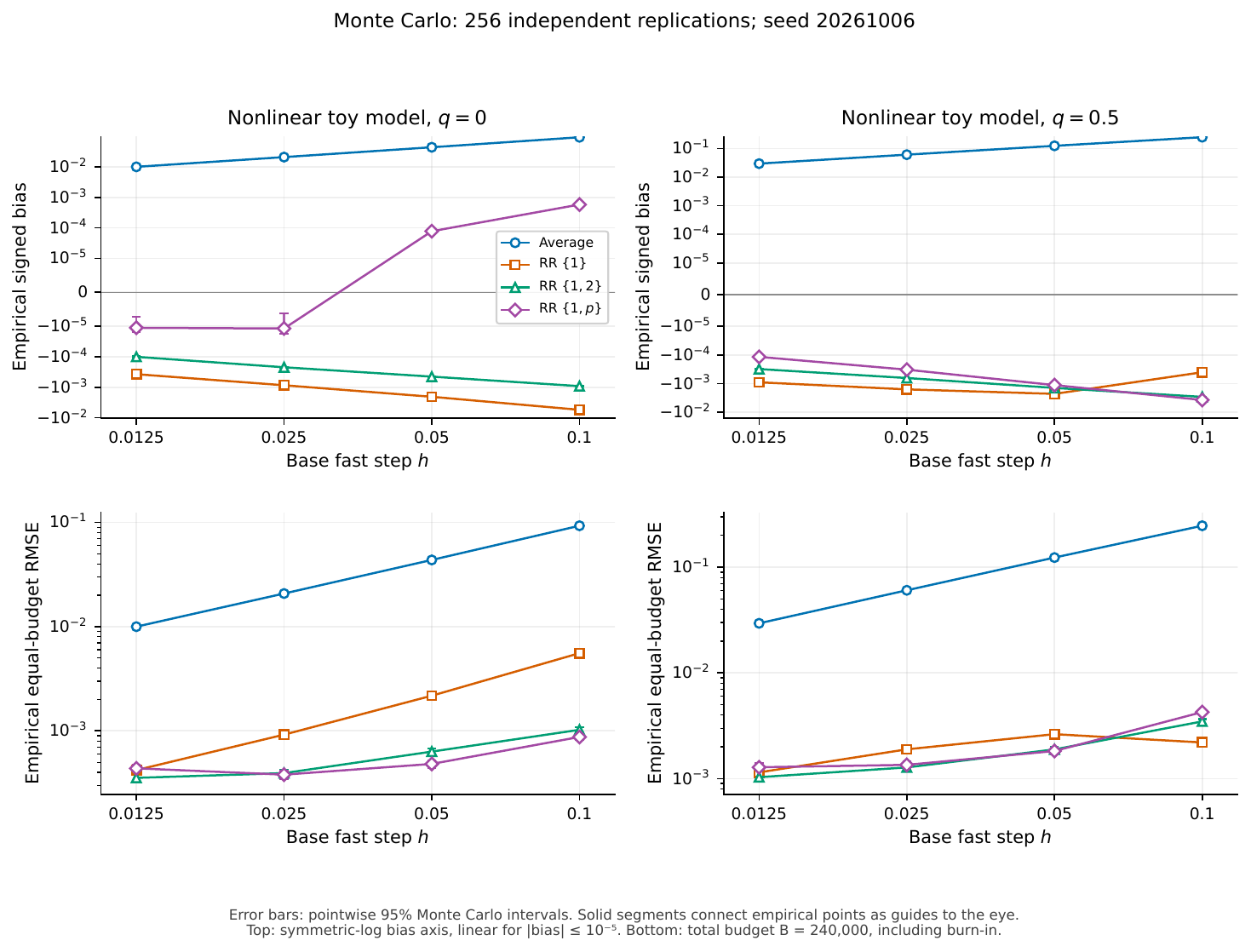}
 \caption{Empirical extrapolation results for the nonlinear
 toy model along $\varepsilon=\eta^{6/5}$.
 Columns correspond to $q=0$ and $q=0.5$.
 Top: estimated signed bias from runs using 2,400,000 updates
 per level, including burn-in. The vertical scale is
 logarithmic away from zero and linear for
 $|\mathrm{bias}|\le 10^{-5}$.
 Bottom: empirical RMSE at a total budget of 240,000 updates
 per method, including burn-in across all levels.
 The top and bottom rows therefore represent distinct
 experiments with different averaging lengths.
 All error bars are pointwise 95\% bootstrap intervals based
 on 256 independent replications and 2,000 bootstrap resamples.
 Solid segments connect empirical points.
 RR labels specify the cancelled powers, with $p=6/5$.}
 \label{fig:exact-mc-rr}
\end{figure}

\section{Additional related work}
\label{sec:sup:literature}
Classical two-timescale stochastic approximation with decreasing steps uses
singular ODEs, averaging, and weak convergence
\citep{borkar1997twotime,mokkadem2006convergence}.  Finite-sample and limit
results cover linear Markov recursions, nonlinear convergence rates, lock-in,
and controlled-Markov central limit theory
\citep{dalal2018finite,kaledin2020finite,doan2021nonlinear,hu2024clt}.  Their
primary target is convergence as the steps vanish over time, rather than the
stationary mean displacement produced by two fixed steps. \citet{kaledin2020finite} give finite-time mean-square bounds for
linear Markov recursions with constant, piecewise-constant, and
diminishing step sizes. Our distinction is the nonlinear stationary
mean-bias coefficient expansion, rather than constant-step analysis alone.

The closest constant-step comparison is the linear TTSA analysis of
\citet{kwon2025constant}, which treats stationarity, finite-time behavior,
mean and covariance, and joint extrapolation.  Linear tracking error can
already contain a scale such as $\varepsilon^2/\eta$.  We address the
nonlinear question of how stationary covariance moves the invariant mean.  Expanding the covariance response in
$\rho$ produces the $\varepsilon^2/\eta$ correction, and curvature converts it
into mean bias.
Fast--slow cross-covariance is one source, but not the only possible source.

Linear recursions with centered additive noise often have invariant mean
exactly at the ODE equilibrium.   A quadratic readout provides a simple mechanism for converting singular covariance into visible mean bias.

\end{document}

%% file: macros.tex
\newcommand{\R}{\mathbb{R}}
\newcommand{\E}{\mathbb{E}}
\newcommand{\Pp}{\mathbb{P}}

\newcommand{\diag}{\operatorname{diag}}
\newcommand{\Tr}{\operatorname{Tr}}
\newcommand{\Cov}{\operatorname{Cov}}

\newcommand{\norm}[1]{\left\lVert#1\right\rVert}